\documentclass[lettersize,journal]{IEEEtran}
\usepackage{amsmath,amsfonts}
\usepackage{algorithmic}
\usepackage{algorithm}
\usepackage{array}
\usepackage{textcomp}
\usepackage{stfloats}
\usepackage{url}
\usepackage{verbatim}
\usepackage{graphicx}
\usepackage{cite}
\usepackage{multirow}
\usepackage{booktabs}       
\usepackage[table]{xcolor}
\usepackage{xcolor}         
\usepackage{tabularx}
\newcolumntype{Y}{>{\centering\arraybackslash}X}
\usepackage{subfigure}
\usepackage{subcaption}
\usepackage{graphicx}
\usepackage{hyperref}

\newtheorem{theorem}{\textbf{Theorem}}
\newtheorem{discussion}{\textbf{Discussion}}

\newtheorem{assumption}{\textbf{Assumption}}
\newtheorem{challenge}{\textbf{Challenge}}
\newtheorem{proof}{\textbf{Proof}}
\newtheorem{proposition}{\textbf{Proposition}}
\newtheorem{corollary}{Corollary}

\begin{document}

\title{FreKoo++: Learning Continuous Spectral Dynamics for Temporal Domain Generalization}

\author{En Yu, Xiaoyu Yang, Wei Duan, Guangquan Zhang, Jie Lu~\IEEEmembership{IEEE Fellow}

\thanks{En Yu, Xiaoyu Yang, Wei Duan, Guangquan Zhang and Jie Lu are with the Australian Artificial Intelligence Institute (AAII), Faculty of Engineering and Information Technology, University of Technology Sydney (UTS), Australia (e-mail: en.yu-1@uts.edu.au;  xiaoyu.yang-3@student.uts.edu.au; wei.duan@uts.edu.au; guangquan.zhang@uts.edu.au; jie.lu@uts.edu.au.)}

}

\markboth{Journal of \LaTeX\ Class Files,~Vol.~14, No.~8, August~2021}%
{Shell \MakeLowercase{\textit{et al.}}: A Sample Article Using IEEEtran.cls for IEEE Journals}


\maketitle

\begin{abstract}
Temporal Domain Generalization (TDG) aims to learn from historical domains and generalize to unseen future distributions under concept drift. Nevertheless, prevailing TDG methods struggle with complex real-world streaming scenarios involving both multi-scale drift patterns (e.g., long-term periodicity intertwined with short-term incremental changes) and local uncertainties, especially in continuous settings where observations arrive irregularly. To address this limitation, we propose FreKoo++, a novel continuous spectral-dynamical framework that pioneers the unification of continuous Koopman modal dynamics with adaptive spectral disentanglement. Specifically, FreKoo++ maps source-domain parameters into a compact latent space, modeling their evolution as a superposition of learnable continuous modes where complex eigenvalues jointly encode oscillatory frequency and temporal growth or decay. This formulation naturally accommodates irregular timestamps and supports arbitrary horizon extrapolation without rigid discrete stepping. Furthermore, we propose a new adaptive soft spectral weighting mechanism backed by stability and spectral regularization, which automatically isolates persistent dominant dynamics from transient noise without relying on manual frequency thresholds. We derive modal approximation and generalization bounds that characterize how amplitude and eigenvalue estimation errors propagate with the prediction horizon. Extensive experiments on both discrete and continuous TDG benchmarks demonstrate that FreKoo++ achieves state-of-the-art performance under complex multi-scale drifts and irregular sampling.
\end{abstract}

\begin{IEEEkeywords}
Domain Generalization, Concept Drift, Dynamic modeling, Frequency Analysis.
\end{IEEEkeywords}

\section{Introduction}
\label{sec:introduction}
\IEEEPARstart{M}{odern} machine learning models face significant challenges in dynamic environments where data distributions evolve over time~\cite{han2024model,11359344, yu2026drift}. Unlike static settings that assume an IID relationship between training and test data, real-world scenarios often involve continuously generated data streams (e.g., user activity logs, sensor readings, financial transactions) exhibiting temporal dependencies and non-stationarity due to factors like shifting user behavior, environmental variations, or system changes~\cite{wang2023koopman,yang2026turning,giaretta2025supervised}. These temporal dynamics lead to distributional shifts over time, known as \emph{concept drift}, which invalidates the alignment between historical and future out-of-distribution (OOD) data~\cite{lu2018learning,xu2025drift2matrix,wang2024distributionally}. Consequently, models trained on past data frequently fail to generalize to future instances, leading to performance degradation and reduced reliability. This has spurred increasing interest in \emph{Temporal Domain Generalization (TDG)}, which seeks to learn from chronologically ordered source domains and generalize to unseen future distributions under evolving temporal shifts~\cite{nasery2021training}.

\begin{figure}[tbp]
    \centering
    \includegraphics[width=\linewidth]{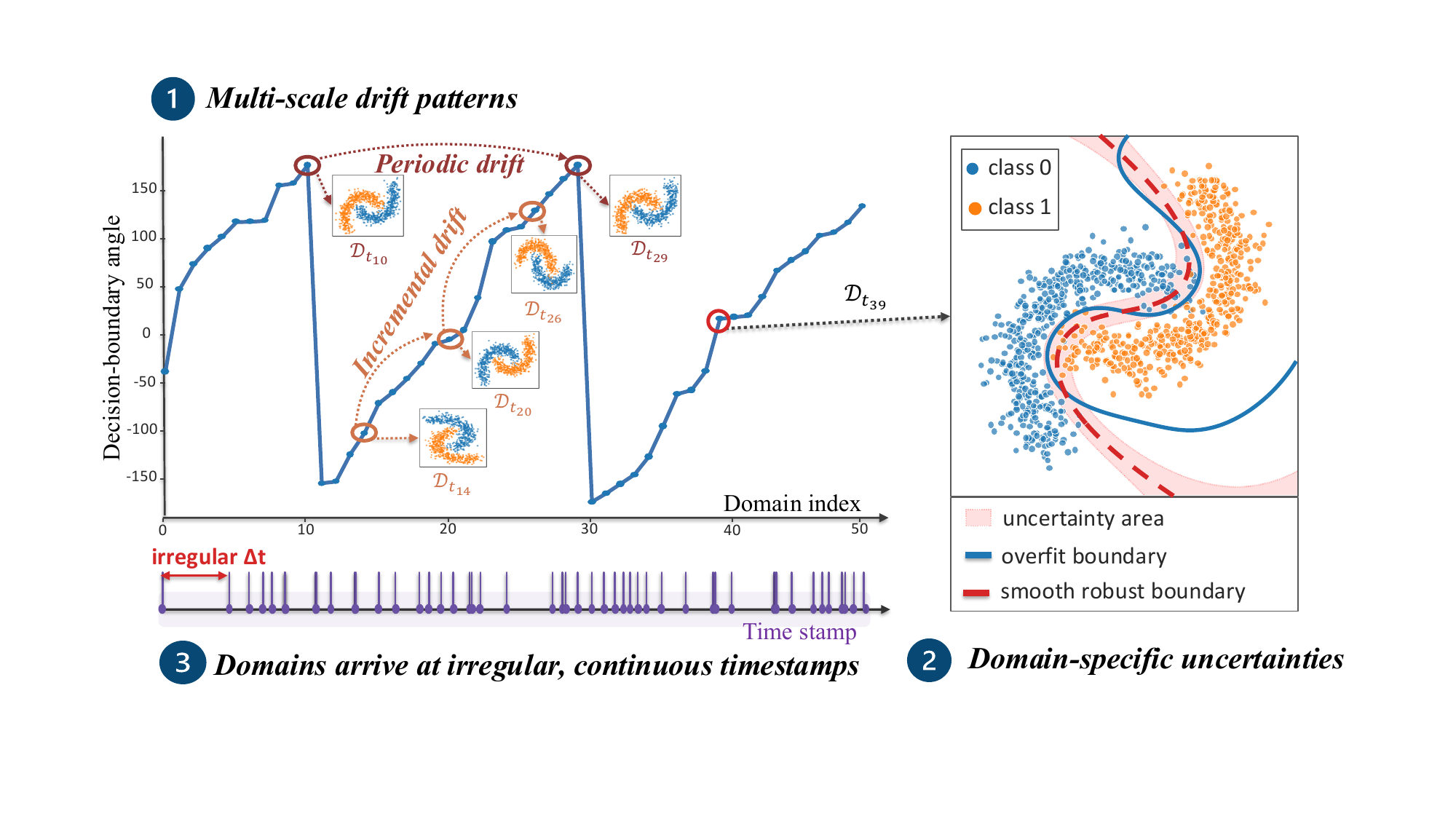} 
    \caption{Illustration of the three TDG challenges on a 2-Moons stream. The underlying distribution exhibits multi-scale periodicity, while empirical parameter estimates are corrupted by domain-specific uncertainties. Additionally, the irregularly sampled observations violate the time-invariant discrete transition assumed by standard sequence models.}
    \label{figure:challenges}
\end{figure}

Recent progress in TDG generally follows two temporal modeling paradigms: \textit{discrete} and \textit{continuous} approaches. \textit{Discrete TDG (DTDG)} methods operate under the rigid assumption of uniformly spaced temporal grids, enforcing step-wise transitions or generating synthetic OOD samples to model temporal evolution~\cite{nasery2021training,zeng2023foresee,bai2023temporal,chang2023coda}. 
However, in real-world applications such as social media streams and clinical healthcare, observations unfold asynchronously and unpredictably~\cite{yang2025walking}. This causes temporal domains to be irregularly distributed over continuous time~\cite{cai2024continuous}. Binding dynamics to fixed discrete steps prevents discrete methods from characterizing the continuous dynamics of irregular time-distributed domains. 
To solve this, \textit{continuous TDG (CTDG)} approaches have recently emerged, leveraging continuous domain indices or neural ODEs to enable prediction at arbitrary timestamps~\cite{cai2024continuous,ortiz2019cdot,qin2023evolving}. 

Nevertheless, most methods implicitly assume that concept drift is primarily incremental or focus on ensuring local smoothness between adjacent domains, which causes two challenges as visualized in Fig.~\ref{figure:challenges}: \emph{1) Difficulty in multi-scale drift patterns}: real-world concept drift frequently exhibits long-term periodicity (e.g., seasonality, weekly/daily user patterns, economic cycles), not just smoothly incremental changes~\cite{yu2024online,jiao2022dynamic}. Current approaches struggle to capture these recurring patterns spanning longer time horizons~\cite{bai2023temporal}. Their effectiveness often diminishes, particularly near phase transitions of periodic cycles where the inability to anticipate pattern recurrence hinders generalization. \emph{2) Vulnerability to Overfitting Domain-Specific Uncertainties}: current TDG methods typically segment continuous data streams into discrete temporal domains. However, the complex drift patterns inherent in real-world streams make it difficult to guarantee that data within each domain remains IID. Consequently, localized uncertainties and non-IID characteristics within domains can complicate optimization and increase the risk of overfitting to domain-specific artifacts, ultimately undermining stable cross-temporal generalization~\cite{niu2023towards}.

These challenges underscore the necessity for principled frameworks that can capture persistent multi-scale dynamics while filtering local uncertainties. Since model parameters optimized on successive temporal domains provide a compact proxy for distributional evolution, we model parameter trajectories from a spectral perspective~\cite{yu2025learning,11300955,cai2024continuous}. In the spectral domain, long-term trends and periodic components are typically concentrated in low-frequency structures, whereas transient domain-specific uncertainties often appear as high-frequency variations~\cite{ye2024frequency,wang2025long,liu2023koopa,liu2023adaptive}. This separation offers a useful inductive bias for isolating predictable temporal dynamics from disruptive local fluctuations. Building upon this intuition, our preliminary work \textbf{FreKoo}~\cite{yu2025learning} integrated Discrete Fourier Transform (DFT) with Koopman operator theory for discrete TDG. It disentangled parameter trajectories into distinct spectral bands and leveraged Koopman dynamics to extrapolate low-frequency dominant patterns while regularizing high-frequency uncertainties. However, FreKoo is strictly bound to uniform temporal grids and relies on manually pre-defined, non-adaptive frequency thresholds. This rigid design prevents dynamic spectral disentanglement and renders it fundamentally incompatible with continuous-time data streams where observations arrive asynchronously.

In this paper, we propose \textbf{FreKoo++}, a unified framework for continuous temporal domain generalization. FreKoo++ is motivated by the observation that the solution of a continuous-time linear dynamical system admits an exponential form that can be interpreted as a spectral decomposition in the Laplace domain. Specifically, each eigenvalue of the Koopman operator encodes both the growth/decay rate and oscillatory frequency of a latent dynamical mode. This connection enables us to directly parameterize temporal evolution through learnable Koopman eigenvalues, bypassing the need for discrete Fourier transforms and naturally supporting irregularly sampled observations. Concretely, FreKoo++ is implemented as a continuous-time parameterized module that learns Koopman eigenvalues from irregularly sampled parameter trajectories. This formulation supports prediction at arbitrary future timestamps without relying on discrete time stepping. Furthermore, leveraging the continuous spectral representation, we introduce a learnable soft spectral weighting mechanism together with a spectral energy regularization strategy, which adaptively emphasizes dominant low-frequency structures while suppressing high-frequency noise. This design removes the need for manually defined frequency thresholds and allows end-to-end optimization. To provide theoretical grounding, we analyze the approximation behavior of the proposed continuous-time formulation, bounding its modal approximation error and deriving a generalization bound whose growth is controlled by the learned spectrum.

Our main contributions are summarized as follows:
\begin{itemize}
    \item We propose \textbf{FreKoo++}, a pioneering continuous-time TDG framework that shifts the paradigm of parameter evolution from discrete-step transitions to learnable Koopman-inspired modal dynamics. By encoding frequency and temporal growth/decay into continuous eigenvalues, FreKoo++ naturally accommodates irregularly sampled source domains and arbitrary-horizon prediction.
    
    \item We introduce an adaptive spectral disentanglement mechanism with spectral and stability regularization, which automatically disentangles persistent dynamics from transient fluctuations without requiring manual frequency selection.

    \item We provide comprehensive theoretical and empirical validation for FreKoo++. Our theoretical analysis establishes rigorous approximation and generalization guarantees, while extensive experiments across diverse TDG benchmarks demonstrate its superior performance and robust extrapolation capabilities.
\end{itemize}

\section{Related Work}
\label{sec:relatedworks}
\subsection{Temporal Domain Generalization}

\subsubsection{Discrete Temporal Domain Generalization (DTDG)}
DTDG focuses on learning from chronologically ordered source domains to generalize to unseen future distributions~\cite{nasery2021training}. The core objective is to effectively leverage temporal evolutionary cues without accessing target domain data. Existing approaches fall broadly into \emph{data driven} and \emph{model centric} paradigms. \emph{Data driven} methods enhance temporal robustness by synthesizing temporally shifted distributions or enforcing invariant features. For instance, CODA constructs concept drift simulators to generate future-like samples~\cite{chang2023coda}, Foresee synthesizes out-of-distribution instances for non-stationary environments~\cite{zeng2023foresee}, and recent methods model instance-level evolution or learn temporally invariant representations to improve robustness to future shifts~\cite{jin2024temporal,zeng2024generalizing,liang2026deepboots}. 
\emph{Model centric} approaches instead integrate temporal dynamics directly into optimization objectives or model parameters. Gradient interpolation regularization encourages decision boundaries to evolve smoothly over time~\cite{nasery2021training}, DRAIN forecasts future neural network parameters via drift-aware recurrent architectures~\cite{bai2023temporal}, and sequential autoencoders capture evolving latent domain structures~\cite{qin2022generalizing,xu2025drift2matrix}. Despite these advances, most DTDG methods implicitly rely on uniformly spaced temporal grids and assume smooth incremental shifts. 

\subsubsection{Continuous Temporal Domain Generalization (CTDG)}
Continuous temporal learning relaxes the restrictive assumption of rigid, uniformly spaced temporal grids. Early continuous domain adaptation frameworks treat time as a continuous domain index to learn smoothly transferable representations across evolving distributions~\cite{ortiz2019cdot,wang2020continuously,he2025learning}. More recently, Continuous Temporal Domain Generalization (CTDG) has emerged to address future-domain extrapolation under continuous, irregular observation timestamps~\cite{cai2024continuous,cai2025continuous}. By supporting asynchronous observation arrivals and arbitrary-horizon forecasting, CTDG represents a critical paradigm shift toward realistic streaming environments.

However, existing CTDG methods still exhibit two fundamental limitations. First, prevailing formulations rely heavily on monotonic or locally smooth evolution assumptions, rendering them inadequate for complex, multi-scale, or recurring concept drifts. Second, methods driven by standard continuous-time dynamics—such as Neural ODEs or continuous-time recurrent units—typically model parameter or feature transitions directly in the time domain~\cite{yang2025walking,luo2025theoretical,hoover2026koopman}. As a result, they lack an explicit mechanism to disentangle persistent macro-trends from transient domain-specific fluctuations. This research overcomes these bottlenecks by establishing a continuous Koopman spectral representation, where complex eigenvalues explicitly quantify frequency and temporal persistence, enabling adaptive spectral disentanglement under irregular temporal sampling.

\subsection{Concept Drift}
Concept drift signifies a change in the underlying data distribution over time, formally occurring between time $t$ and $t+1$ if the joint distribution $P_{t+1}(X, y) \neq P_t(X, y)$. This non-stationarity poses a fundamental challenge, requiring models to adapt dynamically to maintain predictive performance and reliability~\cite{lu2018learning,li2024distribution,yu2024online,lu2025early}. Prior research treats it as unpredictable and often employing detect-then-adapt strategies~\cite{yang2025adapting,yu2026drift}. In contrast, the Temporal Domain Generalization (TDG) setting frequently focuses on leveraging more predictable evolutionary dynamics for proactive generalization~\cite{bai2023temporal,yu2026generalized}. However, existing TDG methods often oversimplify these dynamics, primarily modeling smooth, incremental drifts while struggling to capture complex long-range temporal structures like periodicity which are prevalent in many real-world data streams yet demand different approaches than simple monotonic progression.

Furthermore, current TDG involves segmenting the continuous data stream into temporal domains $\{D_1, D_2, \dots, D_T\}$, assuming concept drift occurs primarily between these sequential domains~\cite{nasery2021training}. This often struggles to perfectly align with real-world data streams, potentially violating the implicit assumption that data within each domain $D_t$ is Independent and Identically Distributed (IID). Consequently, individual domains may harbor internal non-IID structures or localized noise patterns~\cite{peng2025distributional,wu2025scot}. The presence of intra-domain complexities leads to sensitivity to domain-specific noise and artifacts, hindering robust generalization, particularly for TDG models designed predominantly to address shifts occurring only at the domain boundaries. Effectively handling both the diverse inter-domain evolutionary dynamics (including periodicity) and these intra-domain data characteristics is therefore crucial for robust temporal generalization.

\subsection{Frequency Learning and Koopman Dynamics}
Frequency-domain learning has become an effective tool for modeling non-stationary temporal data. By transforming signals from the time domain to the spectral domain, frequency-aware methods can capture periodicity, seasonality, and multi-scale temporal dependencies that may be difficult to identify through local time-domain statistics. For example, frequency-enhanced architectures have been used in long-term time-series forecasting~\cite{zhou2022fedformer,yi2024filternet}, adaptive time-frequency modeling~\cite{ye2024atfnet}, and frequency normalization for non-stationary forecasting~\cite{ye2024frequency}. Classical frequency analysis has also been widely used to detect periodic structures in temporal signals~\cite{cutler2000robust,hurley2022radio,wang2021enhanced}. These studies suggest that spectral representations provide a useful inductive bias for separating stable temporal patterns from high-frequency disturbances.

Koopman theory provides another principled view of temporal dynamics by representing nonlinear evolution through a linear operator in an appropriate lifted space~\cite{koopman1931hamiltonian,brunton2021modern}. Recent machine learning methods have combined Koopman operators with neural representations for forecasting dynamical systems and non-stationary time series~\cite{wang2023koopman,liu2023koopa}. In temporal domain generalization, Koopman-based methods have also been explored to model evolving domains~\cite{zeng2024generalizing}. Our preliminary FreKoo framework first applies DFT-based
decomposition to uniformly sampled parameter trajectories and then extrapolates selected components through discrete Koopman dynamics~\cite{yu2025learning}. FreKoo++ instead directly learns a continuous modal representation, enabling spectral allocation and arbitrary-time extrapolation within a unified latent space.

\section{Preliminary}
\label{sec:preliminary}

\subsection{Temporal Domain Generalization}
Given a chronologically ordered sequence of temporal domains extracted from historical data streams, Temporal Domain Generalization (TDG) aims to train a model that generalizes to unseen future data distributions. Let $\mathcal{T} = \{t_1, t_2, \dots, t_T\}$ denote the set of historical timestamps, where $t_i \in \mathbb{R}^+$ and $t_1 < t_2 < \dots < t_T$. At each timestamp $t_i$, we observe a source domain $\mathcal{D}_{t_i} = \{(\mathbf{x}_{t_i}^{(j)}, y_{t_i}^{(j)})\}_{j=1}^{N_i}$, drawn from a time-dependent joint distribution $P_{t_i}(X,Y)$ over a common input space $\mathcal{X}$ and label space $\mathcal{Y}$. Due to \textit{concept drift}, $P_{t_i}(X,Y)$ may differ from $P_{t_j}(X,Y)$ for $i \neq j$. Depending on the temporal structure of the observed domains, TDG can be categorized into two paradigms:

\noindent
\textbf{Discrete TDG (DTDG)} assumes uniformly spaced timestamps, i.e., $\Delta t_i = t_{i+1} - t_i \equiv C$. Given historical source domains $\{\mathcal{D}_{t_i}\}_{i=1}^{T}$, the objective is to learn a model $g(\cdot;\theta): \mathcal{X} \rightarrow \mathcal{Y}$ that generalizes to the next unseen target domain $\mathcal{D}_{t_{T+1}} \sim P_{t_{T+1}}(X,Y)$ without accessing $\mathcal{D}_{t_{T+1}}$ during training~\cite{nasery2021training}.

\noindent
\textbf{Continuous TDG (CTDG)} relaxes the uniform-spacing assumption and allows $\Delta t_i$ to vary across domains. The objective is to learn a temporal prediction mechanism that generates model parameters $\theta(t_s)$ for an arbitrary future timestamp $t_s > t_T$, such that $g(\cdot;\theta(t_s))$ generalizes to the unseen target domain $\mathcal{D}_{t_s} \sim P_{t_s}(X,Y)$ without access to future-domain data during training~\cite{cai2024continuous}.

\subsection{Challenges} 
To characterize temporal distribution shifts from the perspective of model evolution, we consider the trajectory of optimal parameters induced by the time-varying distribution $P_t(X,Y)$. Specifically, let $\theta^*(t) = \arg\min_\theta \mathbb{E}_{P_t} [\ell(g(X; \theta), Y)]$ denote the optimal parameter configuration at time $t$. In practice, this underlying trajectory cannot be directly observed. Instead, at each historical timestamp $t_i \in \mathcal{T}$, we obtain an empirical estimate $\bar{\theta}_{t_i}$ by optimizing the model on the finite temporal domain $\mathcal{D}_{t_i}$. Due to finite samples, intra-domain non-stationarity, and domain-specific variations, these empirical estimates inevitably deviate from the underlying trajectory. We model this observation process as, 
\begin{equation}
    \bar{\theta}_{t_i} = \theta^*(t_i) + \epsilon(t_i),
    \label{eq:observation_model}
\end{equation}
where $\epsilon(t_i)$ summarizes estimation error and local domain-specific fluctuations. Based on this parameter-space formulation, continuous temporal domain generalization confronts three fundamental challenges:

\begin{challenge}[Multi-scale drift patterns in parameter space]
\label{challenge:1}
Prevailing TDG methods often model temporal evolution through incremental changes or local smoothness between adjacent parameter configurations. While such formulations handle gradual shifts, they fail to capture long-range recurring dynamics. In many real-world streams, the underlying parameter trajectory $\theta^*(t)$ exhibits periodic or multi-scale patterns, where similar states reappear after a temporal lag—i.e., $\exists L > 0$ such that $\theta^*(t) \approx \theta^*(t+kL)$ for $k \in \mathbb{Z}^{+}$, despite short-term variations where $\theta^*(t) \neq \theta^*(t+\Delta)$. This non-monotonic trajectory cannot be adequately captured by step-wise local smoothness alone.
\end{challenge}

\begin{challenge}[Parameter-space uncertainties and noise corruption]
\label{challenge:2}
The empirical parameter sequence $\{\bar{\theta}_{t_i}\}_{i=1}^T$ is corrupted by the perturbation term $\epsilon(t_i)$ in Eq.~\eqref{eq:observation_model}, which encapsulates finite-sample effects and domain-specific variations. Directly treating this noisy sequence as a clean dynamical signal causes models to overfit these transient fluctuations rather than tracking the true, persistent parameter trajectory $\theta^*(t)$, thereby severely degrading future-domain generalization.
\end{challenge}

\begin{challenge}[Irregular timestamps and arbitrary-time parameter extrapolation]
\label{challenge:3}
CTDG requires predicting the parameter configuration $\theta(t_s)$ at an arbitrary future timestamp $t_s > t_T$, while historical domains arrive at non-uniform intervals $\Delta t_i = t_{i+1} - t_i$. Existing discrete methods rely on time-invariant one-step parameter transitions (e.g., $\bar{\theta}_{t_{i+1}} = F(\bar{\theta}_{t_i})$) designed for uniform grids ($\Delta t_i \equiv C$). Under irregular sampling, such discrete transitions fail to scale with varying temporal gaps, leading to compounding error accumulation when recursively projected to arbitrary future times.
\end{challenge}

\begin{figure}[t]
    \centering
    \includegraphics[width=\linewidth]{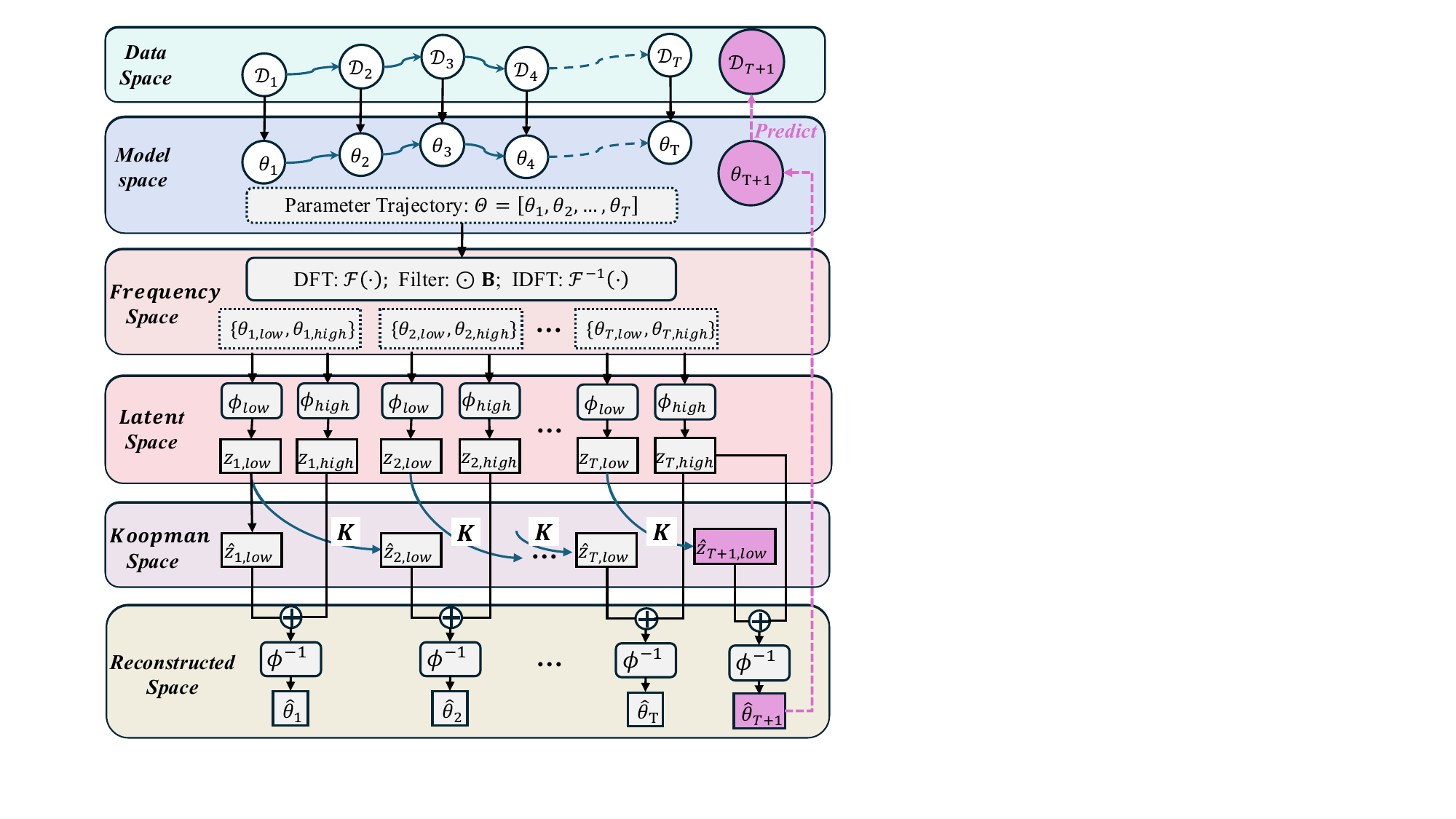}
    \caption{FreKoo framework. It decomposes model parameter trajectories into low- and high-frequency components via the Fourier transform. Koopman dynamics extrapolate the low-frequency component, while temporal-difference regularization suppresses unstable high-frequency variations.}
    \label{fig:framework}
\end{figure}
 \section{FreKoo: Frequency-Koopman Framework}
\label{sec:frekoo}
For the DTDG setting, our preliminary framework \textbf{FreKoo}~\cite{yu2025learning} models parameter trajectories through a two-stage spectral-dynamical approach (as shown in Fig.~\ref{fig:framework}), i.e., Fourier-based spectral decomposition and frequency-specific dynamic modeling. Specifically, given a time-varying parameter trajectory $\Theta = [\theta_1, \theta_2, \dots, \theta_T]^\top \in \mathbb{R}^{T \times D}$ with $T$ uniform timesteps and $D$ dimensions, FreKoo first transforms it into frequency space via Discrete Fourier Transform (DFT) along the temporal axis: $\Theta^{f} = \mathcal{F}(\Theta)$, where $\Theta^{f}[f,d] = \sum_{t=1}^{T} \Theta[t,d] \cdot e^{-j2\pi ft/T}$. To isolate dominant dynamics, it computes the average spectral magnitude across dimensions as an energy proxy, $M_f = \frac{1}{D} \sum_{d=1}^D |\Theta^{f}[f, d]|$. Based on a specified energy preservation ratio $\tau \in [0,1]$, a binary mask $\mathbf{B} \in \{0,1\}^{N_{\mathrm{freq}} \times D}$ selects the top-$Q$ frequency indices with the largest energy magnitudes. Reversing the transformation via Inverse DFT ($\mathcal{F}^{-1}$) yields a trajectory decomposition:
\begin{equation}
    \Theta_{\mathrm{low}} = \mathcal{F}^{-1}(\Theta^{f} \odot \mathbf{B}), 
    \Theta_{\mathrm{high}} = \mathcal{F}^{-1}(\Theta^{f} \odot (\mathbf{1} - \mathbf{B})),
    \label{eq:IDFT}
\end{equation}
where $\Theta = \Theta_{\mathrm{low}} + \Theta_{\mathrm{high}}$. Here, $\Theta_{\mathrm{low}}$ encapsulates dominant trends and long-term periodic patterns (\emph{Challenge 1}), whereas $\Theta_{\mathrm{high}}$ captures transient domain-specific uncertainties (\emph{Challenge 2}).
To model the persistent low-frequency dynamics $\Theta_{\mathrm{low}} = \{\theta_{1,\mathrm{low}}, \dots, \theta_{T,\mathrm{low}}\}$, FreKoo maps them into a higher-dimensional Hilbert space via an encoder $\phi_{\mathrm{low}}: \mathbb{R}^D \to \mathbb{R}^m$, assuming near-linear evolution governed by a learnable Koopman operator $K \in \mathbb{R}^{m \times m}$. The low-frequency transition is expressed as $z_{t,\mathrm{low}} = \phi_{\mathrm{low}}(\theta_{t,\mathrm{low}})$ and $\hat{z}_{t+1,\mathrm{low}} = K z_{t,\mathrm{low}}$, supervised by a Koopman consistency loss:
\begin{equation}
    \mathcal{L}_{\mathrm{koop}} = \sum_{t=1}^{T-1} \|z_{t+1,\mathrm{low}} - \hat{z}_{t+1,\mathrm{low}}\|_2^2.
    \label{eq:loss_koop}
\end{equation}

Conversely, for the volatile high-frequency sequence $\Theta_{\mathrm{high}} = \{\theta_{1,\mathrm{high}}, \dots, \theta_{T,\mathrm{high}}\}$, forward extrapolation is avoided to prevent noise amplification. Instead, FreKoo maps it via an encoder $\phi_{\mathrm{high}}$ and imposes a temporal difference smoothness regularization:
\begin{equation}
    \mathcal{R}_{\mathrm{high}} = \sum_{t=1}^{T-1} \|z_{t+1,\mathrm{high}} - z_{t,\mathrm{high}}\|_2^2.
    \label{eq:loss_highfre}
\end{equation}

To predict parameters for the next timestamp $\hat{\theta}_{t+1}$, FreKoo fuses the extrapolated low-frequency latent with the regularized high-frequency context through a shared decoder $\phi^{-1}$:
\begin{equation}
    \hat{\theta}_{t+1} = \phi^{-1}\big(K\phi_{\mathrm{low}}(\theta_{t,\mathrm{low}}) + \phi_{\mathrm{high}}(\theta_{t,\mathrm{high}})\big).
    \label{eq:param_prediction}
\end{equation}
The overall framework is trained end-to-end under the joint objective:
\begin{equation}
    \mathcal{L}_{\mathrm{frekoo}} = \mathcal{L}_{\mathrm{task}} + \alpha \mathcal{L}_{\mathrm{rec}} + \beta \mathcal{L}_{\mathrm{koop}} + \gamma \mathcal{R}_{\mathrm{high}},
    \label{eq:loss_total}
\end{equation}
where $\mathcal{L}_{\mathrm{task}}$ is the empirical source risk, and $\mathcal{L}_{\mathrm{rec}} = \sum_{t=1}^{T-1} \|\theta_{t+1} - \hat{\theta}_{t+1}\|_2^2$ enforces overall parameter reconstruction fidelity.

\begin{discussion}[Limitations of FreKoo]
\label{discussion:limitation-frekoo}
While FreKoo effectively addresses Challenges 1 and 2 in DTDG, it exhibits two inherent limitations: 1) Its reliance on DFT requires uniformly spaced domains ($\Delta t_i \equiv C$), making it fundamentally incompatible with continuous streams where observations arrive asynchronously (\emph{Challenge 3}); and 2) It separates spectral decomposition from dynamical modeling, requiring distinct encoders ($\phi_{\mathrm{low}}, \phi_{\mathrm{high}}$) and a manually pre-defined threshold $\tau$, which prevents dynamic, end-to-end spectral disentanglement. These constraints necessitate a unified continuous time framework with adaptive spectral selection. 
\end{discussion}

\begin{figure*}[tbp]
    \centering
    \includegraphics[width=\textwidth]{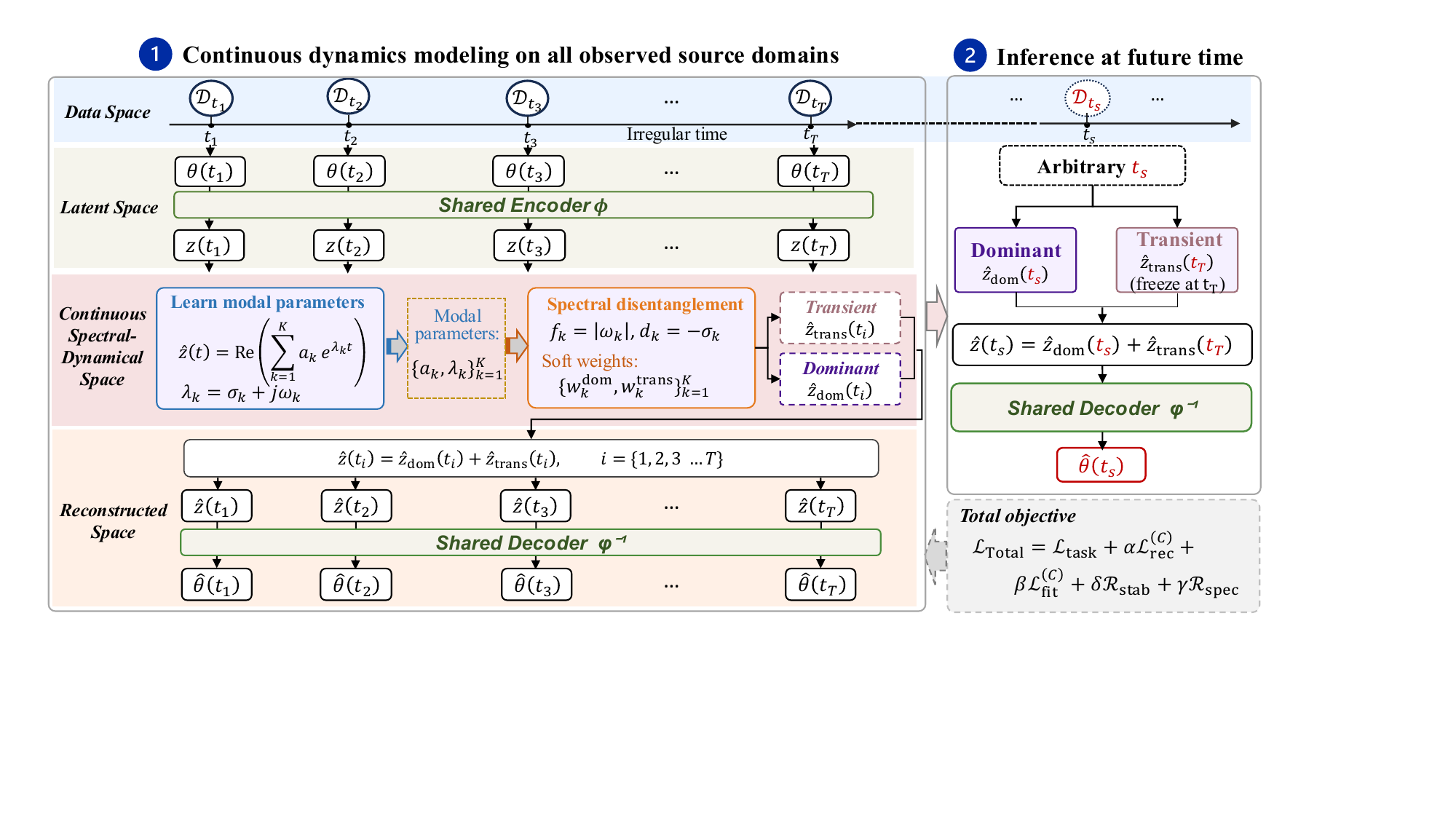}
    \caption{FreKoo++ framework. \textbf{(1) Training Phase:} Model parameters from irregular timestamps are encoded into a continuous latent trajectory and fitted by Koopman modal dynamics. Adaptive soft weighting disentangles persistent dominant dynamics from transient noise, yielding an optimized continuous spectral representation via joint optimization. \textbf{(2) Inference Phase:} For an arbitrary target timestamp $t_s > t_T$, FreKoo++ analytically extrapolates the persistent dominant component to $t_s$, freezes the transient residual at $t_T$, and decodes their superposition to obtain target parameters $\hat{\theta}(t_s)$.}
    \label{fig:FreKoo++_framework}
\end{figure*}

\section{FreKoo++: Continuous Frequency-Koopman Framework}
\label{sec:cfrekoo}
To overcome the limitations of discrete pipelines outlined in Discussion~\ref{discussion:limitation-frekoo}, we introduce FreKoo++, which establishes a unified continuous-time formulation that seamlessly bridges spectral analysis and dynamic operator modeling. As shown in Fig.~\ref{fig:FreKoo++_framework}, FreKoo++ maps temporal parameter trajectories into a compact latent space and models their evolution via learnable continuous modal dynamics. Driven by these modal characteristics, an adaptive soft spectral weighting mechanism disentangles persistent dominant dynamics from domain-specific fluctuations. By operating directly in continuous time, FreKoo++ naturally handles irregular temporal observations and enables analytical prediction at arbitrary future horizons without requiring step-wise numerical integration.

\subsection{Latent Trajectory Construction}
\label{subsec:latent_trajectory}
For each observed timestamp $t_i$, let $\theta(t_i)\in\mathbb{R}^{D}$ denote the model parameters associated with the source domain $\mathcal{D}_{t_i}$. We collect these parameters across all timestamps to form a parameter trajectory
$\Theta = [\theta(t_1),\theta(t_2),\ldots,\theta(t_T)]^\top \in \mathbb{R}^{T\times D}$,
which forms a time-stamped parameter trajectory and provides a task-dependent representation of all evolving source domains. Directly modeling temporal dynamics in the high-dimensional parameter space $\mathbb{R}^D$ is computationally expensive and may retain domain-specific parameter variations that are irrelevant to future prediction. FreKoo++ addresses this by projecting the parameter trajectory into a compact latent dynamical space $\mathbb{R}^m$ ($m \ll D$) using a shared encoder $\phi: \mathbb{R}^{D} \to \mathbb{R}^{m}$:
\begin{equation}
\label{eq:encode}
    z(t_i) = \phi\!\left( \theta(t_i) \right), \qquad i = 1,\ldots,T ,
\end{equation}
where $z(t_i) \in \mathbb{R}^{m}$ represents the latent dynamical state at time $t_i$. The sequence $\{z(t_i)\}_{i=1}^{T}$ defines the observed continuous latent trajectory across all source domains.

\subsection{Continuous Spectral-Dynamical Modeling}
\label{subsubsec:eigenvalue}
Given the observed latent trajectory $\{z(t_i)\}_{i=1}^{T}$, our primary goal is to establish a continuous-time parameterization that explicitly exposes its underlying spectral and dynamical properties. We draw theoretical motivation from continuous-time Koopman spectral theory~\cite{koopman1931hamiltonian,brunton2021modern}. When observables span a finite-dimensional Koopman-invariant subspace, the infinitesimal generator of the continuous-time Koopman operator admits exponential modal decompositions whose eigenvalues govern both frequency and temporal growth or decay.
\begin{proposition}[Koopman Spectral-Dynamical Relation]
\label{prop:equivalence}
Let $z(t) \in \mathbb{R}^{m}$ denote the continuous latent state sampled at historical timestamps $\{t_i\}_{i=1}^{T}$ according to Eq.~\eqref{eq:encode}. Suppose the continuous-time evolution of $z(t)$ is governed by a finite $K$-dimensional Koopman-invariant subspace. Let $\mathcal{K}_c \in \mathbb{C}^{K \times K}$ be the matrix representation of the infinitesimal generator of the continuous Koopman operator restricted to this subspace. If $\mathcal{K}_c$ is diagonalizable, then the continuous latent trajectory admits the exact modal expansion:
\begin{equation}
\label{eq:kmd}
    z(t) = \operatorname{Re}\!\left(
        \sum_{k=1}^{K} c_k\, v_k\, e^{\lambda_k t}
    \right),
\end{equation}
where $\{(\lambda_k, v_k)\}_{k=1}^{K}$ denote the eigenvalue-mode pairs with eigenvalues $\lambda_k \in \mathbb{C}$ and Koopman mode vectors $v_k \in \mathbb{C}^{m}$, and $c_k \in \mathbb{C}$ is the modal coefficient determined by initial conditions. Each eigenvalue $\lambda_k = \sigma_k + j\omega_k$ jointly characterizes the continuous temporal behavior: the imaginary part $\omega_k = \operatorname{Im}(\lambda_k)$ defines the oscillatory frequency, while the real part $\sigma_k = \operatorname{Re}(\lambda_k)$ dictates the temporal growth ($\sigma_k > 0$) or decay ($\sigma_k < 0$) rate.
\end{proposition}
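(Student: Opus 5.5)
The plan is to reduce the statement to the elementary solution theory of a finite-dimensional linear ODE. First I will make the hypotheses concrete. Let $g(t)=(g_1(t),\dots,g_K(t))^\top\in\mathbb{C}^K$ collect the observables spanning the Koopman-invariant subspace, evaluated along the trajectory. Invariance means the continuous Koopman semigroup $\mathcal{K}^\tau$ maps this span to itself. Its infinitesimal generator, restricted to the span, is then the matrix $\mathcal{K}_c$, so that $\dot g(t)=\mathcal{K}_c\, g(t)$. Since $z(t)$ is governed by this subspace, I will read this as meaning that $z(t)$ is a linear readout of the lifted state: there is a matrix $C\in\mathbb{C}^{m\times K}$ with $z(t)=\operatorname{Re}(C g(t))$. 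The real part is needed only because complex observables may be used while $z$ is real-valued. Stating this readout assumption explicitly is the step I expect to need the most care, since the proposition phrases it informally. If the subspace is spanned by real observables, I will instead take $C$ real and drop $\operatorname{Re}$.

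Second, I will solve the linear system. Uniqueness for linear ODEs gives $g(t)=e^{\mathcal{K}_c t}g(0)$. By diagonalizability, write $\mathcal{K}_c=W\Lambda W^{-1}$ with $\Lambda=\operatorname{diag}(\lambda_1,\dots,\lambda_K)$ and $W=[w_1,\dots,w_K]$ the eigenvectors. The power series of the matrix exponential then yields $e^{\mathcal{K}_c t}=W e^{\Lambda t}W^{-1}$. Define $c=W^{-1}g(0)\in\mathbb{C}^K$, which depends only on the initial condition. Then $g(t)=\sum_k c_k w_k e^{\lambda_k t}$.

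Third, I will push this through the readout. Setting $v_k:=Cw_k\in\mathbb{C}^m$ (the Koopman modes) gives $z(t)=\operatorname{Re}\big(\sum_k c_k v_k e^{\lambda_k t}\big)$, which is exactly Eq.~\eqref{eq:kmd}. The expansion is exact, with no approximation, because the subspace is invariant.

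Finally, I will read off the interpretation of each eigenvalue. Write $\lambda_k=\sigma_k+j\omega_k$, so that $e^{\lambda_k t}=e^{\sigma_k t}(\cos\omega_k t+j\sin\omega_k t)$. Then $|c_k v_k e^{\lambda_k t}|=|c_k|\,\|v_k\|\,e^{\sigma_k t}$, which grows when $\sigma_k>0$ and decays when $\sigma_k<0$. The phase rotates at angular frequency $\omega_k$, so the real part oscillates with that frequency.

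The only potential subtlety is complex-conjugate pairing. This is needed if one wants the $\operatorname{Re}$ to be redundant for real dynamics, in which case eigenvalues and modes come in conjugate pairs and the sum is automatically real. I will remark on this but do not need it, since $\operatorname{Re}$ is taken explicitly.
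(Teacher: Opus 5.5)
Your argument is correct. There is no proof in the paper to compare it against: Proposition~\ref{prop:equivalence} is stated as a consequence of standard continuous-time Koopman spectral theory, citing Koopman and Brunton et al. The supplementary material proves only Theorems~\ref{thm:approximation_error} and~\ref{thm:generalization_bound} and Proposition~\ref{prop:energy}.

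Your route follows the textbook Koopman mode decomposition that the citation stands in for: invariance gives a linear ODE for the lifted observables $g$, diagonalizability gives $e^{\mathcal{K}_c t}=We^{\Lambda t}W^{-1}$, and a linear readout carries the expansion over to $z$. What you gain over the paper is a self-contained and precise argument. Your readout hypothesis $z=\operatorname{Re}(Cg)$ is not just a convenience. The phrase ``governed by a finite $K$-dimensional Koopman-invariant subspace'' yields an exact expansion only if the components of $z$ lie in that span, so you have correctly surfaced an assumption the statement leaves implicit.

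Your construction also identifies the objects in the statement. The rows $\ell_k^\top$ of $W^{-1}$ define eigenfunctions $\varphi_k=\ell_k^\top g$ along the trajectory, with $\varphi_k(t)=e^{\lambda_k t}\varphi_k(0)$. Hence $c_k=\varphi_k(0)$, and $v_k=Cw_k$ are the usual Koopman modes.

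Two cosmetic points:
\begin{itemize}
\item If the generator's matrix is written with columns as images of basis vectors, you get $\dot g=\mathcal{K}_c^\top g$ rather than $\dot g=\mathcal{K}_c g$. This is harmless, since transposition preserves both the spectrum and diagonalizability.
\item If the trajectory is only available from $t_1>0$, read $g(0)$ as the backward extension $e^{-\mathcal{K}_c t_1}g(t_1)$, or absorb $e^{-\lambda_k t_1}$ into $c_k$. This is legitimate because the linear flow is invertible.
\end{itemize}
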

Proposition~\ref{prop:equivalence} provides the theoretical justification for our architectural design. In practice, finding an exact Koopman-invariant subspace via neural encoders is unfeasible. Instead, FreKoo++ directly employs the closed-form modal solution in Eq.~\eqref{eq:kmd} as a \emph{parametric dynamical prior} to approximate the latent trajectory.

To eliminate scalar unidentifiability between $c_k$ and $v_k$, we absorb them into a unified complex modal amplitude vector $a_k = c_k v_k \in \mathbb{C}^m$. Collecting all amplitudes as $A = [a_1, \dots, a_K] \in \mathbb{C}^{m \times K}$, the continuous modal approximation is parameterized as:
\begin{equation}
\label{eq:kmd-amp}
    \tilde{z}(t) = \operatorname{Re}\!\left( \sum_{k=1}^{K} a_k e^{\lambda_k t} \right).
\end{equation}

Evaluating Eq.~\eqref{eq:kmd-amp} at any timestamp $t$ yields a continuous latent state without requiring step-wise transition rollouts. To bypass complex-valued optimization during backpropagation, we implement Eq.~\eqref{eq:kmd-amp} through its real-valued trigonometric equivalent:
\begin{equation}
    \tilde{z}(t) = \sum_{k=1}^{K} e^{\sigma_k t} \left[ \operatorname{Re}(a_k)\cos(\omega_k t) - \operatorname{Im}(a_k)\sin(\omega_k t) \right].
\end{equation}
This reformulation guarantees that $\tilde{z}(t) \in \mathbb{R}^m$ is evaluated natively in real arithmetic. Non-oscillatory secular trends are naturally recovered when $\omega_k = 0$.
Consequently, the learnable spectral parameters $\Lambda = \{\lambda_k\}_{k=1}^K$ and amplitudes $A \in \mathbb{C}^{m \times K}$ construct a continuous spectral coordinate system, replacing rigid DFT frequency bins with data-driven eigenvalue-indexed continuous modes.

\subsection{Adaptive Spectral Disentanglement}
\label{subsubsec:soft}
Based on the spectral interpretation above, FreKoo++ assigns each learned mode a soft dominance weight. The imaginary part $\omega_k$ reflects the oscillatory frequency, while the real part $\sigma_k$ controls the growth or decay behavior. These quantities make it possible to distinguish persistent structures from transient fluctuations, but the separation boundary should adapt to the temporal profile of each dataset.

For each mode $k \in \{1, \dots, K\}$, we define its frequency score as $f_k = |\omega_k|$ and its decay score as $d_k = -\sigma_k$ (for stable modes with $\sigma_k \le 0$). We assign a soft dominance weight $w_k^{\mathrm{dom}} \in (0, 1)$ using two learnable gating functions:
\begin{align}
\label{eq:wdom}
    w_k^{\mathrm{dom}} &= \sigma\!\big( -\kappa_f (f_k - f_0) \big) \cdot \sigma\!\big( -\kappa_d (d_k - d_0) \big), \\
\label{eq:wtrans}
    w_k^{\mathrm{trans}} &= 1 - w_k^{\mathrm{dom}},
\end{align}
where $\sigma(\cdot)$ is the Sigmoid function, $\kappa_f, \kappa_d > 0$ dictate gate sharpness, and $f_0, d_0$ are learnable thresholds optimized jointly with the network. A mode receives a high dominance weight ($w_k^{\mathrm{dom}} \approx 1$) if it is both low-frequency and persistent ($\sigma_k \approx 0$). Conversely, modes exhibiting high frequencies or rapid attenuation are routed to the transient component ($w_k^{\mathrm{trans}}$).
Applying these weights continuously decomposes the latent trajectory at any time $t$:
\begin{align}
\label{eq:zdom}
    \hat{z}_{\mathrm{dom}}(t) &= \operatorname{Re}\!\left( \sum_{k=1}^{K} w_k^{\mathrm{dom}} a_k e^{\lambda_k t} \right), \\
\label{eq:ztrans}
    \hat{z}_{\mathrm{trans}}(t) &= \operatorname{Re}\!\left( \sum_{k=1}^{K} w_k^{\mathrm{trans}} a_k e^{\lambda_k t} \right).
\end{align}

Consequently, the full reconstructed latent state at any historical observation timestamp $t_i$ ($i=1,\dots,T$) is obtained by the superposition of both components:
\begin{equation}
\label{eq:z-hat-full}
    \hat{z}(t_i) = \hat{z}_{\mathrm{dom}}(t_i) + \hat{z}_{\mathrm{trans}}(t_i).
\end{equation}

\subsection{Joint Optimization}
\label{subsubsec:optim_c}
Training FreKoo++ optimizes the source-domain parameters $\{\theta(t_i)\}_{i=1}^{T}$, the encoder $\phi$ and decoder $\phi^{-1}$, the eigenvalues $\Lambda=\{\lambda_k\}_{k=1}^{K}$, the modal amplitudes $A \in \mathbb{C}^{m \times K}$, and the spectral thresholds $f_0, d_0$ under a unified objective. 
In practice, this is implemented via alternating gradient updates between source-domain task fitting and latent modal fitting, while all components remain coupled through a unified objective. The overall loss consists of three predictive fitting terms and two spectral regularization constraints. The fitting losses optimize source-domain task predictors, align the continuous modal trajectory with encoded latent states, and preserve parameter auto-encoding fidelity. The two spectral constraints govern dominant-mode stability and adaptive spectral disentanglement, respectively.

The task loss ensures predictive accuracy across the observed source domains. Denoting the training data from domain $\mathcal{D}_{t_i}$ as $(X_{t_i}, Y_{t_i})$, the task loss is formulated as:
\begin{equation}
\label{eq:loss-task-c}
    \mathcal{L}_{\mathrm{task}} = \sum_{i=1}^{T}
    \ell\!\left( g(X_{t_i}; \theta(t_i)), Y_{t_i} \right).
\end{equation}

To align spectral disentanglement with our continuous inference protocol, we formulate modal trajectory fitting as an asymmetric pseudo-future prediction problem across chronological source domains.
Specifically, for each step $i = 2, \ldots, T$, we treat $t_{i-1}$ as the latest available observation and $t_i$ as a pseudo-future timestamp. Following the asymmetric extrapolation strategy used at test time, the persistent dominant component is propagated forward to $t_i$, whereas the volatile transient component is held at the preceding observation $t_{i-1}$:
\begin{equation}
\hat{z}(t_i \mid t_{i-1})
=
\hat{z}_{\mathrm{dom}}(t_i)
+
\hat{z}_{\mathrm{trans}}(t_{i-1}).
\label{eq:pseudo_future_state}
\end{equation}

The resulting temporal predictive fitting loss is defined as:
\begin{equation}
\mathcal{L}_{\mathrm{fit}}^{(C)}
= \sum_{i=2}^{T} \left\| z(t_i) - \hat{z}(t_i \mid t_{i-1}) \right\|_2^2,
\label{eq:fit_loss}
\end{equation}
where $z(t_i) = \phi(\theta(t_i)) \in \mathbb{R}^m$ is the encoded latent state of domain $\mathcal{D}_{t_i}$. Because the dominant and transient branches are evaluated at distinct timestamps, Eq.~\eqref{eq:fit_loss} depends explicitly on the gating weights $\{w_k^{\mathrm{dom}}, w_k^{\mathrm{trans}}\}_{k=1}^K$. It therefore directly supervises adaptive modal allocation while matching the asymmetric propagation rule executed during inference.

The reconstruction loss enforces auto-encoding consistency between the high-dimensional parameter space $\mathbb{R}^D$ and the compact latent space $\mathbb{R}^m$:
\begin{equation}
\label{eq:loss-rec-c}
    \mathcal{L}_{\mathrm{rec}}^{(C)} = \sum_{i=1}^{T}
    \left\| \theta(t_i) - \phi^{-1}\!\left( \hat{z}(t_i) \right) \right\|_2^2 .
\end{equation}

The first spectral constraint regularizes the dominant dynamics against unstable extrapolation. Dominant modes with positive growth rates ($\sigma_k > 0$) can explode exponentially when projected far beyond the historical observation window. We therefore penalize dominant modes exhibiting positive growth:
\begin{equation}
\label{eq:rstab}
    \mathcal{R}_{\mathrm{stab}} = \sum_{k=1}^{K}
    w_k^{\mathrm{dom}}\, \big( \max(0,\, \sigma_k) \big)^2 ,
\end{equation}
such that stable ($\sigma_k < 0$) or neutrally persistent ($\sigma_k = 0$) dominant modes incur no penalty. Compared with hard skew-symmetric constraints on an explicit generator matrix that force purely imaginary spectra and permit only undamped oscillations~\cite{cai2024continuous}, this soft regularization accommodates natural temporal decay while explicitly suppressing exponential divergence.

The second spectral constraint suppresses noisy transient components while preventing the learned soft gates from collapsing into degenerate uniform distributions. Because transient modes are not extrapolated at inference time, FreKoo++ penalizes their modal energy and encourages decisive spectral separation:
\begin{equation}
\label{eq:rspec}
    \mathcal{R}_{\mathrm{spec}}
    = \sum_{k=1}^{K}
    w_k^{\mathrm{trans}}\, \| a_k \|_2^2
    + \left(1 - \operatorname{Var}_{k=1}^K\!\left(w_k^{\mathrm{dom}}\right)\right).
\end{equation}

\begin{proposition}[Spectral Energy Bound]\label{prop:energy}
Let mode $k \in \{1, \dots, K\}$ have amplitude vector $a_k = c_k v_k \in \mathbb{C}^m$ and decay rate $\sigma_k = \operatorname{Re}(\lambda_k) < 0$. Its temporal energy over $[0, T_{\max}]$ is
\begin{equation}
\label{eq:energy-bound}
    E_k = \| a_k \|_2^2 \cdot \frac{1 - e^{2\sigma_k T_{\max}}}{-2\sigma_k}
    \;\leq\; \frac{\| a_k \|_2^2}{2\,|\sigma_k|} .
\end{equation}
\end{proposition}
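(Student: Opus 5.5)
We take the temporal energy of mode $k$ to mean $E_k=\int_0^{T_{\max}}\|a_k e^{\lambda_k t}\|_2^2\,dt$, i.e.\ the squared $L^2$ norm on $[0,T_{\max}]$ of the complex modal signal $a_k e^{\lambda_k t}$ appearing inside Eq.~\eqref{eq:kmd-amp}. We work with the complex signal rather than its real part so that the oscillatory factor disappears exactly. The plan is a direct computation followed by an elementary inequality.

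\textbf{Step 1: reduce to a scalar integral.} Since $a_k$ is constant in time and $e^{\lambda_k t}$ is a complex scalar,
\begin{equation*}
\|a_k e^{\lambda_k t}\|_2^2=\|a_k\|_2^2\,|e^{\lambda_k t}|^2 .
\end{equation*}
Writing $\lambda_k=\sigma_k+j\omega_k$, we have $|e^{\lambda_k t}|=e^{\sigma_k t}|e^{j\omega_k t}|=e^{\sigma_k t}$. So the frequency $\omega_k$ drops out, and
\begin{equation*}
E_k=\|a_k\|_2^2\int_0^{T_{\max}}e^{2\sigma_k t}\,dt .
\end{equation*}

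\textbf{Step 2: evaluate the integral.} Because $\sigma_k\neq0$,
\begin{equation*}
\int_0^{T_{\max}}e^{2\sigma_k t}\,dt=\frac{e^{2\sigma_k T_{\max}}-1}{2\sigma_k}=\frac{1-e^{2\sigma_k T_{\max}}}{-2\sigma_k}.
\end{equation*}
This is exactly the equality in Eq.~\eqref{eq:energy-bound}.

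\textbf{Step 3: the bound.} Since $\sigma_k<0$ and $T_{\max}\ge0$, we have $0<e^{2\sigma_k T_{\max}}\le1$. Hence the numerator satisfies $1-e^{2\sigma_k T_{\max}}\in[0,1)$, and the denominator is $-2\sigma_k=2|\sigma_k|>0$. Therefore
\begin{equation*}
E_k\le\frac{\|a_k\|_2^2}{2|\sigma_k|}.
\end{equation*}
The right-hand side is the limit of $E_k$ as $T_{\max}\to\infty$, so the bound is uniform in the horizon.

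\textbf{Main obstacle.} The only delicate point is fixing the definition of energy. If one instead used the real part $\operatorname{Re}(a_k e^{\lambda_k t})$, cross terms of the form $\cos(2\omega_k t+\cdot)$ would appear and the stated equality would fail. We therefore adopt the complex-signal definition, which makes $|e^{j\omega_k t}|=1$ hold exactly. For the real-part energy, the pointwise bound $\|\operatorname{Re}(u)\|_2\le\|u\|_2$ still yields the same upper bound $\|a_k\|_2^2/(2|\sigma_k|)$, although the equality no longer holds.
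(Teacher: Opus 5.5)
Your proof is correct and takes essentially the same route as the paper. Both define $E_k$ as the $L^2$ energy of the complex modal signal $a_k e^{\lambda_k t}$, use $|e^{\lambda_k t}|=e^{\sigma_k t}$ to reduce it to $\|a_k\|_2^2\int_0^{T_{\max}} e^{2\sigma_k t}\,dt$, evaluate the integral, and bound $1-e^{2\sigma_k T_{\max}}$ by $1$; your remark that $\|\operatorname{Re}(u)\|_2\le\|u\|_2$ carries the upper bound over to the real-valued trajectory also matches the paper.
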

Proposition~\ref{prop:energy} confirms that for decaying modes ($\sigma_k < 0$), temporal energy over $[0, T_{\max}]$ is strictly upper-bounded by the squared modal amplitude $\|a_k\|_2^2$ scaled by decay rates. For neutral modes ($\sigma_k = 0$), the energy scales as $T_{\max}\|a_k\|_2^2$, establishing $\|a_k\|_2^2$ as a timestamp-independent surrogate for transient modal energy in Eq.~\eqref{eq:rspec}. Meanwhile, the second term in Eq.~\eqref{eq:rspec} maximizes the variance of $\{w_k^{\mathrm{dom}}\}_{k=1}^K$, preventing gating weights from collapsing into uninformative uniform assignments and ensuring crisp spectral disentanglement. Crucially, $\mathcal{R}_{\mathrm{spec}}$ operates directly in the continuous spectral coordinate space rather than on adjacent time-domain finite differences, making it naturally invariant to non-uniform sampling intervals.

Overall, the full training objective is formulated as:
\begin{equation}
\label{eq:total-c}
    \mathcal{L}_{\text{Total}} = \mathcal{L}_{\mathrm{task}}
    + \alpha\, \mathcal{L}_{\mathrm{rec}}^{(C)}
    + \beta\, \mathcal{L}_{\mathrm{fit}}^{(C)}
    + \delta\, \mathcal{R}_{\mathrm{stab}}
    + \gamma\, \mathcal{R}_{\mathrm{spec}} ,
\end{equation}
where $\alpha, \beta, \gamma, \delta > 0$ are balancing hyperparameters. All parameters are trained end-to-end via gradient descent.

\begin{algorithm}[t]
\caption{FreKoo++ Learning and Inference}
\label{alg:cfrekoo}
\begin{algorithmic}[1]
\REQUIRE Source domains $\{\mathcal{D}_{t_1}, \ldots, \mathcal{D}_{t_T}\}$ with timestamps $\mathcal{T}$; hyperparameters: $\alpha, \beta, \gamma, \delta$; number of modes $K$; epochs $N_{\mathrm{ep}}$.
\ENSURE Predicted parameters $\hat{\theta}(t_s)$ for target time $t_s$.
\STATE Initialize base model $g(\cdot)$, encoder $\phi$, decoder $\phi^{-1}$, eigenvalues $\{\lambda_k\}_{k=1}^{K}$, amplitudes $A$, thresholds $f_0, d_0$.
\FOR{epoch $= 1$ to $N_{\mathrm{ep}}$}
    \FOR{$i = 1$ to $T$}
        \STATE Compute task loss for $g(\cdot\,;\theta(t_i))$ on $\mathcal{D}_{t_i}$.
        \STATE $z(t_i) \leftarrow \phi(\theta(t_i))$ \COMMENT{encode parameters, Eq.~\eqref{eq:encode}}
    \ENDFOR
    \STATE Compute weights $w_k^{\mathrm{dom}}, w_k^{\mathrm{trans}}$ via Eqs.~\eqref{eq:wdom}--\eqref{eq:wtrans}.
    \STATE Compute reconstructed trajectory $\hat{z}(t_i)$ for $i=\{1,\dots,T\}$ via Eq.\ref{eq:z-hat-full}.
    \STATE Compute $\mathcal{L}_{\text{Total}}$ via Eq.~\eqref{eq:total-c}.
\STATE Update $\phi, \phi^{-1}, \{\lambda_k\}, A, f_0, d_0$ and $\{\theta(t_i)\}$.
\ENDFOR
\STATE \textbf{Inference at target time $t_s > t_T$}
\STATE $\hat{z}_{\mathrm{dom}}(t_s) \leftarrow \operatorname{Re}\!\big( \sum_{k} w_k^{\mathrm{dom}}\, a_k\, e^{\lambda_k t_s} \big)$
\STATE $\hat{z}_{\mathrm{trans}}(t_T) \leftarrow \operatorname{Re}\!\big( \sum_{k} w_k^{\mathrm{trans}}\, a_k\, e^{\lambda_k t_T} \big)$
\STATE $\hat{\theta}(t_s) \leftarrow \phi^{-1}\!\big( \hat{z}_{\mathrm{dom}}(t_s) + \hat{z}_{\mathrm{trans}}(t_T) \big)$
\RETURN $\hat{\theta}(t_s)$
\end{algorithmic}
\end{algorithm}
\subsection{Future Time Inference}
\label{subsubsec:inference}
Once trained, FreKoo++ executes an analytical extrapolation protocol to predict model parameters for an unseen target domain $\mathcal{D}_{t_s}$ at an arbitrary future timestamp $t_s > t_T$. To balance long-term trends with local temporal context, the decomposed components fulfill asymmetric inference roles. Persistent dominant modes, capturing macro-trends and recurring patterns, are extrapolated directly to the future query timestamp $t_s$ via $\hat{z}_{\mathrm{dom}}(t_s)$ in Eq.~\eqref{eq:zdom}. Conversely, transient modes capture volatile local fluctuations that are inherently unpredictable far into the future. Forward-extrapolating these modes risks noise amplification; FreKoo++ thus freezes their contribution at the last observed timestamp $t_T$, retaining $\hat{z}_{\mathrm{trans}}(t_T)$ as stationary residual context. 

Consequently, the predicted latent state $\hat{z}(t_s)$ and reconstructed target parameters $\hat{\theta}(t_s)$ are formulated as:
\begin{align}
\label{eq:infer-z}
    \hat{z}(t_s) &= \hat{z}_{\mathrm{dom}}(t_s) + \hat{z}_{\mathrm{trans}}(t_T), \\
\label{eq:infer-theta}
    \hat{\theta}(t_s) &= \phi^{-1}\!\left( \hat{z}(t_s) \right).
\end{align}
Evaluating closed-form exponential modal expressions at $t_s$ and $t_T$ bypasses recursive step-wise rollouts and numerical ODE solvers, securing constant-time $\mathcal{O}(1)$ query complexity for arbitrary prediction horizons. The complete procedure is detailed in Algorithm~\ref{alg:cfrekoo}.

\begin{discussion}[FreKoo versus FreKoo++]
\label{discussion:frekoo-vs-cfrekoo}
Compared to FreKoo~\cite{yu2025learning}, FreKoo++ achieves three core advances: 1) replacing rigid DFT grids with continuous Koopman eigenvalues ($\lambda_k = \sigma_k + j\omega_k$) to natively support irregular sampling; 2) substituting manual DFT cutoffs with learnable spectral gates ($f_0, d_0$) under a single encoder; and 3) superseding discrete step-wise transitions ($K z_t$) with analytical modal extrapolation ($e^{\lambda_k t_s}$), enabling closed-form $\mathcal{O}(1)$ arbitrary-horizon forecasting.
\end{discussion}

\section{Theoretical Analysis}
\label{sec:theory}
This section provides rigorous theoretical grounding for FreKoo++ by addressing two fundamental claims: 1) the expressiveness of finite continuous Koopman modes in parameterizing latent trajectories, and 2) the explicit risk control provided by our stability ($\mathcal{R}_{\mathrm{stab}}$) and spectral ($\mathcal{R}_{\mathrm{spec}}$) regularizers. We first bound the continuous modal approximation error induced by finite spectrum truncation. We then derive a continuous-time generalization bound for unseen target domains, proving that long-horizon excess risk is strictly governed by individual modal growth rates.

\subsection{Koopman Mode Approximation Error}
\label{sec:theory_approximation}

We first bound the approximation error of the continuous modal parameterization (Eq.~\eqref{eq:kmd-amp}). For notational clarity, theoretical statements use the complex modal form; the same bounds apply to real-valued states since $\operatorname{Re}(\cdot)$ is non-expansive.

\begin{assumption}[Finite Modal Dynamics and Transient Remainder]
\label{assum:true_dynamics}
The true latent dynamics admit a finite $K^*$-order modal expansion $z(t) = \sum_{k=1}^{K^*} a_k e^{\lambda_k t}$ with $a_k \in \mathbb{C}^m$, $\lambda_k \in \mathbb{C}$. For a model retaining $K \le K^*$ modes, the first $K$ represent modeled dynamics; the remaining $K^* - K$ modes constitute a non-growing transient remainder: $\operatorname{Re}(\lambda_k) \le 0$ for all $k > K$.
\end{assumption}

\begin{theorem}[Mode Approximation Bound]
\label{thm:approximation_error}
Under Assumption~\ref{assum:true_dynamics}, let $\hat{z}(t) = \sum_{k=1}^{K} \hat{a}_k e^{\hat{\lambda}_k t}$. Then for any horizon $t_s > 0$:
\begin{equation}
\small
\begin{aligned}
\sup_{t \in [0, t_s]} \|z(t) - \hat{z}(t)\|_2 
&\le \underbrace{\sum_{k=1}^{K} \|a_k - \hat{a}_k\|_2\, e^{\hat{\rho}_k t_s}}_{\text{amplitude error}} \\
& + \underbrace{C_a \sum_{k=1}^{K} |\lambda_k - \hat{\lambda}_k|\, t_s\, e^{\varrho_k t_s}}_{\text{eigenvalue error}} \\
& + \underbrace{\sum_{k=K+1}^{K^*} \|a_k\|_2}_{\text{truncation error}},
\end{aligned}
\label{eq:theorem_approximation}
\end{equation}
where $C_a = \max_{k \le K} \|a_k\|_2$, $\hat{\rho}_k = \max(\operatorname{Re}(\hat{\lambda}_k), 0)$, $\varrho_k = \max(\operatorname{Re}(\lambda_k), \operatorname{Re}(\hat{\lambda}_k), 0)$.
\end{theorem}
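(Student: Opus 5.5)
The plan is to split the error mode by mode and apply the triangle inequality. For each $t\in[0,t_s]$ we write
\begin{equation*}
z(t)-\hat z(t)=\sum_{k=1}^{K}\big(a_k e^{\lambda_k t}-\hat a_k e^{\hat\lambda_k t}\big)+\sum_{k=K+1}^{K^*} a_k e^{\lambda_k t}.
\end{equation*}
We then add and subtract $a_k e^{\hat\lambda_k t}$ inside each retained term:
\begin{equation*}
a_k e^{\lambda_k t}-\hat a_k e^{\hat\lambda_k t}=(a_k-\hat a_k)e^{\hat\lambda_k t}+a_k\big(e^{\lambda_k t}-e^{\hat\lambda_k t}\big).
\end{equation*}
This splits the error into three groups, which we bound separately and uniformly in $t$. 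The bounds match the amplitude, eigenvalue and truncation terms of Eq.~\eqref{eq:theorem_approximation}.

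\textbf{Amplitude term.} We have $|e^{\hat\lambda_k t}|=e^{\operatorname{Re}(\hat\lambda_k)t}$. For $t\in[0,t_s]$ this is at most $e^{\hat\rho_k t}$, since $\hat\rho_k\ge\operatorname{Re}(\hat\lambda_k)$ and $t\ge 0$. Since $\hat\rho_k\ge0$, this is in turn at most $e^{\hat\rho_k t_s}$. This gives the first sum.

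\textbf{Truncation term.} By Assumption~\ref{assum:true_dynamics}, $\operatorname{Re}(\lambda_k)\le 0$ for $k>K$. Hence $|e^{\lambda_k t}|\le 1$ on $t\ge0$, and the tail is bounded by $\sum_{k>K}\|a_k\|_2$.

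\textbf{Eigenvalue term (main step).} This is where the only real work lies: a Lipschitz bound for the complex exponential. We write
\begin{equation*}
e^{\lambda_k t}-e^{\hat\lambda_k t}=\int_0^1 \frac{d}{ds}e^{(\hat\lambda_k+s(\lambda_k-\hat\lambda_k))t}\,ds=(\lambda_k-\hat\lambda_k)\,t\int_0^1 e^{(\hat\lambda_k+s(\lambda_k-\hat\lambda_k))t}\,ds.
\end{equation*}
The integrand's modulus is $e^{\operatorname{Re}(\cdot)t}$. The real part of a convex combination of $\lambda_k$ and $\hat\lambda_k$ is at most $\max(\operatorname{Re}\lambda_k,\operatorname{Re}\hat\lambda_k)$, which is at most $\varrho_k$. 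Since $t\ge0$ and $\varrho_k\ge0$, the integrand is therefore bounded by $e^{\varrho_k t}\le e^{\varrho_k t_s}$. Together with $t\le t_s$, this yields
\begin{equation*}
|e^{\lambda_k t}-e^{\hat\lambda_k t}|\le|\lambda_k-\hat\lambda_k|\,t_s\,e^{\varrho_k t_s}.
\end{equation*}
Multiplying by $\|a_k\|_2\le C_a$ gives the second sum.

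The point needing care is that the segment between the two eigenvalues can have a larger real part than either endpoint only if we mishandle the bound. The convexity observation above settles this, and the clipping at $0$ in $\varrho_k$ is what allows replacing $t$ by $t_s$ in the exponent. Summing the three bounds and taking the supremum over $t\in[0,t_s]$ gives Eq.~\eqref{eq:theorem_approximation}. The real-valued version follows because $\|\operatorname{Re}(x)\|_2\le\|x\|_2$.
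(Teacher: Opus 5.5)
Your proposal is correct and follows essentially the same route as the paper. Both add and subtract $a_k e^{\hat\lambda_k t}$ to split the error into amplitude, eigenvalue and truncation terms, bound the eigenvalue term with the same fundamental-theorem-of-calculus identity along the segment $\hat\lambda_k+s(\lambda_k-\hat\lambda_k)$ (the real part is maximized at an endpoint), and bound the tail using $\operatorname{Re}(\lambda_k)\le 0$. The only cosmetic difference is that you bound $t$ and $e^{\varrho_k t}$ separately by their values at $t_s$, whereas the paper observes that $t\mapsto t\,e^{\varrho_k t}$ is nondecreasing.
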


\begin{proof}[Proof sketch]
Adding and subtracting $\sum_{k=1}^K a_k e^{\hat{\lambda}_k t}$ decomposes the error into amplitude, eigenvalue, and truncation terms. The amplitude term follows from the triangle inequality and $|e^{\hat{\lambda}_k t}| \le e^{\hat{\rho}_k t_s}$. For the eigenvalue term, the integral identity $e^{\lambda_k t} - e^{\hat{\lambda}_k t} = (\lambda_k - \hat{\lambda}_k)t\int_0^1 e^{[\hat{\lambda}_k + s(\lambda_k - \hat{\lambda}_k)]t}\,ds$ yields $|e^{\lambda_k t} - e^{\hat{\lambda}_k t}| \le |\lambda_k - \hat{\lambda}_k|\,t\,e^{\varrho_k t}$. For the truncation term, $\operatorname{Re}(\lambda_k) \le 0$ implies $|e^{\lambda_k t}| \le 1$. Full proof is detailed in Supplementary~\ref{app:proof_thm1}.
\end{proof}

\textit{\textbf{Insights.}} Theorem~\ref{thm:approximation_error} justifies the objective design in Eq.~\eqref{eq:total-c}: $\mathcal{L}_{\mathrm{fit}}^{(C)}$ and $\mathcal{L}_{\mathrm{rec}}^{(C)}$ directly reduce the amplitude and eigenvalue estimation terms. The eigenvalue term further reveals that spectral errors are exponentially amplified when $\varrho_k > 0$; $\mathcal{R}_{\mathrm{stab}}$ penalizes positive growth rates in dominant modes, encouraging $\hat{\rho}_k \to 0$ so that error growth scales at most linearly in $t_s$. The truncation term depends on the amplitudes of unmodeled modes; $\mathcal{R}_{\mathrm{spec}}$ serves as an empirical surrogate by suppressing amplitudes of transient-weighted modes via $w_k^{\mathrm{trans}}$, while its variance term prevents degenerate gate collapse.

\subsection{Future Time Generalization Bound}
\label{sec:theory_generalization}
We next translate the latent approximation analysis into a generalization bound on future-domain population risk. The derivation adheres to the inference protocol (Eq.~\eqref{eq:infer-z}): dominant dynamics are extrapolated to $t_s > t_T$, while transient fluctuations are frozen at $t_T$. Our bound explicitly characterizes how spectral estimation errors compound over the prediction horizon $\Delta_s = t_s - t_T$.

\begin{assumption}[Regularity, Realizability, and Modal Alignment]
\label{assum:lipschitz}
The task loss $\ell$, predictive backbone $g$, and decoder $\phi^{-1}$ are respectively $L_\ell$-, $L_g$-, and $L_{\mathrm{dec}}$-Lipschitz.
The optimal parameter trajectory is realizable in the latent space: there exists $z^\star(t)$ such that $\theta^\star(t) = \phi^{-1}(z^\star(t))$, decomposable as $z^\star(t) = z^\star_{\mathrm{dom}}(t) + z^\star_{\mathrm{trans}}(t)$, where the future transient remainder satisfies $\sup_{t \ge t_T}\|z^\star_{\mathrm{trans}}(t)\|_2 \le B_{\mathrm{trans}}$.
The dominant components admit matched modal representations anchored at $t_T$: for any $\tau \ge 0$,
\begin{equation}
\centering
\begin{aligned}
&z^\star_{\mathrm{dom}}(t_T+\tau) = \operatorname{Re}\!\Big(\textstyle\sum_{k=1}^K b_k^\star e^{\lambda_k^\star \tau}\Big), \\
&\hat{z}_{\mathrm{dom}}(t_T+\tau) = \operatorname{Re}\!\Big(\textstyle\sum_{k=1}^K \hat{b}_k e^{\hat{\lambda}_k \tau}\Big),
\end{aligned}
\end{equation}
where $\hat{b}_k = w_k^{\mathrm{dom}}\hat{a}_k e^{\hat{\lambda}_k t_T}$ and $\max_{1 \le k \le K} \|b_k^\star\|_2 \le B_{\mathrm{dom}}$. All learned modal parameters remain bounded after training convergence; in particular, the historical window factor $G_{\mathrm{trans}}(t_T) = \big(\sum_{k=1}^K w_k^{\mathrm{trans}} e^{2\hat{\rho}_k t_T}\big)^{1/2}$ is finite.
\end{assumption}

\begin{theorem}[Continuous Time Generalization Bound]
\label{thm:generalization_bound}
Let $\mathcal{R}_{t_s}(\theta) = \mathbb{E}_{P_{t_s}}[\ell(g(X;\theta),Y)]$. Under Assumptions~\ref{assum:true_dynamics}--\ref{assum:lipschitz}, the excess risk $\mathcal{E}_{t_s} := \mathcal{R}_{t_s}(\hat{\theta}(t_s)) - \mathcal{R}_{t_s}(\theta^\star(t_s))$ satisfies:
\begin{equation}
\label{eq:generalization_bound}
\mathcal{E}_{t_s} \le L_\ell L_g L_{\mathrm{dec}}\big(E_{\mathrm{dom}}(t_s) + E_{\mathrm{trans}}(t_s)\big),
\end{equation}
where $E_{\mathrm{dom}}(t_s) = \|\hat{z}_{\mathrm{dom}}(t_s) - z^\star_{\mathrm{dom}}(t_s)\|_2$ and $E_{\mathrm{trans}}(t_s) = \|\hat{z}_{\mathrm{trans}}(t_T) - z^\star_{\mathrm{trans}}(t_s)\|_2$. With $\Delta_s = t_s - t_T$, $\hat{\rho}_k = \max(\operatorname{Re}(\hat{\lambda}_k),0)$, and $\varrho_k = \max(\operatorname{Re}(\lambda_k^\star),\operatorname{Re}(\hat{\lambda}_k),0)$:
\begin{equation}
\label{eq:edom-bound}
\begin{aligned}
E_{\mathrm{dom}}(t_s) \le &\underbrace{\sum_{k=1}^K \|\hat{b}_k - b_k^\star\|_2\, e^{\hat{\rho}_k \Delta_s}}_{\text{amplitude mismatch}} \\
&+ \underbrace{B_{\mathrm{dom}}\sum_{k=1}^K |\hat{\lambda}_k - \lambda_k^\star|\,\Delta_s\, e^{\varrho_k \Delta_s}}_{\text{eigenvalue mismatch}}.
\end{aligned}
\end{equation}
The transient holding error satisfies:
\begin{equation}
\label{eq:etrans-bound}
E_{\mathrm{trans}}(t_s) \le B_{\mathrm{trans}} + G_{\mathrm{trans}}(t_T)\sqrt{\mathcal{R}_{\mathrm{spec}}}.
\end{equation}
\end{theorem}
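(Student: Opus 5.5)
The proof chains Lipschitz constants to move from risk to latent error, then bounds the dominant and transient latent errors separately.

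\emph{Step 1: risk to latent error.} By realizability, $\theta^\star(t_s)=\phi^{-1}(z^\star(t_s))$. By Eq.~\eqref{eq:infer-theta}, $\hat\theta(t_s)=\phi^{-1}(\hat z(t_s))$. For each $(X,Y)$, Lipschitzness of $\ell$ and $g$ gives
\[
|\ell(g(X;\hat\theta),Y)-\ell(g(X;\theta^\star),Y)|\le L_\ell L_g\|\hat\theta(t_s)-\theta^\star(t_s)\|_2 .
\]
Taking the expectation under $P_{t_s}$ and applying the decoder's Lipschitz property yields $\mathcal{E}_{t_s}\le L_\ell L_g L_{\mathrm{dec}}\|\hat z(t_s)-z^\star(t_s)\|_2$. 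We then substitute $\hat z(t_s)=\hat z_{\mathrm{dom}}(t_s)+\hat z_{\mathrm{trans}}(t_T)$ and $z^\star=z^\star_{\mathrm{dom}}+z^\star_{\mathrm{trans}}$. The triangle inequality splits the latent error into $E_{\mathrm{dom}}+E_{\mathrm{trans}}$, which is Eq.~\eqref{eq:generalization_bound}.

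\emph{Step 2: dominant error.} Write $\tau=\Delta_s$ and use the anchored representations from Assumption~\ref{assum:lipschitz}. Since $\operatorname{Re}(\cdot)$ is non-expansive, we drop it. We then add and subtract $\sum_k b_k^\star e^{\hat\lambda_k\tau}$, exactly as in the proof of Theorem~\ref{thm:approximation_error} but with time origin $t_T$.

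The amplitude part is bounded by $\sum_k\|\hat b_k-b_k^\star\|_2|e^{\hat\lambda_k\tau}|\le\sum_k\|\hat b_k-b_k^\star\|_2 e^{\hat\rho_k\Delta_s}$.

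For the eigenvalue part, use the integral identity
\[
e^{\hat\lambda_k\tau}-e^{\lambda_k^\star\tau}=(\hat\lambda_k-\lambda_k^\star)\,\tau\int_0^1 e^{[\lambda_k^\star+s(\hat\lambda_k-\lambda_k^\star)]\tau}\,ds .
\]
The real part of the exponent is a convex combination, so it is at most $\varrho_k$. Hence $|e^{\hat\lambda_k\tau}-e^{\lambda_k^\star\tau}|\le|\hat\lambda_k-\lambda_k^\star|\Delta_s e^{\varrho_k\Delta_s}$. Together with $\|b_k^\star\|_2\le B_{\mathrm{dom}}$, this gives Eq.~\eqref{eq:edom-bound}.

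\emph{Step 3: transient error.} The triangle inequality gives
\[
E_{\mathrm{trans}}\le\|z^\star_{\mathrm{trans}}(t_s)\|_2+\|\hat z_{\mathrm{trans}}(t_T)\|_2\le B_{\mathrm{trans}}+\|\hat z_{\mathrm{trans}}(t_T)\|_2 ,
\]
since $t_s\ge t_T$. It remains to show $\|\hat z_{\mathrm{trans}}(t_T)\|_2\le G_{\mathrm{trans}}(t_T)\sqrt{\mathcal{R}_{\mathrm{spec}}}$.

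Drop $\operatorname{Re}$ and write $w_k^{\mathrm{trans}}=\sqrt{w_k^{\mathrm{trans}}}\cdot\sqrt{w_k^{\mathrm{trans}}}$. Then
\[
\Big\|\sum_k w_k^{\mathrm{trans}}\hat a_k e^{\hat\lambda_k t_T}\Big\|_2\le\sum_k\big(\sqrt{w_k^{\mathrm{trans}}}\,e^{\hat\rho_k t_T}\big)\big(\sqrt{w_k^{\mathrm{trans}}}\,\|\hat a_k\|_2\big),
\]
using $|e^{\hat\lambda_k t_T}|\le e^{\hat\rho_k t_T}$ for $t_T\ge 0$. Cauchy--Schwarz then bounds this by $G_{\mathrm{trans}}(t_T)\big(\sum_k w_k^{\mathrm{trans}}\|\hat a_k\|_2^2\big)^{1/2}$.

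The last factor is the first term of $\mathcal{R}_{\mathrm{spec}}$. Because $w_k^{\mathrm{dom}}\in(0,1)$, its variance is at most $1/4$, so the second term $1-\operatorname{Var}$ is nonnegative. Hence the first term is at most $\mathcal{R}_{\mathrm{spec}}$, completing Eq.~\eqref{eq:etrans-bound}.

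\emph{Main obstacle.} The delicate step is Step 3. It relies on the $\sqrt{w}$ splitting to match the definition of $G_{\mathrm{trans}}$, and on checking that the variance term cannot make $\mathcal{R}_{\mathrm{spec}}$ smaller than its energy term. A secondary point is keeping the time origin consistent: the paper's use of $\hat b_k$ (which absorbs $e^{\hat\lambda_k t_T}$) is what makes the dominant bound depend on $\Delta_s$ rather than on $t_s$.
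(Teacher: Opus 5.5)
Your proposal is correct and follows the paper's proof essentially step for step. The shared steps are: the Lipschitz chain through $\ell$, $g$ and $\phi^{-1}$ followed by the triangle split; the add-and-subtract of $b_k^\star e^{\hat\lambda_k\Delta_s}$ with the segment integral identity for the dominant branch; and, for the transient branch, the $\sqrt{w_k^{\mathrm{trans}}}$ splitting with Cauchy--Schwarz together with $1-\operatorname{Var}(w_k^{\mathrm{dom}})\ge 0$. You also explicitly justify that nonnegativity via $w_k^{\mathrm{dom}}\in(0,1)$, and explicitly take expectations of the pointwise Lipschitz bound, two details the paper only asserts.
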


\begin{proof}[Proof sketch]
Lipschitz continuity of $\ell$, $g$, and $\phi^{-1}$ gives $\mathcal{E}_{t_s} \le L_\ell L_g L_{\mathrm{dec}}\|\hat{z}(t_s) - z^\star(t_s)\|_2$; the triangle inequality with the inference rule yields Eq.~\eqref{eq:generalization_bound}. For the dominant branch, adding and subtracting $b_k^\star e^{\hat{\lambda}_k\Delta_s}$ and bounding the exponential difference via the integral identity $e^{\hat{\lambda}_k\Delta_s} - e^{\lambda_k^\star\Delta_s} = (\hat{\lambda}_k - \lambda_k^\star)\Delta_s\int_0^1 e^{[\lambda_k^\star + u(\hat{\lambda}_k - \lambda_k^\star)]\Delta_s}du$ gives Eq.~\eqref{eq:edom-bound}. For the transient branch, Cauchy--Schwarz yields $\|\hat{z}_{\mathrm{trans}}(t_T)\|_2 \le G_{\mathrm{trans}}(t_T)\sqrt{\sum_k w_k^{\mathrm{trans}}\|\hat{a}_k\|_2^2} \le G_{\mathrm{trans}}(t_T)\sqrt{\mathcal{R}_{\mathrm{spec}}}$, where the last step uses $1-\operatorname{Var}(w_k^{\mathrm{dom}}) \ge 0$. Full proofs are detailed in Supplementary~\ref{app:proof_thm2}.
\end{proof}

\noindent\textbf{Insights.} Theorem~\ref{thm:generalization_bound} shows that dominant extrapolation error is governed by anchor mismatch and spectral estimation error, both scaling with $e^{\hat{\rho}_k\Delta_s}$; $\mathcal{R}_{\mathrm{stab}}$ drives $\hat{\rho}_k\!\to\!0$, suppressing exponential divergence. For the transient branch, the proposed regularizers provide \emph{dual coverage} over positive modal growth: a growing mode ($\sigma_k>0$) with high $w_k^{\mathrm{dom}}$ is penalized by $\mathcal{R}_{\mathrm{stab}}$, while one routed to the transient branch ($w_k^{\mathrm{trans}}\!\approx\!1$) has its amplitude suppressed by $\mathcal{R}_{\mathrm{spec}}$. This joint effect keeps $G_{\mathrm{trans}}(t_T)$ well-conditioned in practice; when all transient-weighted modes are non-growing, $G_{\mathrm{trans}}(t_T) \le \sqrt{K}$. Crucially, freezing transients at $t_T$ renders $E_{\mathrm{trans}}$ independent of $\Delta_s$, structurally insulating future predictions from horizon-dependent noise amplification.

\begin{corollary}[Elimination of Exponential Error Amplification]
\label{cor:bounded_risk}
If $\operatorname{Re}(\lambda_k^\star)\le 0$ and $\operatorname{Re}(\hat{\lambda}_k)\le 0$ for all $k$, defining $\varepsilon_b = \sum_{k=1}^K\|\hat{b}_k - b_k^\star\|_2$ and $\varepsilon_\lambda = \sum_{k=1}^K|\hat{\lambda}_k - \lambda_k^\star|$, then:
\begin{equation}
\label{eq:stable-dom-bound}
E_{\mathrm{dom}}(t_s) \le \varepsilon_b + B_{\mathrm{dom}}(t_s - t_T)\,\varepsilon_\lambda.
\end{equation}
Stable dominant dynamics thus eliminate exponential error compounding; under exact spectral recovery ($\varepsilon_\lambda=0$), the error is bounded by $\varepsilon_b$ regardless of the horizon.
\end{corollary}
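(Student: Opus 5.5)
The plan is to specialize the dominant-branch bound of Theorem~\ref{thm:generalization_bound} (Eq.~\eqref{eq:edom-bound}) to the stable regime. Under the hypotheses $\operatorname{Re}(\lambda_k^\star)\le 0$ and $\operatorname{Re}(\hat{\lambda}_k)\le 0$ for every $k$, we immediately get $\hat{\rho}_k=\max(\operatorname{Re}(\hat{\lambda}_k),0)=0$ and $\varrho_k=\max(\operatorname{Re}(\lambda_k^\star),\operatorname{Re}(\hat{\lambda}_k),0)=0$. Hence every exponential factor $e^{\hat{\rho}_k\Delta_s}$ and $e^{\varrho_k\Delta_s}$ in Eq.~\eqref{eq:edom-bound} equals one.

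Substituting these factors, the amplitude-mismatch term collapses to $\sum_k\|\hat{b}_k-b_k^\star\|_2=\varepsilon_b$. The eigenvalue-mismatch term collapses to $B_{\mathrm{dom}}\,\Delta_s\sum_k|\hat{\lambda}_k-\lambda_k^\star|=B_{\mathrm{dom}}(t_s-t_T)\varepsilon_\lambda$. This is exactly Eq.~\eqref{eq:stable-dom-bound}.

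For robustness, I would also note how to verify this directly rather than only by citation. Write $\hat{z}_{\mathrm{dom}}(t_T+\Delta_s)-z^\star_{\mathrm{dom}}(t_T+\Delta_s)$ as the real part of
\[
\sum_k(\hat{b}_k-b_k^\star)e^{\hat{\lambda}_k\Delta_s}+\sum_k b_k^\star\bigl(e^{\hat{\lambda}_k\Delta_s}-e^{\lambda_k^\star\Delta_s}\bigr).
\]
Then use non-expansiveness of $\operatorname{Re}$ and the triangle inequality. In the first sum, $|e^{\hat{\lambda}_k\Delta_s}|=e^{\operatorname{Re}(\hat{\lambda}_k)\Delta_s}\le 1$.

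The only point needing care is the second sum. By the integral identity from the proof sketch of Theorem~\ref{thm:generalization_bound}, the interpolated exponent $\lambda_k^\star+u(\hat{\lambda}_k-\lambda_k^\star)$, $u\in[0,1]$, is a convex combination of two points with nonpositive real part. It therefore has nonpositive real part itself, so the integrand has modulus at most one. This gives $|e^{\hat{\lambda}_k\Delta_s}-e^{\lambda_k^\star\Delta_s}|\le|\hat{\lambda}_k-\lambda_k^\star|\Delta_s$. Combining this with $\|b_k^\star\|_2\le B_{\mathrm{dom}}$ yields the second term.

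The final remark is immediate: if $\varepsilon_\lambda=0$, the horizon-dependent term vanishes, leaving $E_{\mathrm{dom}}(t_s)\le\varepsilon_b$ uniformly in $t_s$. I expect no real obstacle. The convexity observation on the real parts of the interpolated exponent is the one step worth stating explicitly.
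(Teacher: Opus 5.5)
Your proposal is correct and matches the paper's (implicit) argument: the corollary follows by setting $\hat{\rho}_k=\varrho_k=0$ in Eq.~\eqref{eq:edom-bound} under the stated stability hypotheses. Your optional direct check is also the same reasoning as Step~3 of the supplementary proof of Theorem~\ref{thm:generalization_bound}: there, linearity of the interpolated exponent's real part in $u$ places its maximum at an endpoint, which is your convex-combination observation.
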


\begin{table*}[t]
\centering
\caption{Performance comparison on \textbf{CTDG datasets}. The classification tasks report error rates (\%) except for the AUC for the Twitter dataset. The regression tasks report MAE. '-' implies that the method does not support the task.}
\label{tab:ctdg_results}
\begin{tabularx}{0.9\textwidth}{lYYYYYY} 
\toprule
\multirow{2}{*}{Methods} & \multicolumn{4}{c}{Classification} & \multicolumn{2}{c}{Regression} \\
\cmidrule(lr){2-5} \cmidrule(lr){6-7}
 & 2-Moons-C $\downarrow$ & Rot-MNIST-C $\downarrow$ & Twitter $\uparrow$ & Yearbook $\downarrow$ & Cyclone  $\downarrow$ & House-C  $\downarrow$\\
\midrule
Offline       & 13.5 $\pm$ 0.3 & 6.6 $\pm$ 0.2 & 0.54 $\pm$ 0.09 & 8.6 $\pm$ 1.0 & 18.7 $\pm$ 1.4 & 19.9 $\pm$ 0.1 \\
LastDomain     & 55.7 $\pm$ 0.5 & 74.2 $\pm$ 0.9 & 0.54 $\pm$ 0.12 & 11.3 $\pm$ 1.3 & 22.3 $\pm$ 0.7 & 20.6 $\pm$ 0.7 \\
IncFinetune    & 51.9 $\pm$ 0.7 & 57.1 $\pm$ 1.4 & 0.52 $\pm$ 0.01 & 11.0 $\pm$ 0.8 & 19.9 $\pm$ 0.7 & 20.6 $\pm$ 0.2 \\
IRM~\cite{arjovsky2019invariant}            & 15.6 $\pm$ 0.2 & 8.6 $\pm$ 0.4 & 0.53 $\pm$ 0.11 & 8.3 $\pm$ 0.5 & 18.0 $\pm$ 0.8 & 19.8 $\pm$ 0.2 \\
V-REx~\cite{krueger2021out}           & 12.8 $\pm$ 0.2 & 8.6 $\pm$ 0.3 & 0.58 $\pm$ 0.05 & 8.9 $\pm$ 0.5 & 17.7 $\pm$ 0.5 & 20.2 $\pm$ 0.1 \\
CIDA~\cite{wang2020continuously}            & 18.7 $\pm$ 2.0 & 8.3 $\pm$ 0.7 & 0.63 $\pm$ 0.03 & 8.4 $\pm$ 0.8 & 17.0 $\pm$ 0.4 & 10.2 $\pm$ 1.0 \\
TKNets~\cite{zeng2024generalizing}          & 39.6 $\pm$ 1.2 & 37.7 $\pm$ 2.0 & 0.57 $\pm$ 0.04 & 8.4 $\pm$ 0.3 & - & - \\
DRAIN~\cite{bai2023temporal}           & 53.2 $\pm$ 0.9 & 59.1 $\pm$ 2.3 & 0.57 $\pm$ 0.04 & 10.5 $\pm$ 1.0 & 23.6 $\pm$ 0.5 & 9.8 $\pm$ 0.1 \\
DRAIN-$\Delta t$~\cite{bai2023temporal}   & 46.2 $\pm$ 0.8 & 57.2 $\pm$ 1.8 & 0.59 $\pm$ 0.02 & 11.0 $\pm$ 1.2 & 26.2 $\pm$ 4.6 & 9.9 $\pm$ 0.1 \\
DeepODE~\cite{chen2018neural}          & 17.8 $\pm$ 5.6 & 48.6 $\pm$ 3.2 & 0.64 $\pm$ 0.02 & 13.0 $\pm$ 2.1 & 18.5 $\pm$ 3.3 & 10.7 $\pm$ 0.4 \\
NeuralLio~\cite{cai2025continuous} & 4.5 $\pm$ 1.3 & 5.4 $\pm$ 1.1 & 0.68 $\pm$ 0.01 &  \underline{6.1 $\pm$ 1.2}  & \underline{16.2 $\pm$ 0.2}  & \underline{9.0 $\pm$ 0.4}\\
Koodos~\cite{cai2024continuous}   & \underline{2.8 $\pm$ 0.7} & \underline{4.6 $\pm$ 0.1} & \underline{0.71 $\pm$ 0.02} & {6.6 $\pm$ 1.3} & {16.4 $\pm$ 0.3} & \underline{9.0 $\pm$ 0.2} \\
\midrule
\rowcolor{gray!15} \textbf{FreKoo++} (ours)  & \textbf{1.3 $\pm$ 0.2} & \textbf{3.8 $\pm$ 0.8} & \textbf{0.72 $\pm$ 0.01} & \textbf{5.4 $\pm$ 0.4} & \textbf{16.0 $\pm$ 0.2} & \textbf{8.8 $\pm$ 0.1}  \\
\bottomrule
\end{tabularx}
\end{table*}

\begin{table*}[t]
\centering
\caption{Performance comparison on \textbf{DTDG datasets}. Classification tasks report error rates (\%), and regression tasks report MAE. ``--'' indicates that the method does not support the task.}
\label{tab:dtdg_results}
\setlength{\tabcolsep}{2.1pt}
\begin{tabularx}{0.93\textwidth}{lYYYYYYY}
\toprule
\multirow{2}{*}{Methods} & \multicolumn{5}{c}{Classification} & \multicolumn{2}{c}{Regression} \\
\cmidrule(lr){2-6} \cmidrule(lr){7-8}
& 2-Moons-D $\downarrow$ & Rot-MNIST-D $\downarrow$ & ONP $\downarrow$ & Shuttle $\downarrow$ & Elec2 $\downarrow$ & House-D $\downarrow$ & Appliance $\downarrow$ \\
\midrule
Offline & 22.4 $\pm$ 4.6 & 18.6 $\pm$ 4.0 & 33.8 $\pm$ 0.6 & 0.77 $\pm$ 0.10 & 23.0 $\pm$ 3.1 & 11.0 $\pm$ 0.36 & 10.2 $\pm$ 1.1 \\
LastDomain & 14.9 $\pm$ 0.9 & 17.2 $\pm$ 3.1 & 36.0 $\pm$ 0.2 & 0.91 $\pm$ 0.18 & 25.8 $\pm$ 0.6 & 10.3 $\pm$ 0.16 & 9.1 $\pm$ 0.7 \\
IncFinetune & 16.7 $\pm$ 3.4 & 10.1 $\pm$ 0.8 & 34.0 $\pm$ 0.3 & 0.83 $\pm$ 0.07 & 27.3 $\pm$ 4.2 & 9.7 $\pm$ 0.01 & 8.9 $\pm$ 0.5 \\
CDOT~\cite{ortiz2019cdot} & 9.3 $\pm$ 1.0 & 14.2 $\pm$ 1.0 & 34.1 $\pm$ 0.0 & 0.94 $\pm$ 0.17 & 17.8 $\pm$ 0.6 & -- & -- \\
CIDA~\cite{wang2020continuously} & 10.8 $\pm$ 1.6 & 9.3 $\pm$ 0.7 & 34.7 $\pm$ 0.6 & -- & 14.1 $\pm$ 0.2 & 9.7 $\pm$ 0.06 & 8.7 $\pm$ 0.2 \\
GI~\cite{nasery2021training} & 3.5 $\pm$ 1.4 & 7.7 $\pm$ 1.3 & 36.4 $\pm$ 0.8 & 0.29 $\pm$ 0.05 & 16.9 $\pm$ 0.7 & 9.6 $\pm$ 0.02 & 8.2 $\pm$ 0.6 \\
LSSAE~\cite{qin2022generalizing} & 9.9 $\pm$ 1.1 & 9.8 $\pm$ 3.6 & 38.8 $\pm$ 1.1 & 0.22 $\pm$ 0.01 & 16.1 $\pm$ 1.4 & -- & -- \\
DDA~\cite{zeng2023foresee} & 9.7 $\pm$ 1.5 & 7.6 $\pm$ 0.7 & 34.0 $\pm$ 0.3 & 0.21 $\pm$ 0.02 & 12.8 $\pm$ 1.1 & 9.5 $\pm$ 0.12 & 6.1 $\pm$ 0.1 \\
DRAIN~\cite{bai2023temporal} & 3.2 $\pm$ 1.2 & 7.5 $\pm$ 1.1 & 38.3 $\pm$ 1.2 & 0.26 $\pm$ 0.05 & 12.7 $\pm$ 0.8 & 9.3 $\pm$ 0.14 & 6.4 $\pm$ 0.4 \\
EvoS~\cite{xie2024evolving} & 3.0 $\pm$ 0.4 & 7.3 $\pm$ 0.6 & 35.4 $\pm$ 0.2 & 0.23 $\pm$ 0.01 & 11.8 $\pm$ 0.5 & 9.8 $\pm$ 0.10 & 7.2 $\pm$ 0.1 \\
Koodos~\cite{cai2024continuous} & 1.3 $\pm$ 0.4 & 7.0 $\pm$ 0.3 & 33.5 $\pm$ 0.4 & 0.24 $\pm$ 0.04 & -- & \textbf{8.8 $\pm$ 0.19} & 4.8 $\pm$ 0.3 \\
FreKoo~\cite{yu2025learning} & \underline{1.0 $\pm$ 0.3} & \underline{6.9 $\pm$ 0.7} & \underline{32.3 $\pm$ 0.3} & \underline{0.20 $\pm$ 0.02} & \textbf{9.2 $\pm$ 0.7} & \underline{9.0 $\pm$ 0.11} & \underline{4.0 $\pm$ 0.1} \\
\midrule
\rowcolor{gray!15} \textbf{FreKoo++} (ours) & \textbf{0.9 $\pm$ 0.8} & \textbf{5.5 $\pm$ 0.7} & \textbf{31.1 $\pm$ 0.2}  & \textbf{0.19 $\pm$ 0.01} &  \underline{10.1 $\pm$ 1.0} & 9.3 $\pm$ 0.10  & \textbf{3.9 $\pm$ 0.0} \\
\bottomrule
\end{tabularx}
\end{table*}

\section{Experiments}
\label{sec:experiments}

\subsection{Experimental Settings}
\label{sec:exp_settings}

\subsubsection{Datasets}
We evaluate FreKoo++ across both CTDG and DTDG settings. For CTDG, we benchmark on six irregularly sampled temporal datasets, including continuous variants of \emph{2-Moons} and \emph{Rot-MNIST}~\cite{deng2012mnist}, \emph{Twitter}~\cite{zhao2017feature,bai2023sign}, \emph{Yearbook}~\cite{yao2022wild}, \emph{Cyclone}~\cite{chen2018rotation}, and \emph{House}~\cite{cai2024continuous}. For DTDG, we follow the evaluation protocol of FreKoo~\cite{yu2025learning} and test on seven uniformly sampled temporal datasets, including \emph{2-Moons}, \emph{Rot-MNIST}, \emph{ONP}, \emph{Shuttle}, \emph{Elec2}, \emph{HousePrices}, and \emph{ApplianceEnergy}. Because several benchmarks share similar names across the two paradigms but differ in their construction protocols, we append the suffix `-C' to CTDG variants and `-D' to DTDG variants for clarity. Detailed dataset statistics and construction procedures are provided in Supplementary~\ref{app:datasets}.

\subsubsection{Baselines and Implementations}
For CTDG, we compare against time-agnostic baselines, including \emph{Offline}, \emph{LastDomain}, and \emph{IncFinetune}; domain generalization regularizers \emph{IRM}~\cite{arjovsky2019invariant} and \emph{V-REx}~\cite{krueger2021out}; continuous domain adaptation methods \emph{CIDA}~\cite{wang2020continuously}; and temporal modeling baselines \emph{TKNets}~\cite{zeng2024generalizing}, \emph{DRAIN}~\cite{bai2023temporal}, \emph{DRAIN-$\Delta t$}~\cite{bai2023temporal}, \emph{DeepODE}~\cite{chen2018neural}, \emph{NeuralLio}~\cite{cai2025continuous} and \emph{Koodos}~\cite{cai2024continuous}. For DTDG, we additionally compare with the full suite of baselines adopted in FreKoo~\cite{yu2025learning}. Note that FreKoo is evaluated exclusively under DTDG, as its Discrete Fourier Transform (DFT) requires uniformly spaced timestamps. Detailed descriptions are provided in Supplementary~\ref{app:baselines}.
All methods follow the same data splits, preprocessing, task backbones, and evaluation protocols within each benchmark setting. FreKoo++ employs lightweight MLPs as the encoder $\phi$ and decoder $\phi^{-1}$, with the number of Koopman modes $K$ set equal to the latent dimension $m$. Detailed implementation, hyperparameter, and training details are provided in Supplementary~\ref{app:implementation}.

\subsection{Main Results}
\label{sec:exp_main}

\subsubsection{Continuous Temporal Domain Generalization}
Table~\ref{tab:ctdg_results} presents the main performance evaluation across all CTDG benchmarks, where source domains arrive at irregular timestamps and target domains demand arbitrary-horizon forecasting. FreKoo++ consistently outperforms existing baselines in both classification and regression tasks, confirming that modeling parameter trajectories within a continuous spectral-dynamical space is more effective than either omitting temporal ordering or enforcing discrete transition rules. 

Specifically, time-agnostic approaches (e.g., Offline, LastDomain, IncFinetune) flatten historical domains into static representations, thereby obscuring persistent temporal drift amidst localized, domain-specific fluctuations. Discrete temporal baselines (e.g., DRAIN, DRAIN-$\Delta t$) incorporate parameter transitions, yet their stepwise prediction mechanisms remain bound to local iterations and lack a global spectral description of the trajectory. While continuous-time architectures (e.g., DeepODE, Koodos) naturally handle irregular observation intervals, they do not utilize the learned spectrum to separate persistent and transient parameter modes. In contrast, FreKoo++ parameterizes parameter evolution as a superposition of eigenvalue-indexed Koopman modes, where the real and imaginary components explicitly quantify temporal persistence and oscillatory frequency, respectively. This continuous spectral formulation enables the analytical extrapolation of stable dominant dynamics while insulating future predictions against volatile, non-generalizable noise. These consistent empirical gains validate that our proposed continuous-time dynamics and spectral disentanglement are exceptionally effective for continuous domains and long-horizon extrapolation.

\subsubsection{Discrete Temporal Domain Generalization}
Table~\ref{tab:dtdg_results} evaluates FreKoo++ under DTDG benchmarks with uniformly spaced temporal grids. Although designed for continuous irregularly sampled domains, FreKoo++ remains highly competitive under uniformly sampled DTDG benchmarks. It improves upon FreKoo on five of seven datasets, including 2-Moons-D, Rot-MNIST-D, ONP, Shuttle, and Appliance, while FreKoo remains stronger on Elec2 and House-D. These results demonstrate the strong backward compatibility and versatility of FreKoo++, proving that its continuous spectral-dynamical formulation remains exceptionally effective regardless of whether temporal domains are regularly or irregularly sampled.

\subsection{Ablation Studies}
\label{sec:exp_ablation}
To systematically validate the design rationale of FreKoo++, we conduct ablation studies on representative CTDG benchmarks across two complementary dimensions: \emph{method-level architectural variants} and \emph{objective-level regularization variants}. As summarized in Table~\ref{tab:ablation}, the empirical results demonstrate that every component in FreKoo++ plays an indispensable role in securing robust cross-temporal generalization.

The \emph{method-level} variants confirm that our core architectural choices provide decisive performance guarantees. First, enforcing $\sigma_k \equiv 0$ (\textit{Fixed spectrum}) restricts modes to purely unattenuated oscillations ($\lambda_k = j\omega_k$) and neglects temporal growth and decay dynamics. This causes catastrophic degradation across all tasks: classification error surges from $1.3\%$ to $9.7\%$ on 2-Moons-C, Twitter AUC drops sharply to $0.64$, and Cyclone MAE degrades to $17.2$. These drops prove that modeling growth and decay rates ($\sigma_k$) is as vital as modeling oscillatory frequencies ($\omega_k$) for tracking parameter evolution. Second, replacing adaptive soft gates with \textit{Hard gating} severely impairs generalization (e.g., Cyclone MAE worsens to $18.7$), confirming that rigid frequency cutoffs introduce severe boundary artifacts. Similarly, locking gating thresholds via \textit{Frozen gating} consistently degrades performance, validating the necessity of dynamically adapting spectral boundaries to data-specific temporal profiles.

The \emph{objective-level} variants demonstrate that all regularizers and loss terms contribute synergistically to trajectory fidelity and extrapolation stability. Omitting transient spectral regularization ($\mathcal{R}_{\mathrm{spec}}$) triggers the most severe performance drop (2-Moons-C error increases to $6.7\%$), directly validating Proposition~\ref{prop:energy} by confirming that penalizing transient amplitudes $\|a_k\|_2^2$ is essential for filtering domain-specific noise. Disabling stability regularization ($\mathcal{R}_{\mathrm{stab}}$) impairs accuracy on 2-Moons-C ($1.3\%$ vs. $2.5\%$), supporting Theorem~\ref{thm:generalization_bound} and Corollary~\ref{cor:bounded_risk} by verifying that penalizing positive growth rates ($\sigma_k > 0$) prevents dominant extrapolation modes from diverging exponentially. Finally, removing either $\mathcal{L}_{\mathrm{rec}}^{(C)}$ or $\mathcal{L}_{\mathrm{fit}}^{(C)}$ degrades performance across all datasets, confirming their complementary roles in preserving auto-encoding fidelity and continuous modal trajectory alignment.

\begin{table}[t]
\centering
\caption{Ablation studies on representative CTDG benchmarks. We report error rate (\%, $\downarrow$) for 2-Moons-C, AUC ($\uparrow$) for Twitter, and MAE ($\downarrow$) for Cyclone. The best performance in each column is in \textbf{bold}.}
\label{tab:ablation}
\setlength{\tabcolsep}{8.5pt}
\begin{tabular}{lccc}
\toprule
Variant & 2-Moons-C & Twitter & Cyclone \\
\cmidrule(lr){1-1} \cmidrule(lr){2-2} \cmidrule(lr){3-3} \cmidrule(lr){4-4}
& Err. $\downarrow$ & AUC $\uparrow$ & MAE $\downarrow$ \\
\midrule
\multicolumn{4}{l}{\textit{Method-level variants}} \\
Fixed spectrum  & 9.7 $\pm$ 1.5 & 0.64 $\pm$ 0.11 & 17.2 $\pm$ 0.7 \\
Hard gating & 6.7 $\pm$ 2.8 & 0.67 $\pm$ 0.09 & 18.7 $\pm$ 0.3 \\
Frozen gating  & 2.8 $\pm$ 0.3 & 0.70 $\pm$ 0.01 & 16.2 $\pm$ 0.2 \\
\midrule
\multicolumn{4}{l}{\textit{Objective-level variants}} \\
w/o  $\mathcal{L}_{\mathrm{rec}}^{(C)}$ & 1.6 $\pm$ 0.5 & 0.70 $\pm$ 0.02 & 16.4 $\pm$ 0.3 \\
w/o $\mathcal{L}_{\mathrm{fit}}^{(C)}$ & 2.0 $\pm$ 0.5 & 0.68 $\pm$ 0.02 & 16.2 $\pm$ 0.2 \\
w/o  $\mathcal{R}_{\mathrm{stab}}$ & 2.5 $\pm$ 1.4 & 0.70 $\pm$ 0.01 & 16.1 $\pm$ 0.1 \\
w/o $\mathcal{R}_{\mathrm{spec}}$   & 6.7 $\pm$ 3.3 & 0.69 $\pm$ 0.01 & 16.8 $\pm$ 0.4 \\
\midrule
\textbf{FreKoo++} & \textbf{1.3 $\pm$ 0.2} & \textbf{0.72 $\pm$ 0.01} & \textbf{16.0 $\pm$ 0.2} \\
\bottomrule
\end{tabular}
\end{table}

\begin{figure*}[!t]
\centering
\subfigure[Dominant--transient decomposition]{
    \includegraphics[width=0.31\linewidth]{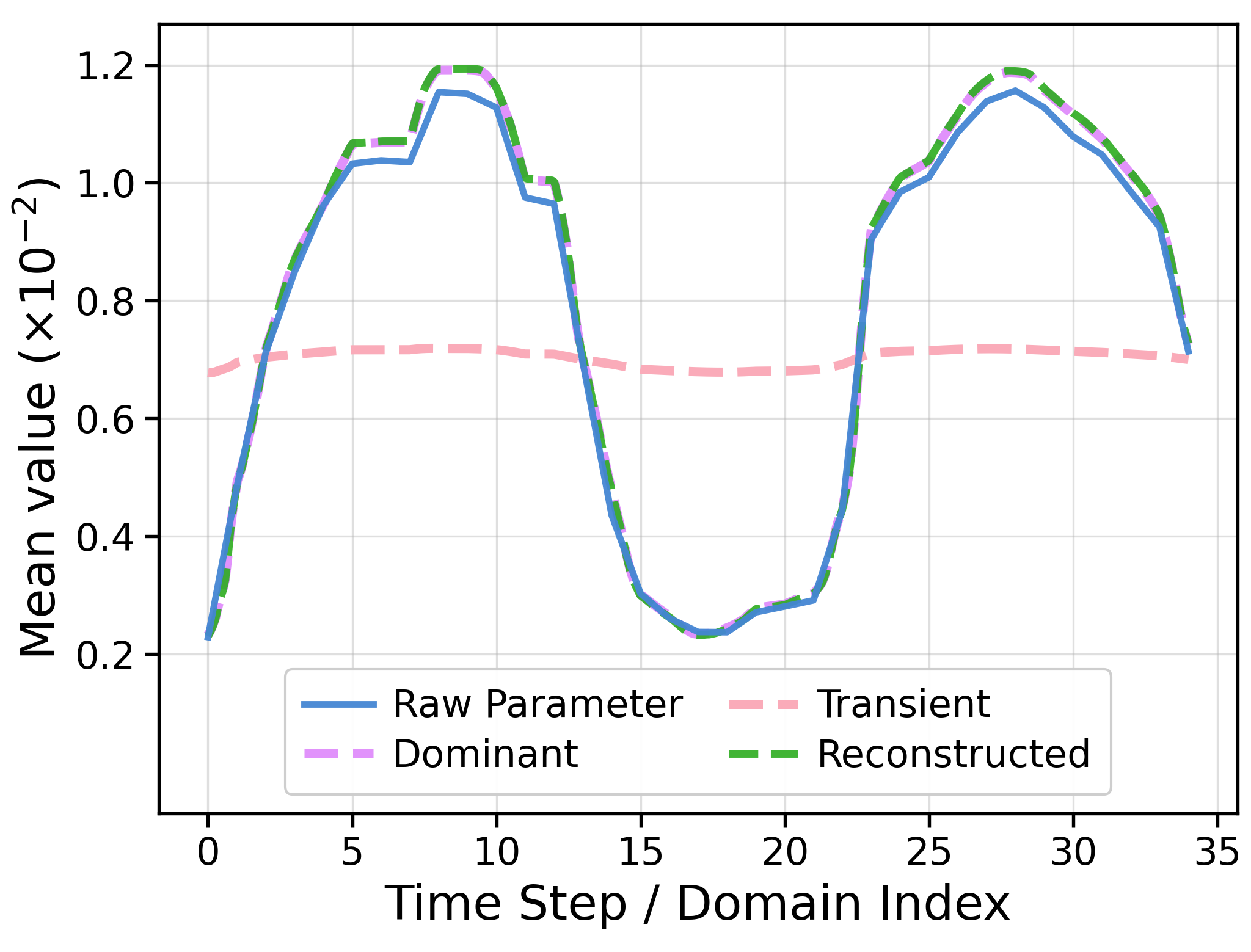}
    \label{subfig:moons_param_trajectory}
    }
\subfigure[Adaptive spectral gating]{
    \includegraphics[width=0.31\linewidth]{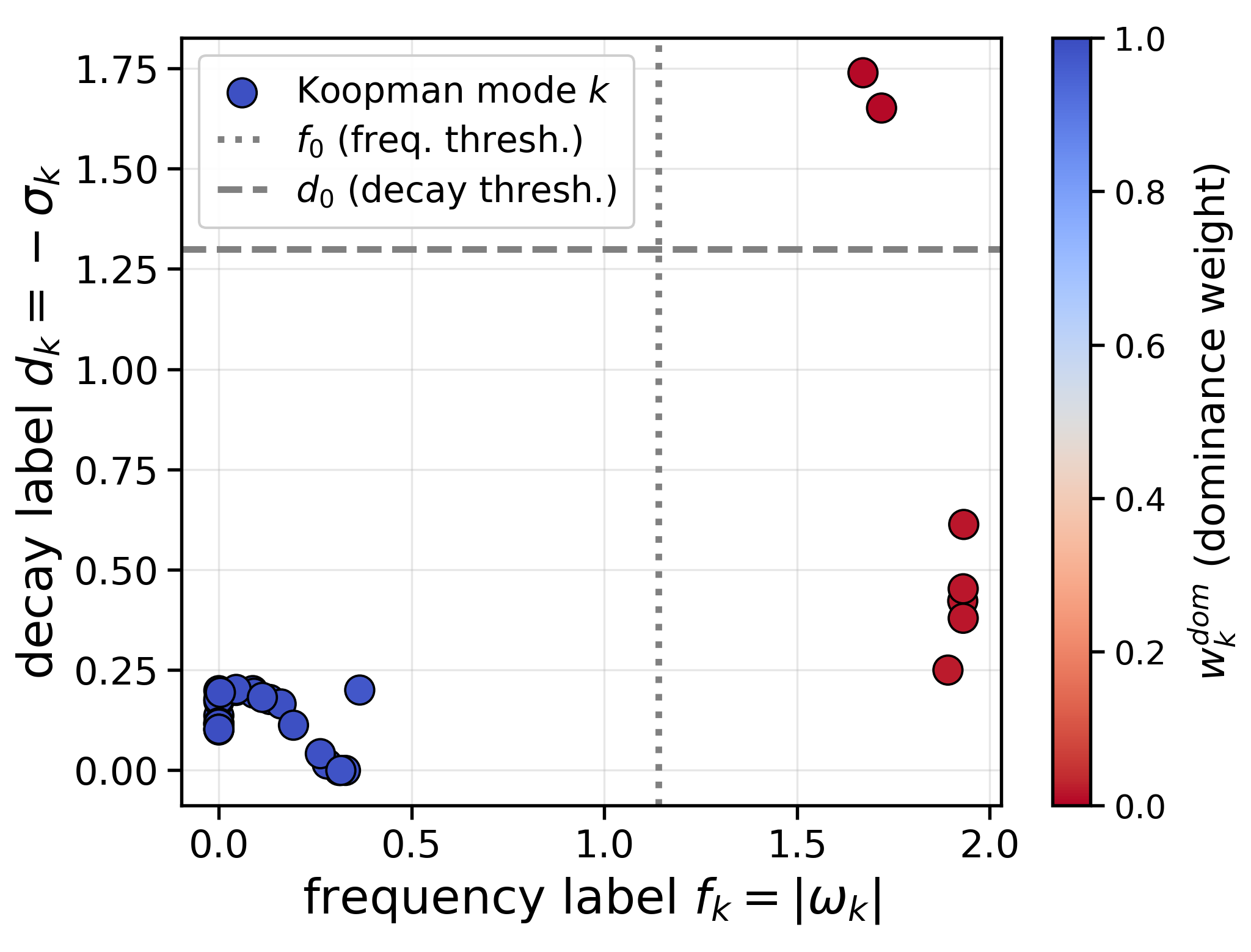}
    \label{subfig:moons_transient_spectrum}
    }
\subfigure[Future trajectory extrapolation]{
       \includegraphics[width=0.31\linewidth]{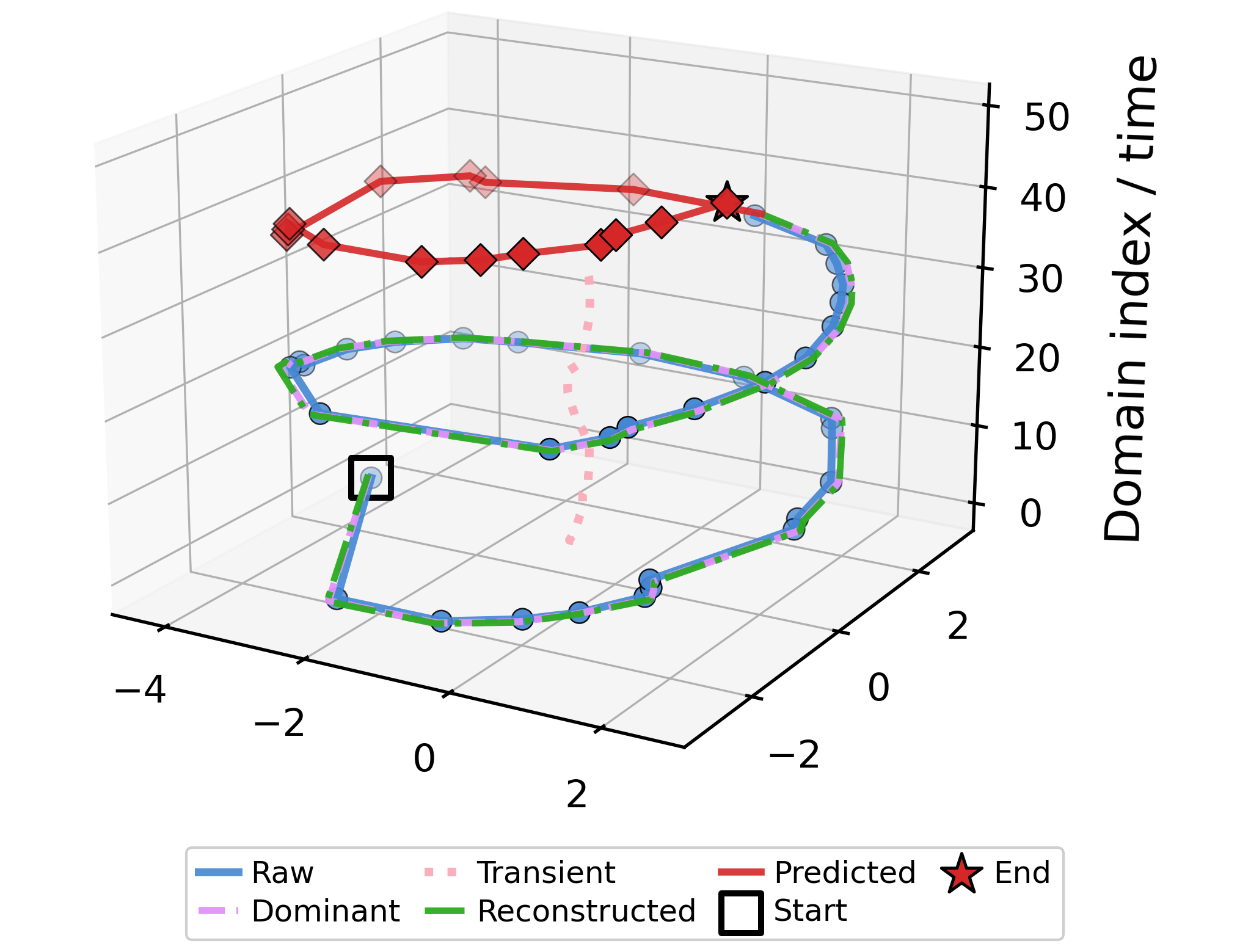}
       \label{subfig:moons_predicted_trajectory}
}
\caption{Qualitative spectral analysis on 2-Moons-C. (a) The parameter trajectory is decomposed into a smooth dominant component and a short-lived transient residual. (b) Learned modes are organized in the frequency--decay plane, where adaptive gates separate persistent modes from transient ones without fixed DFT bins or manual frequency thresholds. (c) Continuous-time inference extrapolates the dominant dynamics while freezing the transient component, producing a coherent future trajectory.}
\label{fig:visualization}
\end{figure*}
\subsection{Qualitative Spectral Analysis}
\label{sec:exp_spectral}

To provide qualitative intuition for our continuous spectral mechanism, Figure~\ref{fig:visualization} visualizes the learned modal decomposition on 2-Moons-C across three complementary perspectives. Notably, all panels are generated from a single trained model to ensure consistency.

\subsubsection{Parameter-space trajectory decomposition}
Figure~\ref{subfig:moons_param_trajectory} illustrates how raw parameter evolution is cleanly disentangled into distinct spectral branches. The smooth dominant component tracks the persistent, recurring backbone of the parameter trajectory, whereas the transient component isolates localized, short-lived residual fluctuations. The high reconstruction fidelity confirms that our continuous modal parameterization faithfully preserves predictable long-term dynamics without overfitting domain-specific noise.

\subsubsection{Spectral distribution and adaptive gating}
Figure~\ref{subfig:moons_transient_spectrum} plots the learned modal coordinates in the frequency--decay plane $(f_k, d_k)$, where $f_k = |\omega_k|$ and $d_k = -\sigma_k$. Bypassing rigid DFT grids and manual frequency cutoffs, the learnable soft gates carve out a smooth, data-driven dominance boundary in the continuous spectral space. Persistent modes with low frequency and minimal decay ($\sigma_k \approx 0$) receive high dominance weights ($w_k^{\mathrm{dom}} \approx 1$), whereas high-frequency or rapidly decaying modes are automatically routed to the transient branch.

\subsubsection{Extrapolated space-time trajectory}
Figure~\ref{subfig:moons_predicted_trajectory} projects continuous parameter evolution into a 3D space-time representation, highlighting FreKoo++'s exceptional long-horizon extrapolation capability governed by Eqs.~\eqref{eq:infer-z}--\eqref{eq:infer-theta}. Far beyond the historical training window ($t_s \gg t_T$), the extrapolated dominant segment seamlessly extends the smooth cyclical trajectory into distant future domains without trajectory collapse or numerical divergence. Meanwhile, the transient residual is held constant at $t_T$ as stationary local context. This visually validates both our long-term extrapolation stability and asymmetric inference protocol, as propagating persistent macro-dynamics secures reliable far-future forecasting while strictly preventing volatile local noise from corrupting long-term domain generalization.

\begin{figure*}[h]
\centering
\subfigure[Koodos]{
    \includegraphics[width=1\linewidth]{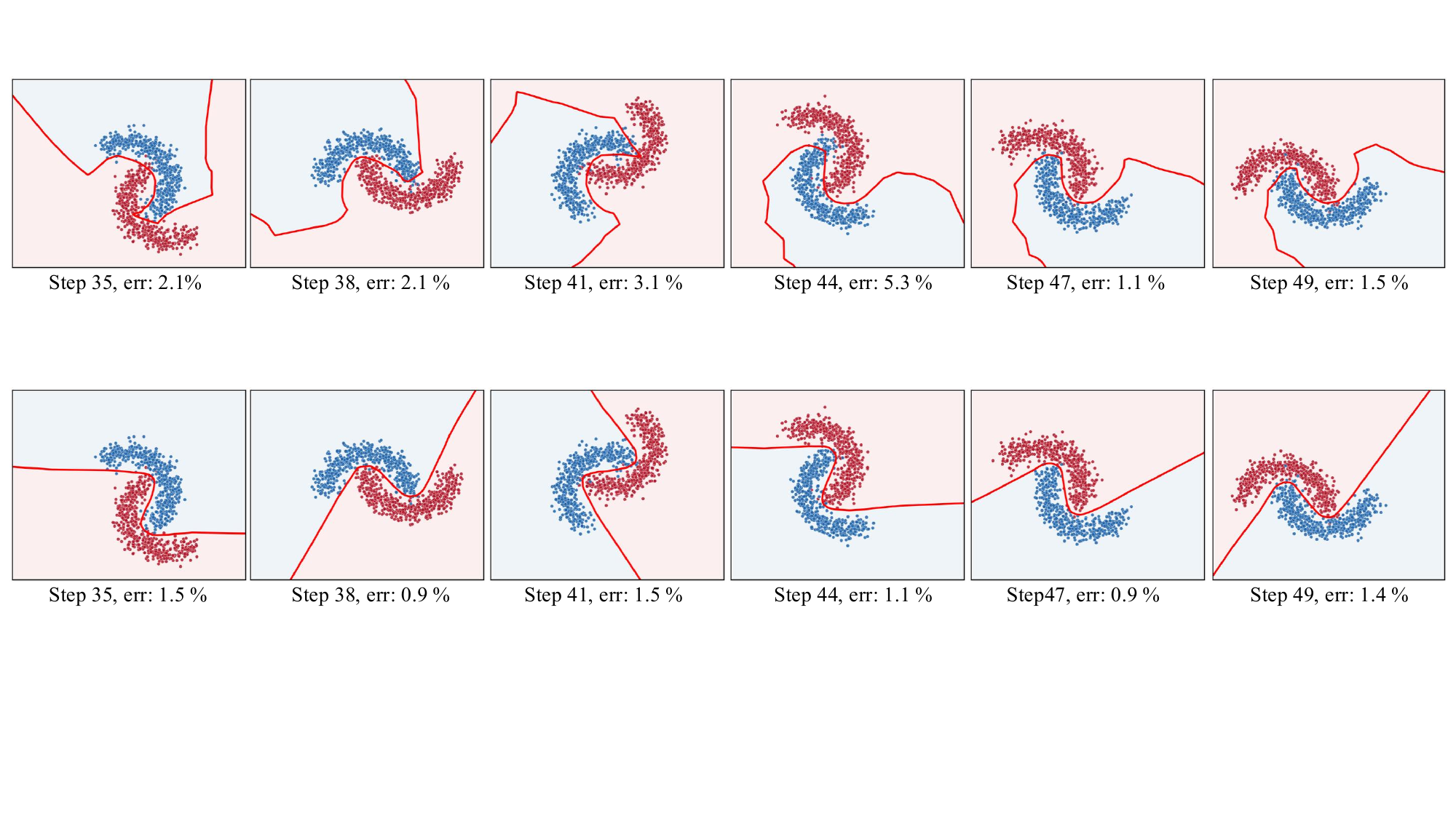}
    \label{fig:koodos_cb}
    }
\subfigure[FreKoo++]{
    \includegraphics[width=1\linewidth]{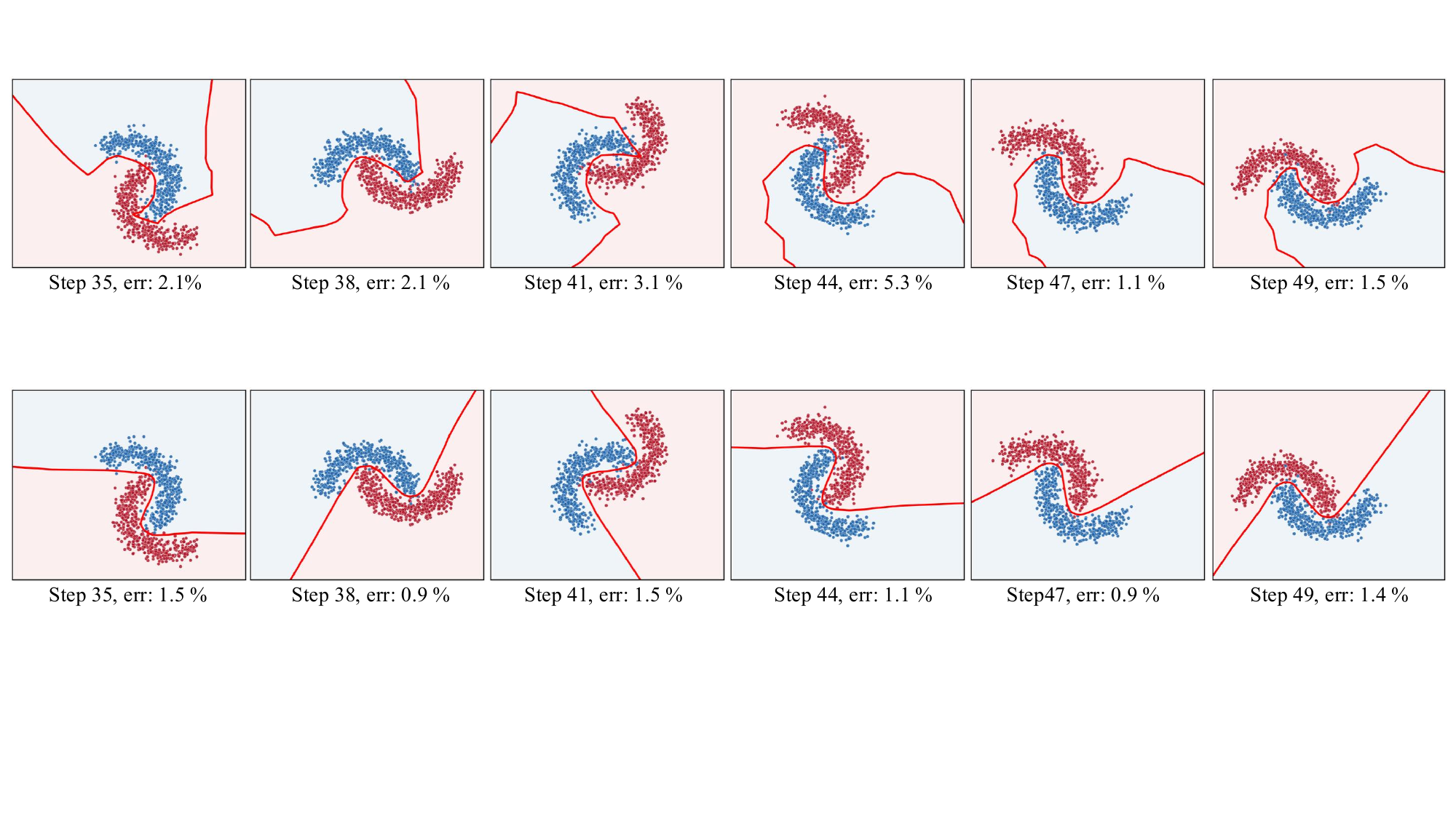}
    \label{fig:frekoo++_cb}
    }
\caption{Qualitative comparison of decision boundaries learned on 2-Moons-C across six unseen future domains. Red curves denote learned decision boundaries alongside the corresponding classification error rates. 
}
\label{fig:boundary_visualization}
\end{figure*}

\subsection{Qualitative Analysis of Decision Boundaries}
\label{sec:decision_boundary}

To evaluate how continuous parameter trajectory evolution translates into task-level generalization, Figure~\ref{fig:boundary_visualization} visualizes the learned decision boundaries on 2-Moons-C across six unseen future temporal domains (steps $35$ to $49$). We compare FreKoo++ against the strongest continuous-time baseline Koodos~\cite{cai2024continuous}.
As shown in Figure~\ref{fig:koodos_cb}, Koodos produces convoluted and irregular decision boundaries. As time extends further beyond the historical training window (e.g., at step $44$), Koodos suffers from severe boundary degradation, causing the classification error to surge to $5.3\%$. This instability stems from unconstrained continuous trajectory modeling, which overfits transient local noise and amplifies integration errors over longer extrapolation horizons. 
In contrast, FreKoo++ (Figure~\ref{fig:frekoo++_cb}) yields remarkably smooth, globally consistent, and physically intuitive decision boundaries that naturally rotate in synchronization with the underlying moon-cluster dynamics. Crucially, as the prediction horizon extends into distant future domains (steps $44$ and $49$), FreKoo++ maintains exceptional accuracy without experiencing performance degradation. These results are consistent with the stability predicted by Theorem~\ref{thm:generalization_bound} and Corollary~\ref{cor:bounded_risk}, demonstrating that enforcing dominant mode stability ($\sigma_k^{\mathrm{dom}} \le 0$) effectively prevents boundary distortion and guarantees generalization robustness over continuous time. Additional decision boundary visualizations are provided in Supplementary~\ref{app:tsne_analysis}.

\section{Conclusion}
In this paper, we presented FreKoo++, a continuous-time spectral-dynamical framework for temporal domain generalization. Motivated by the spectral-dynamical relation of continuous Koopman dynamics, FreKoo++ represents the evolution of task parameters through learnable modal components whose eigenvalues jointly encode oscillatory frequency and temporal growth or decay. This formulation removes the dependence on uniformly sampled temporal domains and fixed discrete transitions, allowing parameter evolution to be learned from irregular observations and evaluated at arbitrary future timestamps. In addition, FreKoo++ further introduces an adaptive soft spectral weighting mechanism that separates persistent dominant dynamics from transient domain-specific fluctuations, together with spectral and stability regularization terms that suppress noisy modal components and control unstable long-horizon extrapolation. Theoretical analysis establishes modal approximation and continuous time generalization bounds, clarifying how the learned spectrum and the proposed regularizers affect future-domain risk. Experiments on both discrete and continuous TDG benchmarks demonstrate that FreKoo++ consistently improves temporal generalization. These results suggest that continuous spectral-dynamical modeling provides a principled and effective direction for learning robust models in evolving environments.

\section*{Acknowledgments}
The work was supported by the Australian Research Council (ARC) under Laureate project FL190100149 and Discovery Project DP220102635.

\bibliographystyle{IEEEtran}
\bibliography{ref}

\clearpage
\newpage
\onecolumn
\setcounter{page}{1}
\renewcommand{\thepage}{\arabic{page}}
\renewcommand{\appendixname}{Section}
\appendices
\renewcommand{\thesection}{\Roman{section}}
\renewcommand{\thesectiondis}{\Roman{section}}

\begin{center}
{\LARGE Supplementary Material for FreKoo++: Learning Continuous Spectral Dynamics for Temporal Domain Generalization\par}
\end{center}

\section{Theoretical Analysis Details}
\label{app:theory_details}

\subsection{Proof of Theorem~\ref{thm:approximation_error}}
\label{app:proof_thm1}

We adopt two conventions throughout this supplementary section.
\textbf{(i)}~The omitted transient remainder in Assumption~\ref{assum:true_dynamics} is non-growing ($\operatorname{Re}(\lambda_k)\le 0$ for $k>K$); the learned transient-weighted dynamics in Theorem~\ref{thm:generalization_bound} are evaluated over the historical window $[0,t_T]$, with growth controlled by the window factor $G_{\mathrm{trans}}(t_T)$, which is finite by Assumption~\ref{assum:lipschitz}.
\textbf{(ii)}~Since $\operatorname{Re}(\cdot)$ is $1$-Lipschitz ($\|\operatorname{Re}(u)-\operatorname{Re}(v)\|_2\le\|u-v\|_2$ for $u,v\in\mathbb{C}^m$), every bound derived for the complex expansion $\sum_k a_k e^{\lambda_k t}$ applies directly to the real-valued parameterization $\tilde{z}(t)=\operatorname{Re}\big(\sum_k a_k e^{\lambda_k t}\big)$; we therefore suppress $\operatorname{Re}(\cdot)$ in intermediate steps.

\medskip
\noindent\textbf{Restatement of Theorem~\ref{thm:approximation_error} (Mode Approximation Bound).}
Under Assumption~\ref{assum:true_dynamics}, let $\hat{z}(t)=\sum_{k=1}^{K}\hat{a}_k e^{\hat{\lambda}_k t}$. Then for any horizon $t_s>0$:
\begin{equation*}
    \sup_{t\in[0,t_s]}\|z(t)-\hat{z}(t)\|_2
    \le
    \sum_{k=1}^{K}\|a_k-\hat{a}_k\|_2\,e^{\hat{\rho}_k t_s}
    +C_a\sum_{k=1}^{K}|\lambda_k-\hat{\lambda}_k|\,t_s\,e^{\varrho_k t_s}
    +\sum_{k=K+1}^{K^*}\|a_k\|_2,
\end{equation*}
where $\hat{\rho}_k=\max(\operatorname{Re}(\hat{\lambda}_k),0)$, $\varrho_k=\max(\operatorname{Re}(\lambda_k),\operatorname{Re}(\hat{\lambda}_k),0)$, and $C_a=\max_{1\le k\le K}\|a_k\|_2$.

\begin{proof}
Adding and subtracting $\sum_{k=1}^K a_k e^{\hat{\lambda}_k t}$ decomposes the error as:
\begin{equation}
    z(t)-\hat{z}(t)
    =\underbrace{\sum_{k=1}^{K}(a_k-\hat{a}_k)e^{\hat{\lambda}_k t}}_{E_{\mathrm{amp}}(t)}
    +\underbrace{\sum_{k=1}^{K}a_k\big(e^{\lambda_k t}-e^{\hat{\lambda}_k t}\big)}_{E_{\mathrm{est}}(t)}
    +\underbrace{\sum_{k=K+1}^{K^*}a_k\,e^{\lambda_k t}}_{E_{\mathrm{trunc}}(t)}.
    \label{eq:app_thm1_decomp}
\end{equation}
The triangle inequality gives $\|z(t)-\hat{z}(t)\|_2\le\|E_{\mathrm{amp}}(t)\|_2+\|E_{\mathrm{est}}(t)\|_2+\|E_{\mathrm{trunc}}(t)\|_2$.

\medskip
\noindent\emph{Step 1: Amplitude estimation error.}
For $t\in[0,t_s]$:
\begin{equation}
    \|E_{\mathrm{amp}}(t)\|_2
    \le\sum_{k=1}^{K}\|a_k-\hat{a}_k\|_2\,\big|e^{\hat{\lambda}_k t}\big|
    =\sum_{k=1}^{K}\|a_k-\hat{a}_k\|_2\,e^{\operatorname{Re}(\hat{\lambda}_k)t}
    \le\sum_{k=1}^{K}\|a_k-\hat{a}_k\|_2\,e^{\hat{\rho}_k t_s},
    \label{eq:app_thm1_amp}
\end{equation}
where the last inequality uses $\hat{\rho}_k\ge\operatorname{Re}(\hat{\lambda}_k)$ and $t\le t_s$.

\medskip
\noindent\emph{Step 2: Eigenvalue estimation error.}
Fix $t\in[0,t_s]$ and $k\in\{1,\dots,K\}$. By the fundamental theorem of calculus applied to $s\mapsto e^{(\hat{\lambda}_k+s(\lambda_k-\hat{\lambda}_k))t}$ on $[0,1]$:
\begin{equation}
    e^{\lambda_k t}-e^{\hat{\lambda}_k t}
    =(\lambda_k-\hat{\lambda}_k)\,t\int_0^1
    e^{(\hat{\lambda}_k+s(\lambda_k-\hat{\lambda}_k))t}\,ds.
    \label{eq:app_thm1_segment}
\end{equation}
Since $|e^{\mu t}|=e^{\operatorname{Re}(\mu)t}$ and $\operatorname{Re}(\hat{\lambda}_k+s(\lambda_k-\hat{\lambda}_k))$ is linear in $s$, its maximum over $[0,1]$ is attained at an endpoint:
\begin{equation}
    \big|e^{\lambda_k t}-e^{\hat{\lambda}_k t}\big|
    \le|\lambda_k-\hat{\lambda}_k|\,t\,e^{\max(\operatorname{Re}(\lambda_k),\operatorname{Re}(\hat{\lambda}_k))t}
    \le|\lambda_k-\hat{\lambda}_k|\,t\,e^{\varrho_k t},
    \label{eq:app_thm1_mvt}
\end{equation}
where $\varrho_k=\max(\operatorname{Re}(\lambda_k),\operatorname{Re}(\hat{\lambda}_k),0)\ge 0$. Because $\varrho_k\ge 0$, the map $t\mapsto t\,e^{\varrho_k t}$ is nondecreasing on $[0,t_s]$, so the supremum is attained at $t=t_s$. Combining with $C_a=\max_{k\le K}\|a_k\|_2$:
\begin{equation}
    \sup_{t\in[0,t_s]}\|E_{\mathrm{est}}(t)\|_2
    \le\sum_{k=1}^{K}\|a_k\|_2\,\big|e^{\lambda_k t}-e^{\hat{\lambda}_k t}\big|
    \le C_a\sum_{k=1}^{K}|\lambda_k-\hat{\lambda}_k|\,t_s\,e^{\varrho_k t_s}.
    \label{eq:app_thm1_est}
\end{equation}

\medskip
\noindent\emph{Step 3: Modal truncation error.}
By Assumption~\ref{assum:true_dynamics}, $\operatorname{Re}(\lambda_k)\le 0$ for $k>K$, so $|e^{\lambda_k t}|=e^{\operatorname{Re}(\lambda_k)t}\le 1$ for all $t\ge 0$. Hence:
\begin{equation}
    \sup_{t\in[0,t_s]}\|E_{\mathrm{trunc}}(t)\|_2
    \le\sum_{k=K+1}^{K^*}\|a_k\|_2\,\big|e^{\lambda_k t}\big|
    \le\sum_{k=K+1}^{K^*}\|a_k\|_2.
    \label{eq:app_thm1_trunc}
\end{equation}
Summing Eqs.~\eqref{eq:app_thm1_amp}, \eqref{eq:app_thm1_est}, and~\eqref{eq:app_thm1_trunc} completes the proof.
\end{proof}

\subsection{Proof of Theorem~\ref{thm:generalization_bound}}
\label{app:proof_thm2}

\begin{theorem}[Restatement of Theorem~\ref{thm:generalization_bound}]
Under Assumptions~\ref{assum:true_dynamics}--\ref{assum:lipschitz}, the future-domain excess risk at $t_s>t_T$ satisfies:
\begin{equation}
\mathcal{E}_{t_s}\le L_\ell L_g L_{\mathrm{dec}}\big(E_{\mathrm{dom}}(t_s)+E_{\mathrm{trans}}(t_s)\big),
\label{eq:app_thm2_risk}
\end{equation}
where $E_{\mathrm{dom}}(t_s)=\|\hat{z}_{\mathrm{dom}}(t_s)-z^\star_{\mathrm{dom}}(t_s)\|_2$ and $E_{\mathrm{trans}}(t_s)=\|\hat{z}_{\mathrm{trans}}(t_T)-z^\star_{\mathrm{trans}}(t_s)\|_2$. With $\Delta_s=t_s-t_T$, $\hat{\rho}_k=\max(\operatorname{Re}(\hat{\lambda}_k),0)$, $\varrho_k=\max(\operatorname{Re}(\lambda_k^\star),\operatorname{Re}(\hat{\lambda}_k),0)$:
\begin{equation}
E_{\mathrm{dom}}(t_s)\le\sum_{k=1}^{K}\|\hat{b}_k-b_k^\star\|_2\,e^{\hat{\rho}_k\Delta_s}
+B_{\mathrm{dom}}\sum_{k=1}^{K}|\hat{\lambda}_k-\lambda_k^\star|\,\Delta_s\,e^{\varrho_k\Delta_s},
\label{eq:app_thm2_dom_bound}
\end{equation}
and the transient holding error satisfies:
\begin{equation}
E_{\mathrm{trans}}(t_s)\le B_{\mathrm{trans}}+G_{\mathrm{trans}}(t_T)\sqrt{\mathcal{R}_{\mathrm{spec}}},
\label{eq:app_thm2_trans_bound}
\end{equation}
where $G_{\mathrm{trans}}(t_T)=\big(\sum_{k=1}^K w_k^{\mathrm{trans}}e^{2\hat{\rho}_k t_T}\big)^{1/2}$.
\end{theorem}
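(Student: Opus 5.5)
We split the argument into three parts: a Lipschitz reduction to latent space, a modal perturbation bound for the dominant branch, and a Cauchy--Schwarz bound for the frozen transient branch.

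\emph{Step 1: reduction to latent error.} Write $\hat{\theta}(t_s)=\phi^{-1}(\hat{z}(t_s))$ with $\hat{z}(t_s)=\hat{z}_{\mathrm{dom}}(t_s)+\hat{z}_{\mathrm{trans}}(t_T)$ from Eq.~\eqref{eq:infer-z}. By realizability, $\theta^\star(t_s)=\phi^{-1}(z^\star(t_s))$. For each $(X,Y)$, the Lipschitz properties of $\ell$ and $g$ bound the difference $|\ell(g(X;\hat{\theta}),Y)-\ell(g(X;\theta^\star),Y)|$ by $L_\ell L_g\|\hat{\theta}(t_s)-\theta^\star(t_s)\|_2$. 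Taking expectation under $P_{t_s}$ bounds $\mathcal{E}_{t_s}$ by the same quantity. The decoder Lipschitz constant then gives the factor $L_{\mathrm{dec}}\|\hat{z}(t_s)-z^\star(t_s)\|_2$. Inserting the decomposition $z^\star=z^\star_{\mathrm{dom}}+z^\star_{\mathrm{trans}}$ and applying the triangle inequality yields Eq.~\eqref{eq:app_thm2_risk}.

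\emph{Step 2: dominant branch.} Suppress $\operatorname{Re}(\cdot)$ as in convention (ii), and write $\tau=\Delta_s$ using the anchored representations of Assumption~\ref{assum:lipschitz}. Mirroring the proof of Theorem~\ref{thm:approximation_error}, add and subtract $\sum_k b_k^\star e^{\hat{\lambda}_k\Delta_s}$. This produces two pieces:
\[
\sum_k(\hat{b}_k-b_k^\star)e^{\hat{\lambda}_k\Delta_s}
\quad\text{and}\quad
\sum_k b_k^\star\big(e^{\hat{\lambda}_k\Delta_s}-e^{\lambda_k^\star\Delta_s}\big).
\]
The first is bounded by $\sum_k\|\hat{b}_k-b_k^\star\|_2e^{\hat{\rho}_k\Delta_s}$, using $|e^{\hat{\lambda}_k\Delta_s}|\le e^{\hat{\rho}_k\Delta_s}$. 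For the second, apply the segment identity of Eq.~\eqref{eq:app_thm1_segment} with $\Delta_s$ in place of $t$. Maximizing the linear real part at an endpoint gives $|e^{\hat{\lambda}_k\Delta_s}-e^{\lambda_k^\star\Delta_s}|\le|\hat{\lambda}_k-\lambda_k^\star|\Delta_s e^{\varrho_k\Delta_s}$, and $\|b_k^\star\|_2\le B_{\mathrm{dom}}$ then gives Eq.~\eqref{eq:app_thm2_dom_bound}.

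\emph{Step 3: transient branch.} This is the step needing care. By the triangle inequality,
\[
E_{\mathrm{trans}}\le\|\hat{z}_{\mathrm{trans}}(t_T)\|_2+\|z^\star_{\mathrm{trans}}(t_s)\|_2,
\]
and the second term is at most $B_{\mathrm{trans}}$ because $t_s\ge t_T$. For the first term, use non-expansiveness of $\operatorname{Re}$ and then the triangle inequality:
\[
\|\hat{z}_{\mathrm{trans}}(t_T)\|_2\le\sum_k w_k^{\mathrm{trans}}\|\hat{a}_k\|_2 e^{\operatorname{Re}(\hat{\lambda}_k)t_T}\le\sum_k w_k^{\mathrm{trans}}\|\hat{a}_k\|_2e^{\hat{\rho}_k t_T}.
\]
Split each weight as $w_k^{\mathrm{trans}}=\sqrt{w_k^{\mathrm{trans}}}\cdot\sqrt{w_k^{\mathrm{trans}}}$ and apply Cauchy--Schwarz with factors $\sqrt{w_k^{\mathrm{trans}}}e^{\hat{\rho}_kt_T}$ and $\sqrt{w_k^{\mathrm{trans}}}\|\hat{a}_k\|_2$. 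This bounds the sum by $G_{\mathrm{trans}}(t_T)\big(\sum_k w_k^{\mathrm{trans}}\|\hat{a}_k\|_2^2\big)^{1/2}$.

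It remains to show $\sum_k w_k^{\mathrm{trans}}\|\hat{a}_k\|_2^2\le\mathcal{R}_{\mathrm{spec}}$. This holds if $1-\operatorname{Var}_k(w_k^{\mathrm{dom}})\ge0$. Since $w_k^{\mathrm{dom}}\in(0,1)$, the variance is at most $1/4$, so the condition is satisfied.

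The main obstacle is the subtle bookkeeping between complex and real forms: the weights are real and the amplitudes $\hat{a}_k$ are those appearing in Eq.~\eqref{eq:ztrans}. It is also essential that the exponential factor uses $\hat{\rho}_k t_T$ and not $\Delta_s$, which is exactly what makes the transient bound horizon-free. Finiteness of $G_{\mathrm{trans}}(t_T)$ is taken from Assumption~\ref{assum:lipschitz}. Summing Steps 1--3 completes the proof.
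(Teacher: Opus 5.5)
Your proposal is correct and follows essentially the same route as the paper's proof. It uses the Lipschitz reduction to the latent error with the triangle inequality over the asymmetric inference split, adds and subtracts $b_k^\star e^{\hat{\lambda}_k\Delta_s}$ with the segment identity for the dominant branch, and applies the $\sqrt{w_k^{\mathrm{trans}}}$-split Cauchy--Schwarz bound to the frozen transient branch. Your extra remark that $w_k^{\mathrm{dom}}\in(0,1)$ forces $\operatorname{Var}_k(w_k^{\mathrm{dom}})\le 1/4$ (at most $K/(4(K-1))\le 1/2$ under the unbiased normalization), so that $1-\operatorname{Var}_k(w_k^{\mathrm{dom}})\ge 0$, supplies a justification the paper only asserts.
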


\begin{proof}
By Assumption~\ref{assum:lipschitz}, $\theta^\star(t_s)=\phi^{-1}(z^\star(t_s))$ and $\hat{\theta}(t_s)=\phi^{-1}(\hat{z}(t_s))$.

\medskip
\noindent\textbf{Step 1: Excess risk to latent error.}
By $L_\ell$-Lipschitz continuity of $\ell$ and $L_g$-Lipschitz continuity of $g(X;\cdot)$:
\begin{equation}
\mathcal{E}_{t_s}\le L_\ell L_g\|\hat{\theta}(t_s)-\theta^\star(t_s)\|_2.
\label{eq:app_thm2_parameter}
\end{equation}
By $L_{\mathrm{dec}}$-Lipschitz continuity of $\phi^{-1}$:
\begin{equation}
\|\hat{\theta}(t_s)-\theta^\star(t_s)\|_2
=\|\phi^{-1}(\hat{z}(t_s))-\phi^{-1}(z^\star(t_s))\|_2
\le L_{\mathrm{dec}}\|\hat{z}(t_s)-z^\star(t_s)\|_2.
\label{eq:app_thm2_decoder}
\end{equation}
Combining Eqs.~\eqref{eq:app_thm2_parameter}--\eqref{eq:app_thm2_decoder}:
\begin{equation}
\mathcal{E}_{t_s}\le L_\ell L_g L_{\mathrm{dec}}\|\hat{z}(t_s)-z^\star(t_s)\|_2.
\label{eq:app_thm2_lip}
\end{equation}

\medskip
\noindent\textbf{Step 2: Latent error decomposition.}
By the inference rule $\hat{z}(t_s)=\hat{z}_{\mathrm{dom}}(t_s)+\hat{z}_{\mathrm{trans}}(t_T)$ (Eq.~\eqref{eq:infer-z}) and the decomposition $z^\star(t_s)=z^\star_{\mathrm{dom}}(t_s)+z^\star_{\mathrm{trans}}(t_s)$:
\begin{equation}
\|\hat{z}(t_s)-z^\star(t_s)\|_2
\le\underbrace{\|\hat{z}_{\mathrm{dom}}(t_s)-z^\star_{\mathrm{dom}}(t_s)\|_2}_{E_{\mathrm{dom}}(t_s)}
+\underbrace{\|\hat{z}_{\mathrm{trans}}(t_T)-z^\star_{\mathrm{trans}}(t_s)\|_2}_{E_{\mathrm{trans}}(t_s)}.
\label{eq:app_thm2_split}
\end{equation}
Substituting into Eq.~\eqref{eq:app_thm2_lip} proves Eq.~\eqref{eq:app_thm2_risk}.

\medskip
\noindent\textbf{Step 3: Dominant extrapolation error.}
Setting $\tau=\Delta_s$ in the anchored representations of Assumption~\ref{assum:lipschitz}:
\begin{equation}
z^\star_{\mathrm{dom}}(t_s)=\operatorname{Re}\!\Big(\sum_{k=1}^K b_k^\star e^{\lambda_k^\star\Delta_s}\Big),\qquad
\hat{z}_{\mathrm{dom}}(t_s)=\operatorname{Re}\!\Big(\sum_{k=1}^K\hat{b}_k e^{\hat{\lambda}_k\Delta_s}\Big).
\label{eq:app_learned_dom}
\end{equation}
By the non-expansive property of $\operatorname{Re}(\cdot)$:
\begin{equation}
E_{\mathrm{dom}}(t_s)\le\Big\|\sum_{k=1}^K\big(\hat{b}_k e^{\hat{\lambda}_k\Delta_s}-b_k^\star e^{\lambda_k^\star\Delta_s}\big)\Big\|_2.
\label{eq:app_dom_complex}
\end{equation}
Adding and subtracting $b_k^\star e^{\hat{\lambda}_k\Delta_s}$:
\begin{equation}
\hat{b}_k e^{\hat{\lambda}_k\Delta_s}-b_k^\star e^{\lambda_k^\star\Delta_s}
=(\hat{b}_k-b_k^\star)e^{\hat{\lambda}_k\Delta_s}+b_k^\star\big(e^{\hat{\lambda}_k\Delta_s}-e^{\lambda_k^\star\Delta_s}\big).
\label{eq:app_dom_decomposition}
\end{equation}
The triangle inequality yields:
\begin{equation}
E_{\mathrm{dom}}(t_s)\le
\sum_{k=1}^K\|\hat{b}_k-b_k^\star\|_2\,|e^{\hat{\lambda}_k\Delta_s}|
+\sum_{k=1}^K\|b_k^\star\|_2\,|e^{\hat{\lambda}_k\Delta_s}-e^{\lambda_k^\star\Delta_s}|.
\label{eq:app_dom_split}
\end{equation}
For the first sum, $|e^{\hat{\lambda}_k\Delta_s}|=e^{\operatorname{Re}(\hat{\lambda}_k)\Delta_s}\le e^{\hat{\rho}_k\Delta_s}$. For the second sum, applying the integral identity to $u\mapsto e^{[\lambda_k^\star+u(\hat{\lambda}_k-\lambda_k^\star)]\Delta_s}$ on $[0,1]$:
\begin{equation}
e^{\hat{\lambda}_k\Delta_s}-e^{\lambda_k^\star\Delta_s}
=(\hat{\lambda}_k-\lambda_k^\star)\Delta_s\int_0^1 e^{[\lambda_k^\star+u(\hat{\lambda}_k-\lambda_k^\star)]\Delta_s}\,du.
\label{eq:app_exp_integral}
\end{equation}
Since $\operatorname{Re}(\lambda_k^\star+u(\hat{\lambda}_k-\lambda_k^\star))$ is linear in $u$, its maximum over $[0,1]$ is attained at an endpoint. Therefore:
\begin{equation}
|e^{\hat{\lambda}_k\Delta_s}-e^{\lambda_k^\star\Delta_s}|
\le|\hat{\lambda}_k-\lambda_k^\star|\,\Delta_s\,e^{\max(\operatorname{Re}(\lambda_k^\star),\operatorname{Re}(\hat{\lambda}_k))\Delta_s}
\le|\hat{\lambda}_k-\lambda_k^\star|\,\Delta_s\,e^{\varrho_k\Delta_s}.
\label{eq:app_exp_bound}
\end{equation}
Substituting $\max_k\|b_k^\star\|_2\le B_{\mathrm{dom}}$ and Eq.~\eqref{eq:app_exp_bound} into Eq.~\eqref{eq:app_dom_split} establishes Eq.~\eqref{eq:app_thm2_dom_bound}.

\medskip
\noindent\textbf{Step 4: Transient holding error.}
The triangle inequality gives:
\begin{equation}
E_{\mathrm{trans}}(t_s)\le\|\hat{z}_{\mathrm{trans}}(t_T)\|_2+\|z^\star_{\mathrm{trans}}(t_s)\|_2.
\label{eq:app_thm2_trans_split}
\end{equation}
By Assumption~\ref{assum:lipschitz}, $\|z^\star_{\mathrm{trans}}(t_s)\|_2\le B_{\mathrm{trans}}$.

For the learned transient contribution, the non-expansive property and the triangle inequality give:
\begin{equation}
\|\hat{z}_{\mathrm{trans}}(t_T)\|_2
=\Big\|\operatorname{Re}\!\Big(\sum_{k=1}^K w_k^{\mathrm{trans}}\hat{a}_k e^{\hat{\lambda}_k t_T}\Big)\Big\|_2
\le\sum_{k=1}^K w_k^{\mathrm{trans}}\|\hat{a}_k\|_2\,e^{\hat{\rho}_k t_T},
\label{eq:app_trans_triangle}
\end{equation}
where $|e^{\hat{\lambda}_k t_T}|=e^{\operatorname{Re}(\hat{\lambda}_k)t_T}\le e^{\hat{\rho}_k t_T}$. Writing $w_k^{\mathrm{trans}}=\sqrt{w_k^{\mathrm{trans}}}\cdot\sqrt{w_k^{\mathrm{trans}}}$ and applying Cauchy--Schwarz:
\begin{equation}
\|\hat{z}_{\mathrm{trans}}(t_T)\|_2
\le\Big(\sum_{k=1}^K w_k^{\mathrm{trans}}e^{2\hat{\rho}_k t_T}\Big)^{1/2}
\Big(\sum_{k=1}^K w_k^{\mathrm{trans}}\|\hat{a}_k\|_2^2\Big)^{1/2}.
\label{eq:app_trans_cs}
\end{equation}
The first factor is $G_{\mathrm{trans}}(t_T)$, which is finite by Assumption~\ref{assum:lipschitz}. For the second factor, since $1-\operatorname{Var}_{k=1}^K(w_k^{\mathrm{dom}})\ge 0$, the definition $\mathcal{R}_{\mathrm{spec}}=\sum_k w_k^{\mathrm{trans}}\|\hat{a}_k\|_2^2+(1-\operatorname{Var}(w_k^{\mathrm{dom}}))$ (Eq.~\eqref{eq:rspec}) implies:
\begin{equation}
\sum_{k=1}^K w_k^{\mathrm{trans}}\|\hat{a}_k\|_2^2\le\mathcal{R}_{\mathrm{spec}}.
\label{eq:app_rspec_surrogate}
\end{equation}
Substituting into Eq.~\eqref{eq:app_trans_cs}:
\begin{equation}
\|\hat{z}_{\mathrm{trans}}(t_T)\|_2\le G_{\mathrm{trans}}(t_T)\sqrt{\mathcal{R}_{\mathrm{spec}}}.
\label{eq:app_thm2_trans}
\end{equation}
Combining Eqs.~\eqref{eq:app_thm2_trans_split} and~\eqref{eq:app_thm2_trans} proves Eq.~\eqref{eq:app_thm2_trans_bound}. Steps 1--4 together complete the proof.
\end{proof}

\subsection{Proof of Proposition~\ref{prop:energy}}
\label{app:proof_prop2}

\begin{proof}
Define the cumulative temporal energy of mode $k$ over $[0,T_{\max}]$:
\begin{equation}
E_k=\int_0^{T_{\max}}\|a_k e^{\lambda_k t}\|_2^2\,dt.
\end{equation}
Since $\|\operatorname{Re}(u)\|_2\le\|u\|_2$, this upper-bounds the energy of the real-valued trajectory $\operatorname{Re}(a_k e^{\lambda_k t})$.

Given $\lambda_k=\sigma_k+j\omega_k$ with $\sigma_k<0$:
\begin{equation}
E_k=\|a_k\|_2^2\int_0^{T_{\max}}|e^{(\sigma_k+j\omega_k)t}|^2\,dt
=\|a_k\|_2^2\int_0^{T_{\max}}e^{2\sigma_k t}\,dt
=\|a_k\|_2^2\cdot\frac{1-e^{2\sigma_k T_{\max}}}{-2\sigma_k}.
\label{eq:app_prop2}
\end{equation}
Because $\sigma_k<0$ and $T_{\max}>0$, we have $0<e^{2\sigma_k T_{\max}}<1$, so $1-e^{2\sigma_k T_{\max}}\le 1$. Therefore:
\begin{equation}
E_k\le\frac{\|a_k\|_2^2}{-2\sigma_k}=\frac{\|a_k\|_2^2}{2|\sigma_k|},
\end{equation}
which establishes Eq.~\eqref{eq:energy-bound}.
\end{proof}

\section{Experimental Details}
\label{app:experiments}

\subsection{Datasets}
\label{app:datasets}
\subsubsection{CTDG Benchmarks}
We evaluate FreKoo++ on six continuous temporal domain generalization (CTDG) benchmarks characterized by asynchronous arrivals and irregularly sampled temporal gaps. For all CTDG datasets, timestamps are normalized to $[0,1]$ while preserving their relative temporal intervals. Following the standard CTDG setup, the chronologically earliest 70\% of domains serve as historical source training data, while the remaining latest 30\% are reserved as unseen target domains for future extrapolation. The '-C' suffix denotes the continuous, irregularly sampled variant to distinguish it from its discrete counterpart.

\textbf{Rotated 2-Moons-C.} This synthetic benchmark constructs a continuous variant of the two-moons problem to model rotational concept drift. It contains 1,000 two-dimensional samples per domain across two classes formed by upper and lower moon-shaped clusters. We sample 50 timestamps from $[0,50]$ to construct 50 sequential domains. Domains 1--35 (35 source domains) are used for training, while domains 36--50 (15 target domains) serve as unseen test sets. Concept drift is induced by continuously rotating the dataset counter-clockwise at a constant rate of $18^\circ$ per unit time.

\textbf{Rotated MNIST-C.} We extend the MNIST dataset~\cite{deng2012mnist} to continuous temporal shifts under irregular sampling. Fifty timestamps are sampled from $[0,50]$, where each timestamp defines a domain containing 1,000 randomly selected digit images, yielding a total of 50 temporal domains. The first 35 rotated domains serve as training source data, while the remaining 15 domains are reserved for testing. Analogous to Rotated 2-Moons-C, progressive counter-clockwise rotation at $18^\circ$ per unit time induces continuous visual domain drift.

\textbf{Twitter.} This benchmark evaluates influenza-risk prediction using streaming social media signals~\cite{zhao2017feature,bai2023sign}. Domains are constructed from tweet streams collected at 50 arbitrary starting timestamps across the 2010--2014 flu seasons, with each domain spanning a 7-day temporal window. We partition the data into 50 chronological domains, using the first 35 as training sources and the final 15 for testing. Features consist of disease-related term frequency statistics validated against CDC Influenza-Like Illness reports, where concept drift reflects both seasonal flu cycles and evolving user tweeting patterns.

\textbf{Yearbook.} Derived from the Wild-Time benchmark~\cite{yao2022wild}, this dataset contains frontal yearbook portraits collected from 1930 to 2013 across 128 high schools for gender classification. We sample 40 non-consecutive years across the 84-year span to construct 40 irregularly spaced domains, training on the first 28 source domains and evaluating on the final 12 future target domains. Long-term visual drift naturally emerges from non-uniform temporal intervals, reflecting shifts in fashion trends, photographic styles, and social context.

\textbf{Cyclone.} This benchmark addresses a regression task mapping satellite imagery of tropical cyclones to wind intensity~\cite{chen2018rotation}. Each cyclone event represents a distinct domain indexed by its occurrence date. Following Koodos~\cite{cai2024continuous}, we extract West Pacific cyclone records from 2014 to 2016 to form 72 continuous temporal domains. We use the first 50 domains as training sources and the remaining 22 domains for future wind intensity prediction. Concept drift naturally arises from seasonal meteorological variations and event-driven irregular arrivals.

\textbf{House-C.} This real-estate regression benchmark predicts housing prices from property features based on transaction records from 2013 to 2019. Following the CTDG protocol, we sample 40 non-overlapping one-month temporal windows to create 40 chronological domains, using the first 28 domains for training and the remaining 12 domains for testing. Concept drift stems naturally from evolving local real-estate market conditions and non-uniform sampling intervals.

\subsubsection{DTDG benchmarks}
To verify backward compatibility under discrete settings, we evaluate FreKoo++ across seven temporal benchmarks following the experimental setup and evaluation protocol established in FreKoo~\cite{yu2025learning}. Following standard DTDG practice, historical chronological domains serve as source training data, while the immediately succeeding unseen future domain is reserved as the target domain. This benchmark suite encompasses both synthetic and complex real-world temporal data streams with diverse drift patterns.

\textbf{Rotated 2-Moons-D.} This benchmark adapts the 2-Moons dataset to model concept drift via rotation. It contains 1,800  2-dimensional samples across two classes, divided into 10 sequential domains. Each domain is rotated 18° counter-clockwise relative to the previous one. We train on domains 0-8 and test on domain 9, where the drift is caused by the incremental rotation.
    
\textbf{Rotated MNIST-D.} We randomly sampled 1000 instances from the MNIST dataset and constructed a total of five domains by successively rotating them counter-clockwise by 15°, analogous to the Rotated 2-Moons-D setup. The first four rotated domains are used for training, while the fifth domain serves as the test set, creating incremental drift induced by progressive rotation transformations.
    
\textbf{Online News Popularity (ONP)\footnote{https://archive.ics.uci.edu/dataset/332/online+news+popularity}.} This dataset aggregates heterogeneous features of articles published by Mashable over two years, aiming to predict social media shares (popularity). It comprises 39,797 samples with 58 features, where concept drift is characterized by temporal shifts in popularity patterns. We partition the data into 6 time-ordered domains, using the first five for training and the last for testing. The dataset undergoes slight real-world concept drift over the observed time period.

\textbf{Shuttle\footnote{https://archive.ics.uci.edu/dataset/148/statlog+shuttle}.} The Shuttle dataset contains 58,000 instances of multi-class flight status classification under severe class imbalance. It is partitioned into 8 time-stamped domains using a chronological split: domains spanning timestamps 30-70 serve as training data, while the most recent period (70–80) is reserved for testing. The dataset also has real-world concept drifts over the observed time period.

\textbf{Electrical Demand\footnote{https://web.archive.org/web/20191121102533/http://www.inescporto.pt/\~jgama/ales/ales\_5.html}.} This dataset records electricity demand in a province, addressing a binary classification task to predict whether 30-minute demand exceeds or falls below the daily average for that time period. After removing instances with missing values, it contains 28,222 samples with 8 features. It is partitioned into 30 two-week chronological domains, with the first 29 used for training and the 30th for testing. Seasonal variations in demand induce concept drift, making this a real-world benchmark capturing both periodic and incremental drift patterns.
    
\textbf{HousePrices-D\footnote{https://www.kaggle.com/datasets/htagholdings/property-sales}.} This dataset comprises housing price records from 2013 to 2019 for the regression task to predict property prices based on feature values. We treat each calendar year as a distinct domain, using 2013–2018 data for training and the final (2019) domain for testing. Concept drift emerges naturally from temporal economic shifts and market fluctuations over the years.

\textbf{Appliances Energy Prediction\footnote{https://archive.ics.uci.edu/dataset/374/appliances+energy+prediction}.} This dataset addresses regression modeling for predicting appliance energy consumption in a low-energy building. Comprising 10-minute sensor readings over 4.5 months in 2016, it is partitioned into 9 chronological domains. We train on the first eight domains and evaluate on the final (most recent) ninth domain, with concept drift arising from temporal shifts in energy usage patterns across the observation period.

\subsection{Baselines}
\label{app:baselines}
\subsubsection{CTDG Baselines}
Following the continuous temporal domain generalization protocol in Koodos~\cite{cai2024continuous}, we compare FreKoo++ against twelve representative CTDG baselines evaluated in Table~I, categorized into four technical paradigms based on how they handle temporal information:

\textbf{Time-agnostic baselines.} These methods ignore explicit temporal ordering across domains. \emph{Offline} aggregates all source domains into a single pool for empirical risk minimization. \emph{LastDomain} trains exclusively on the chronologically latest source domain, adopting a naive recency-based heuristic. \emph{IncFinetune} sequentially fine-tunes the network across chronological domains using a reduced learning rate.

\textbf{Domain generalization and continuous adaptation baselines.} \emph{IRM}~\cite{arjovsky2019invariant} and \emph{V-REx}~\cite{krueger2021out} penalize invariant predictor mismatches and risk variations across source domains, respectively. \emph{CIDA}~\cite{wang2020continuously} conditions the model on continuous temporal indices, aligning representations via adversarial domain alignment with a domain-aware discriminator.

\textbf{Temporal and parameter forecasting baselines.} \emph{TKNets}~\cite{zeng2024generalizing} leverages Koopman operators for discrete temporal transfer across evolving domains. \emph{DRAIN}~\cite{bai2023temporal} forecasts future neural network parameters via a discrete recurrent network trained on source parameter trajectories. To accommodate irregular sampling intervals, \emph{DRAIN-$\Delta t$}~\cite{cai2024continuous} replaces standard discrete recurrent units with continuous-time recurrent units~\cite{schirmer2022modeling}.

\textbf{Continuous-time dynamics baselines.} \emph{DeepODE}~\cite{cai2024continuous} maps model parameters into a latent space and evolves them via a Neural ODE solver~\cite{chen2018neural}. \emph{NeuralLio}~\cite{cai2025continuous} incorporates continuous-time Liouville operator dynamics to track evolving continuous distributions over time. \emph{Koodos}~\cite{cai2024continuous} models the coupled evolution of data and parameter states using a continuous-time Koopman formulation under structural constraints.

To ensure a fair and rigorous comparison, all baseline methods share identical task backbone architectures with FreKoo++ across each benchmark dataset. Experimental results are either directly cited from their original publications under matched evaluation protocols or reproduced using their official open-source implementations under unified experimental settings. To ensure statistical reliability, all reported metrics represent the mean and standard deviation ($\mathrm{mean} \pm \mathrm{std}$) computed over five independent runs with different random seeds.

\subsubsection{DTDG baselines}
For the DTDG compatibility evaluation, we follow the FreKoo
protocol~\cite{yu2025learning}. The comparison includes time-agnostic baselines
\emph{Offline}, \emph{LastDomain}, and \emph{IncFinetune}; continuous domain
adaptation baselines \emph{CDOT}~\cite{ortiz2019cdot} and
\emph{CIDA}~\cite{wang2020continuously}; discrete TDG methods
\emph{GI}~\cite{nasery2021training},
\emph{LSSAE}~\cite{qin2022generalizing},
\emph{Foresee}~\cite{zeng2023foresee}, and
\emph{DRAIN}~\cite{bai2023temporal}; and spectral-Koopman baselines
\emph{TKNets}~\cite{zeng2024generalizing}, \emph{Koodos}~\cite{cai2024continuous},
and \emph{FreKoo}~\cite{yu2025learning}.

\section{Implementation Details}
\label{app:implementation}
\subsection{Network Architectures}
\label{app:architectures}
To ensure rigorous comparability with state-of-the-art baselines, FreKoo++ integrates three primary modules: a dataset-specific task predictive model ($\theta \in \mathbb{R}^D$), a symmetric four-layer latent autoencoder ($\phi, \phi^{-1}$), and the continuous Koopman spectral-dynamical module operating in $\mathbb{R}^m$. Detailed network architectures for the task backbones and the uniform autoencoder setup across all benchmark datasets are summarized in Table~\ref{tab:model_config}.
\begin{table}[t]
\centering
\caption{Detailed network architectures for the task predictive model and the uniform latent autoencoder ($\phi, \phi^{-1}$) across all benchmark datasets. All autoencoder hidden layers employ ReLU activations.}
\label{tab:model_config}
\begin{tabular}{lm{8cm}m{5cm}}
\toprule
Dataset & Predictive model ($\theta \in \mathbb{R}^D$) & Latent encoder-decoder ($\phi, \phi^{-1}$) \\
\midrule
2-Moons-C 
  & MLP: [2, 50, 50, 1] + ReLU + Sigmoid 
  & \multirow{6}{4.8cm}{\centering 
      $\phi$: [$D \to 1024 \to 512 \to 128 \to m$] \\[5pt]
      $\phi^{-1}$: [$m \to 128 \to 512 \to 1024 \to D$]} \\
Rot-MNIST-C 
  & $3 \times$ [Conv(32/32/64, $k{=}3$) + ReLU + MaxPool($2$)] + MLP: [576, 128, 10] + LogSoftmax 
  & \\
Twitter 
  & Embedding Network + MLP [32, 128, 32, 1] + ReLU + Sigmoid 
  & \\
Yearbook 
  & $3 \times$ [Conv(32/32/64, $k{=}3$) + ReLU + MaxPool($2$)] + MLP: [1024, 128, 32, 1] + ReLU + Sigmoid 
  & \\
Cyclone 
  & $4 \times$ [Conv(32/32/64/64, $k{=}3$) + ReLU + MaxPool($2$)] + MLP: [1024, 128, 32, 1] + ReLU 
  & \\
House 
  & MLP: [30, 400, 400, 1] + ReLU 
  & \\
\bottomrule
\end{tabular}
\end{table}
\subsection{Training Details and Hyperparameters}
We optimize FreKoo++ end-to-end using the Adam optimizer across all benchmark datasets. Optimization is parameterized with three distinct learning rate groups: $\eta_1$ for the predictive task model, $\eta_2$ for the latent encoder and decoder, and $\eta_3$ for the continuous Koopman spectral module. The overall joint loss objective is regulated by four hyperparameter weights: $\alpha$ for parameter reconstruction ($\mathcal{L}_{\mathrm{rec}}^{(C)}$), $\beta$ for modal trajectory fitting ($\mathcal{L}_{\mathrm{fit}}^{(C)}$), $\gamma$ for transient spectral regularization ($\mathcal{R}_{\mathrm{spec}}$), and $\delta$ for stability regularization ($\mathcal{R}_{\mathrm{stab}}$).

For the \textbf{2-Moons-C} dataset, the model is trained for $300$ epochs with learning rates $\eta_1 = 1 \times 10^{-2}$ and $\eta_2 = \eta_3 = 1 \times 10^{-3}$, regulated by loss weights $\alpha = 100$, $\beta = 1$, $\gamma = 1$, and $\delta = 10$. The \textbf{Rot-MNIST-C} configuration employs $400$ training epochs with a uniform learning rate $\eta_1 = \eta_2 = \eta_3 = 1 \times 10^{-3}$ across all modules, governed by $\alpha = 100$, $\beta = 10$, $\gamma = 1$, and $\delta = 10$. For \textbf{Twitter}, training spans $200$ epochs using $\eta_1 = 1 \times 10^{-2}$ for the task predictor and $\eta_2 = \eta_3 = 1 \times 10^{-3}$ for the autoencoder and spectral dynamics modules, combined with loss weights $\alpha = 10$, $\beta = 10$, $\gamma = 1$, and $\delta = 10$. The \textbf{Yearbook} dataset is optimized over $800$ epochs with a uniform learning rate of $1 \times 10^{-3}$ across all components, accompanied by loss coefficients $\alpha = 100$, $\beta = 10$, $\gamma = 1$, and $\delta = 10$. For \textbf{Cyclone}, we train the model for $500$ epochs using a uniform learning rate $\eta_1 = \eta_2 = \eta_3 = 1 \times 10^{-3}$, with objective weights set to $\alpha = 100$, $\beta = 100$, $\gamma = 100$, and $\delta = 10$. Finally, the \textbf{House} dataset is trained over $600$ epochs maintaining a uniform learning rate of $1 \times 10^{-3}$ across all modules, regulated by loss coefficients $\alpha = 10$, $\beta = 10$, $\gamma = 1$, and $\delta = 10$.

\section{Supplementary Experiments}
\label{app:supp_experiments}

\begin{figure*}[t]
\centering
\subfigure[Step 1, err 4.8\%]{
    \includegraphics[width=0.18\linewidth]{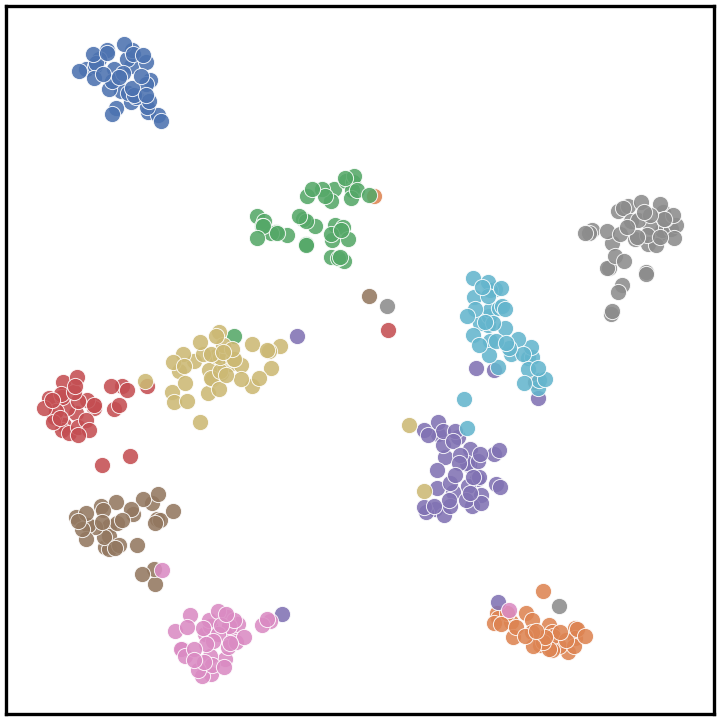}
    }
\subfigure[Step 2, err 3.3\%]{
    \includegraphics[width=0.18\linewidth]{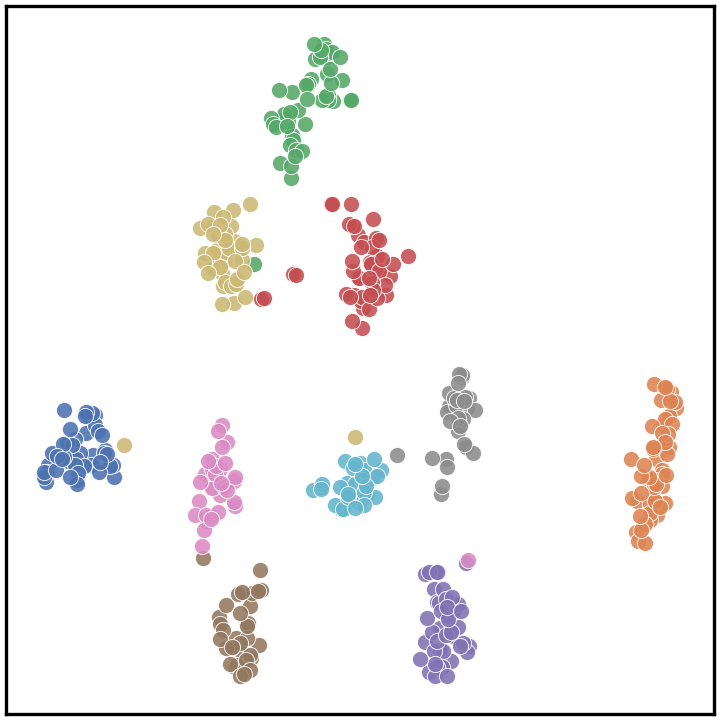}
    }
\subfigure[Step 3, err 3.8\%]{
    \includegraphics[width=0.18\linewidth]{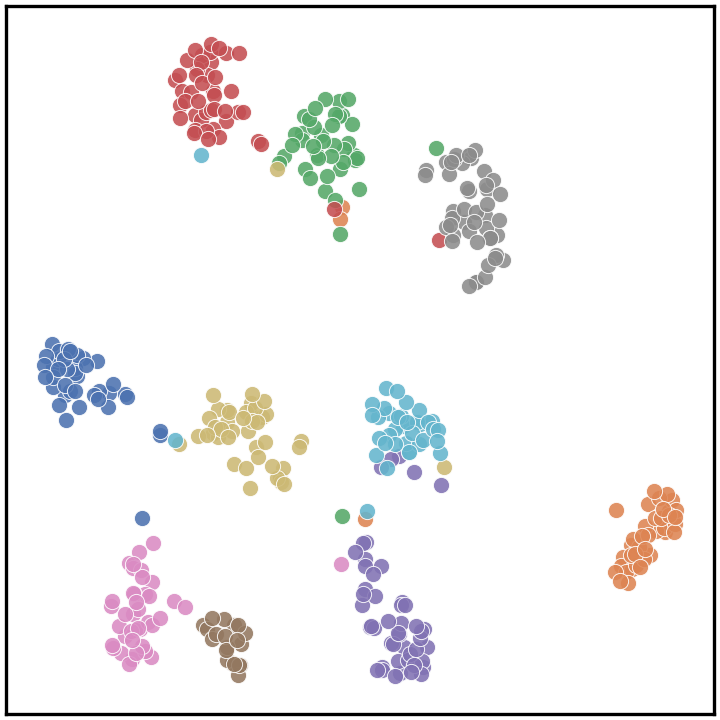}
    }
\subfigure[Step 4, err 2.7\%]{
    \includegraphics[width=0.18\linewidth]{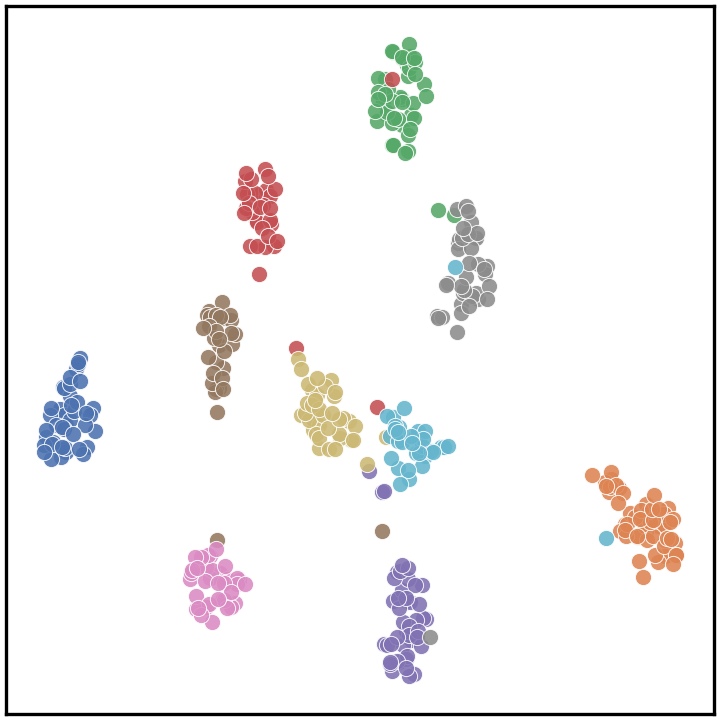}
    }
\subfigure[Step 5, err 2.7\%]{
    \includegraphics[width=0.18\linewidth]{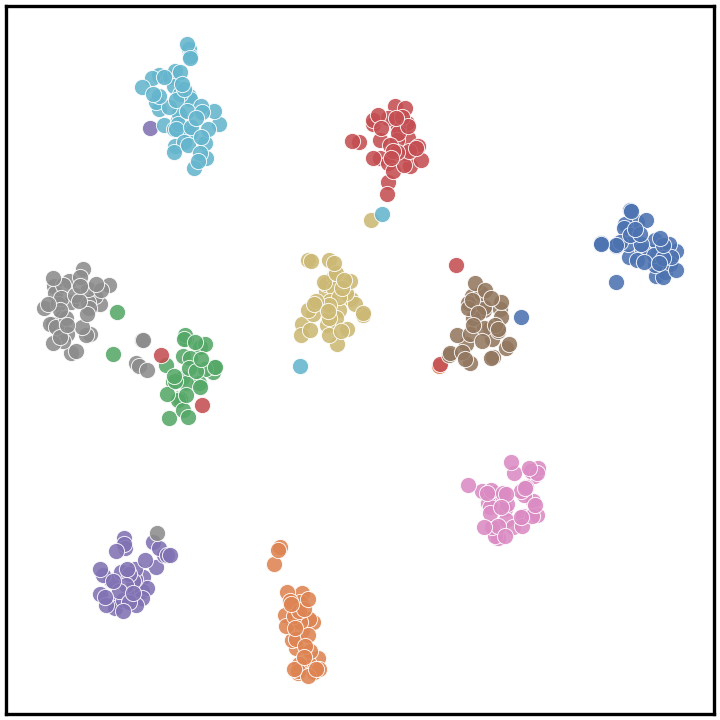}
    }
\\
\subfigure[Step 6, err 2.2\%]{
    \includegraphics[width=0.18\linewidth]{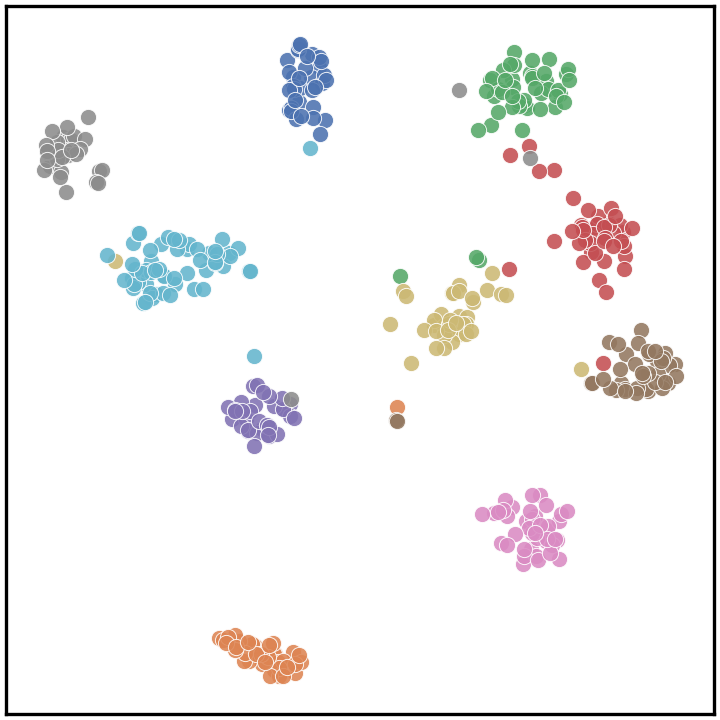}
    }
\subfigure[Step 7, err 1.4\%]{
    \includegraphics[width=0.18\linewidth]{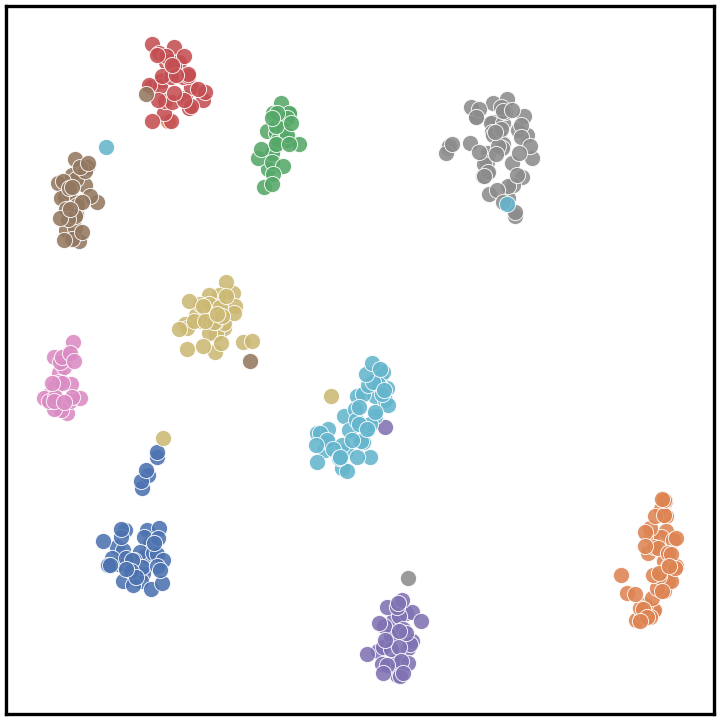}
    }
\subfigure[Step 8, err 2.5\%]{
    \includegraphics[width=0.18\linewidth]{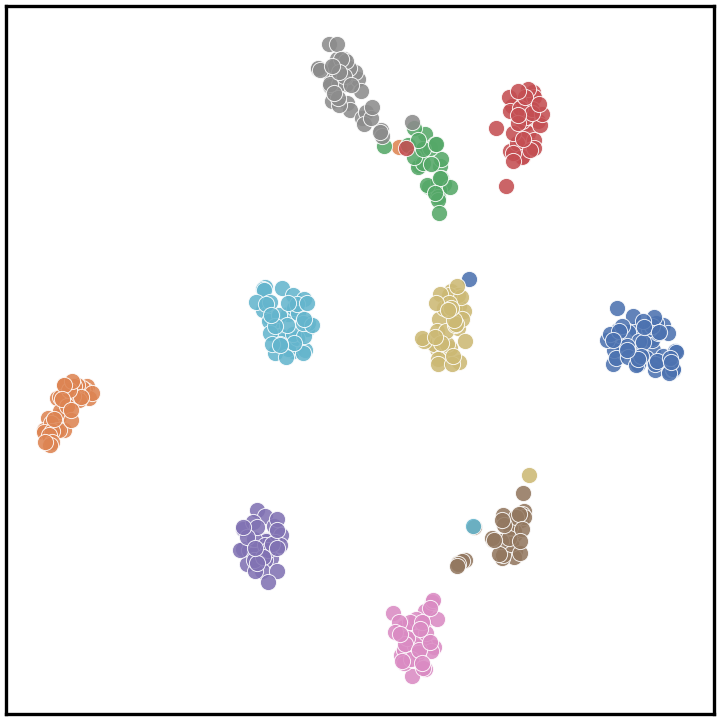}
    }
\subfigure[Step 9, err 1.9\%]{
    \includegraphics[width=0.18\linewidth]{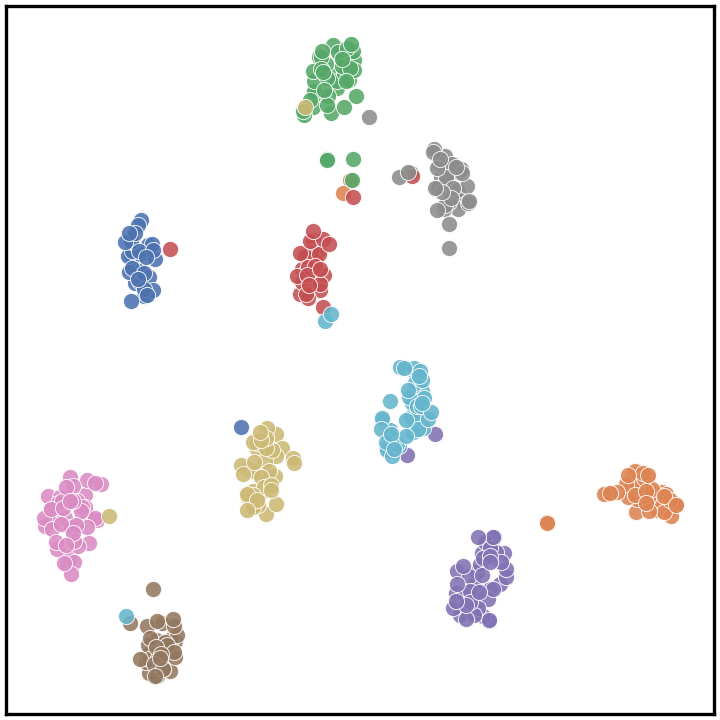}
    }
\subfigure[Step 10, err 1.7\%]{
    \includegraphics[width=0.18\linewidth]{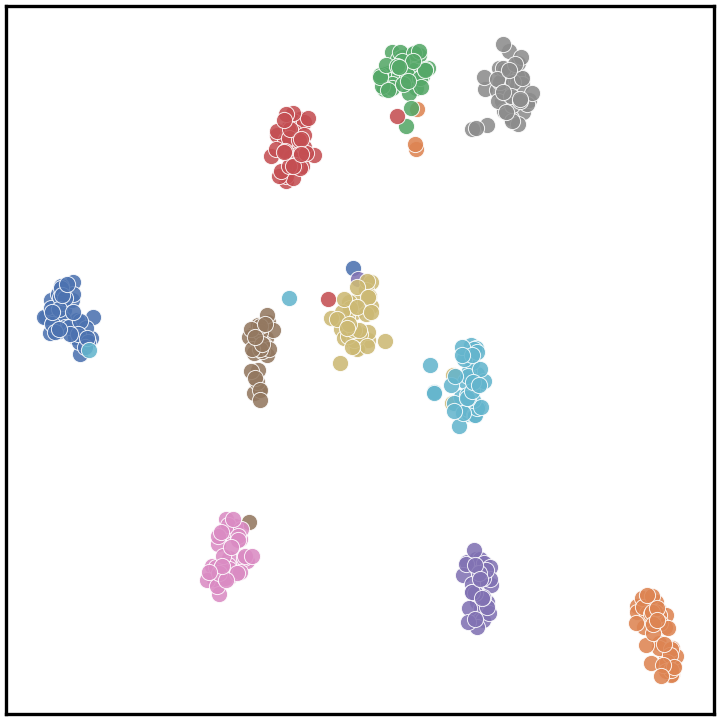}
    }
\\
\subfigure[Step 11, err 4.1\%]{
    \includegraphics[width=0.18\linewidth]{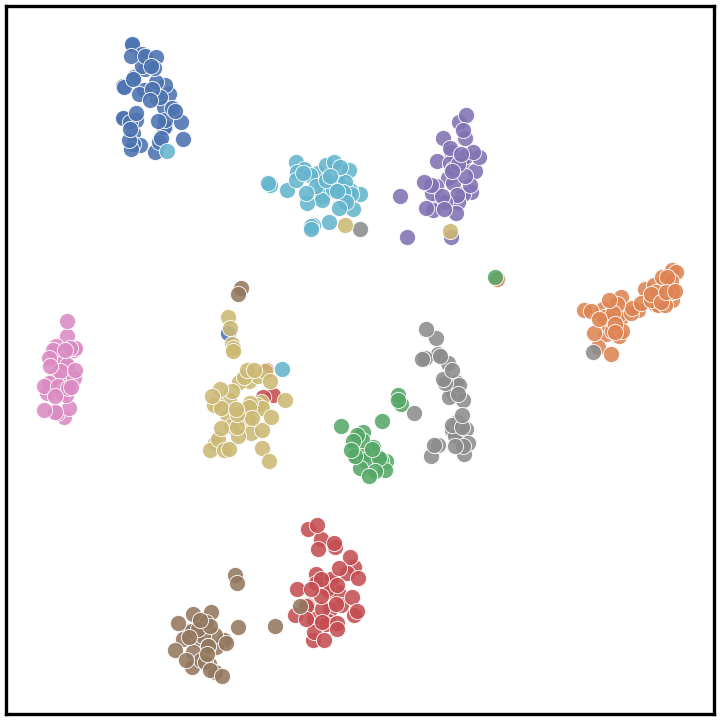}
    }
\subfigure[Step 12, err 2.0\%]{
    \includegraphics[width=0.18\linewidth]{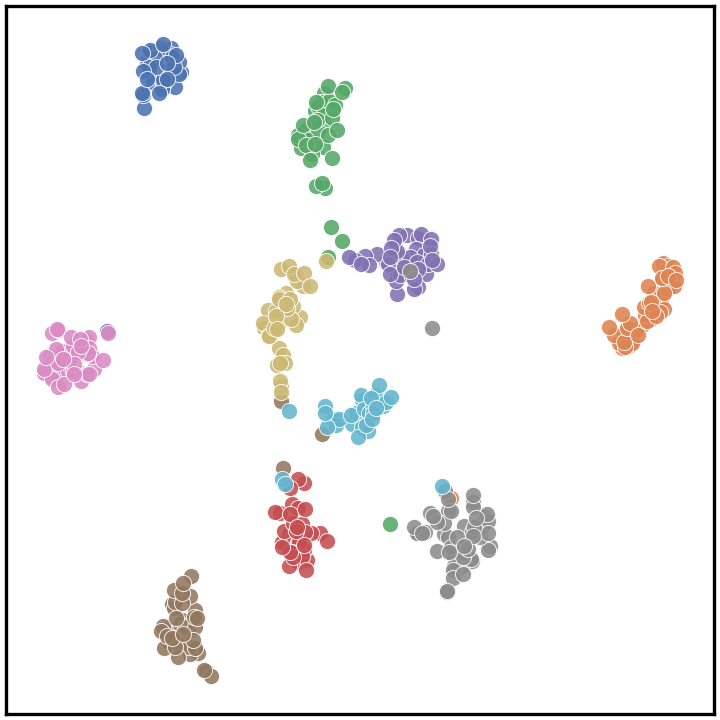}
    }
\subfigure[Step 13, err 2.8\%]{
    \includegraphics[width=0.18\linewidth]{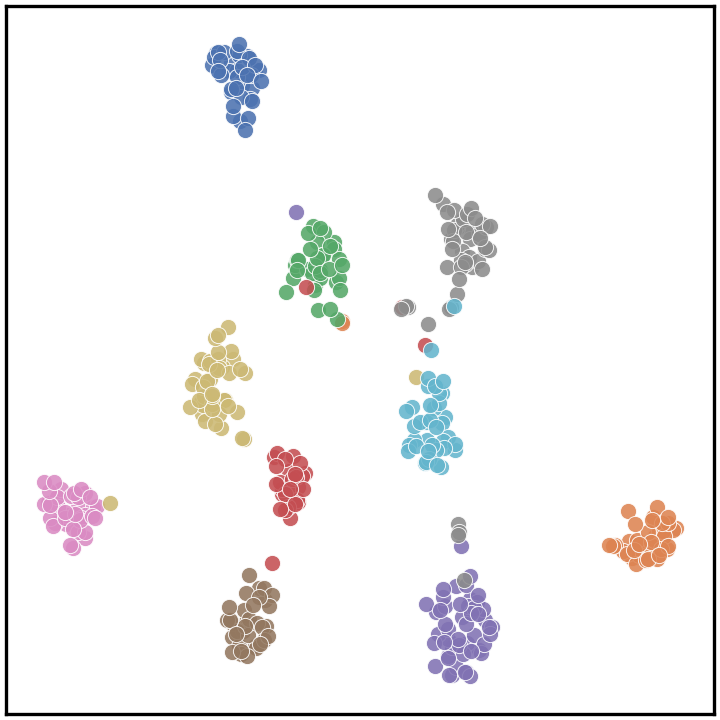}
    }
\subfigure[Step 14, err 4.5\%]{
    \includegraphics[width=0.18\linewidth]{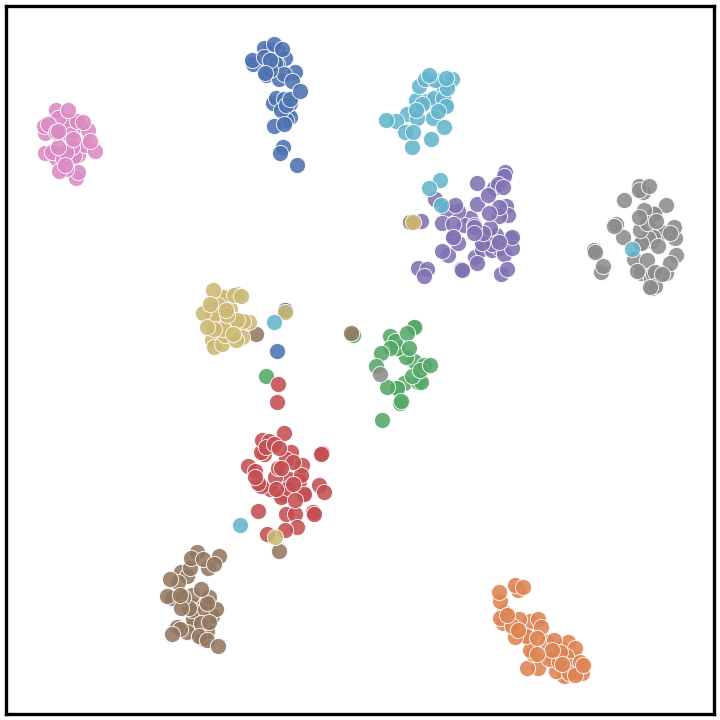}
    }
\subfigure[Step 15, err 4.5\%]{
    \includegraphics[width=0.18\linewidth]{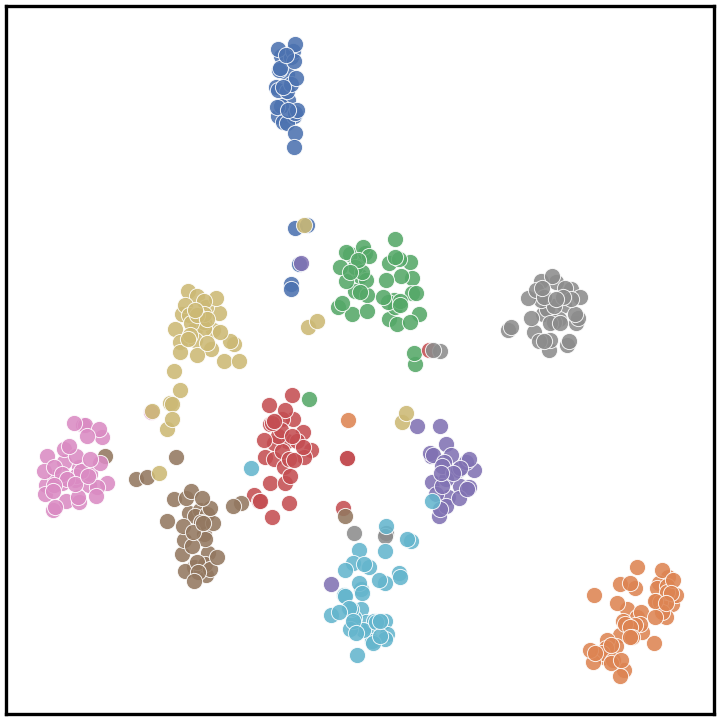}
    }
\caption{t-SNE visualization on all 15 future domains of
Rot-MNIST-C (colored by class). Each subfigure is one future domain with its step index
and classification error.}
\label{fig:tsne_mnist_all}
\end{figure*}

\begin{figure*}[t]
\centering
\subfigure[Step 1, err 6.1\%]{
    \includegraphics[width=0.2\linewidth]{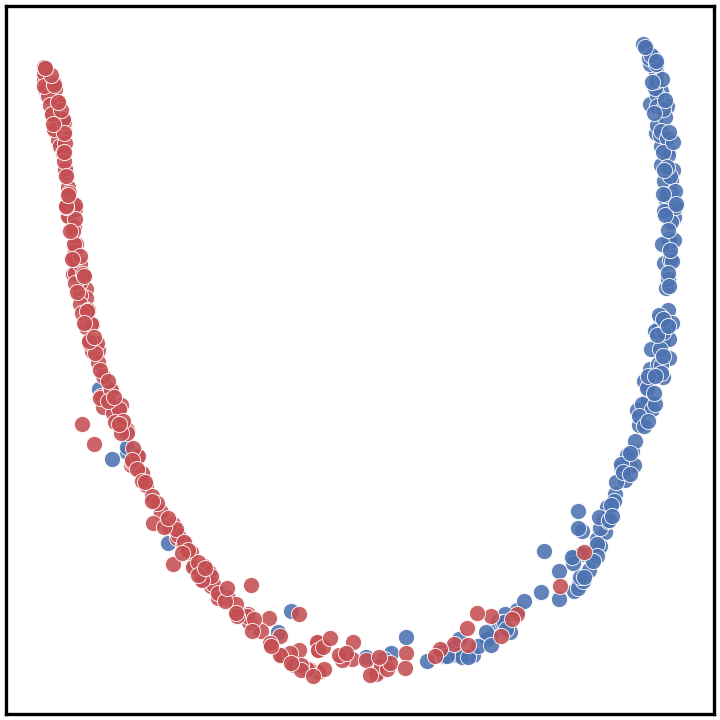}
    }
\subfigure[Step 2, err 5.4\%]{
    \includegraphics[width=0.2\linewidth]{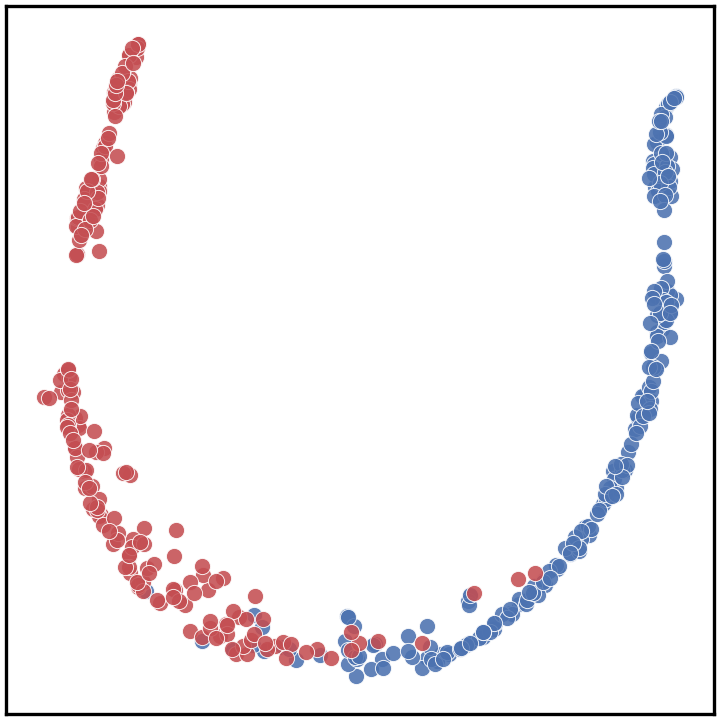}
    }
\subfigure[Step 3, err 4.9\%]{
    \includegraphics[width=0.2\linewidth]{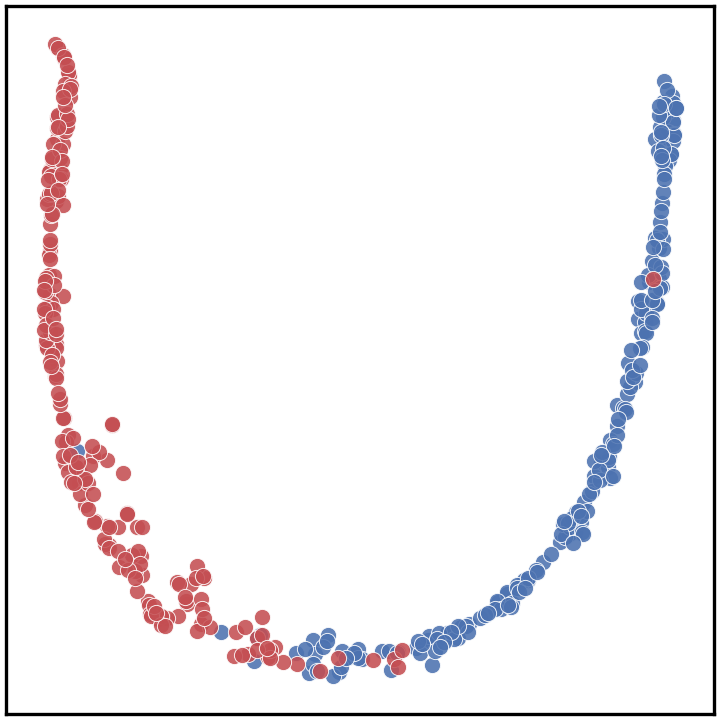}
    }
\subfigure[Step 4, err 4.7\%]{
    \includegraphics[width=0.2\linewidth]{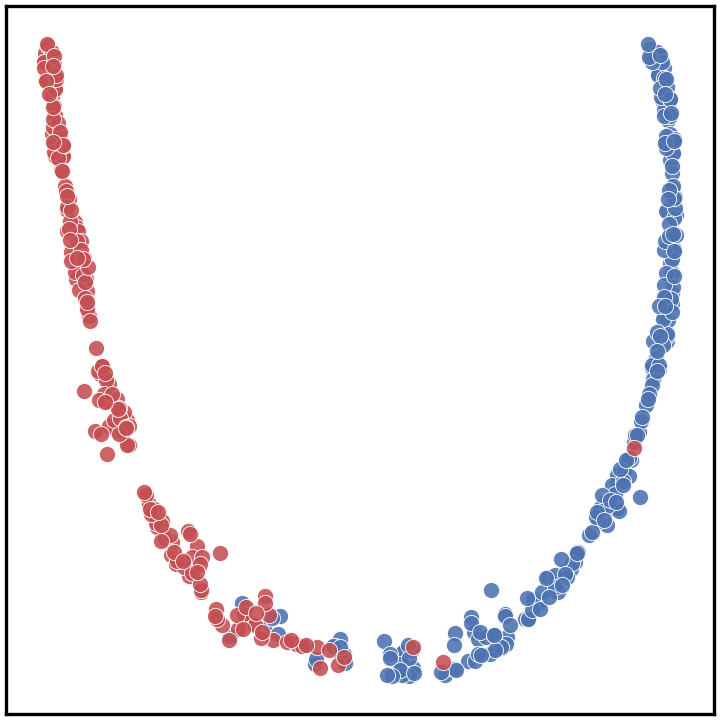}
    }
\\
\subfigure[Step 5, err 2.6\%]{
    \includegraphics[width=0.2\linewidth]{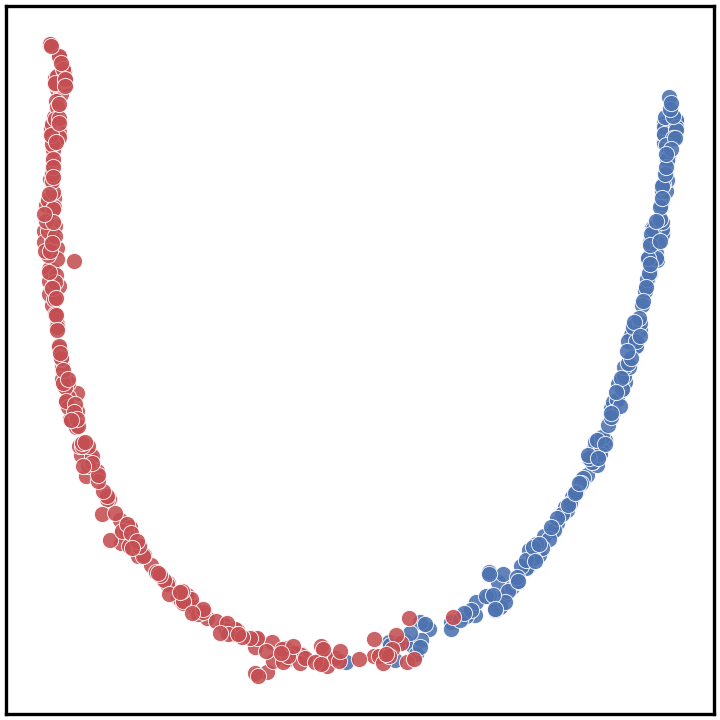}
    }
\subfigure[Step 6, err 2.7\%]{
    \includegraphics[width=0.2\linewidth]{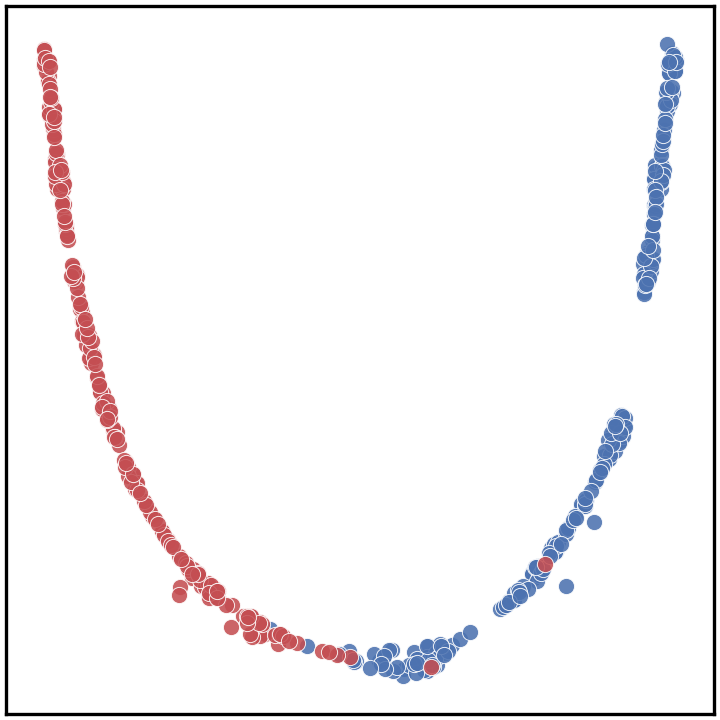}
    }
\subfigure[Step 7, err 2.8\%]{
    \includegraphics[width=0.2\linewidth]{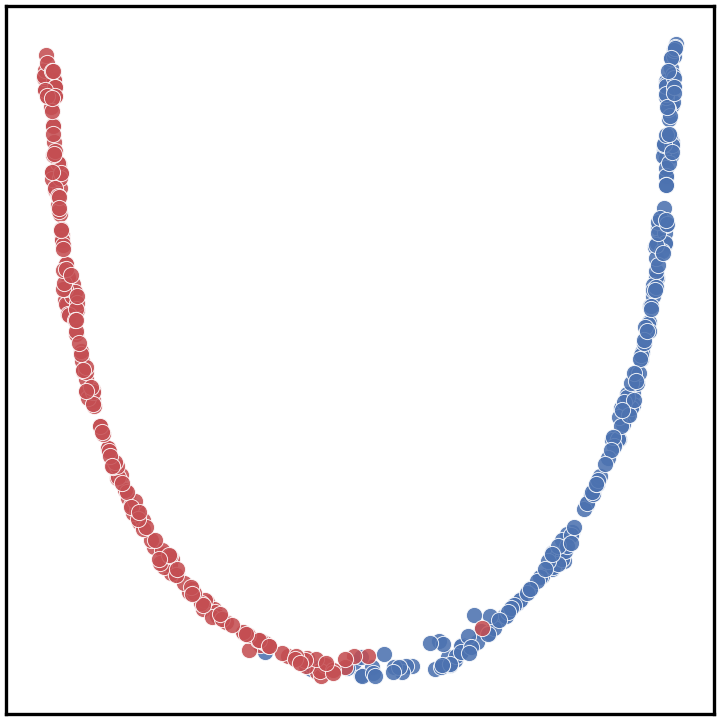}
    }
\subfigure[Step 8, err 5.9\%]{
    \includegraphics[width=0.2\linewidth]{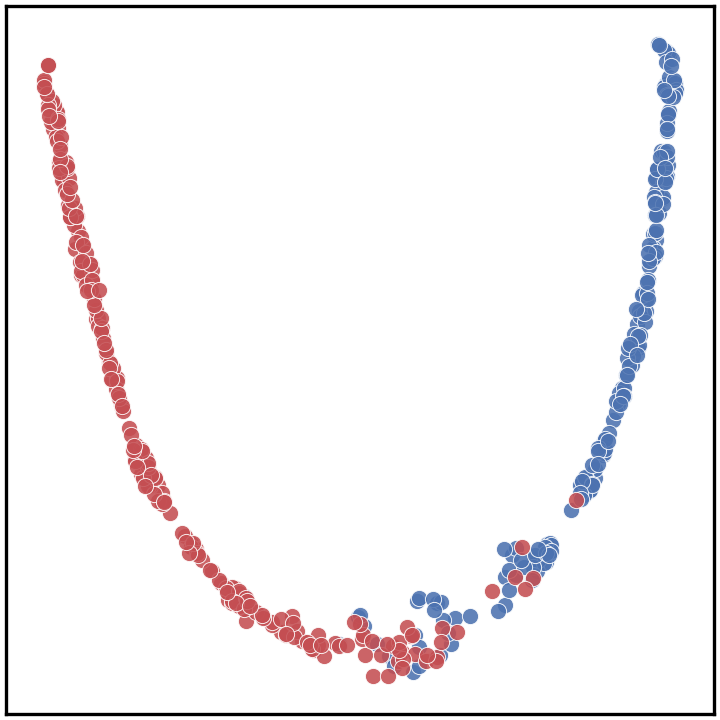}
    }
\\
\subfigure[Step 9, err 6.3\%]{
    \includegraphics[width=0.2\linewidth]{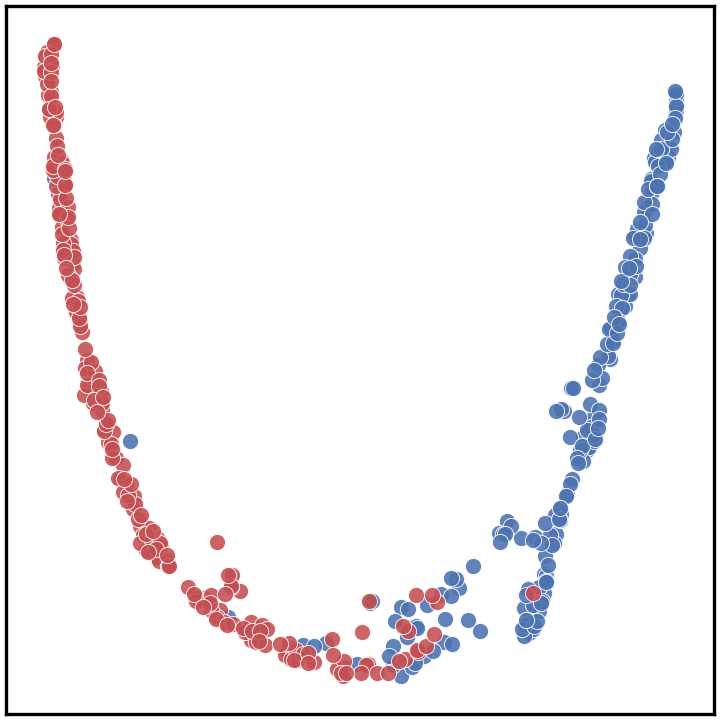}
    }
\subfigure[Step 10, err 5.6\%]{
    \includegraphics[width=0.2\linewidth]{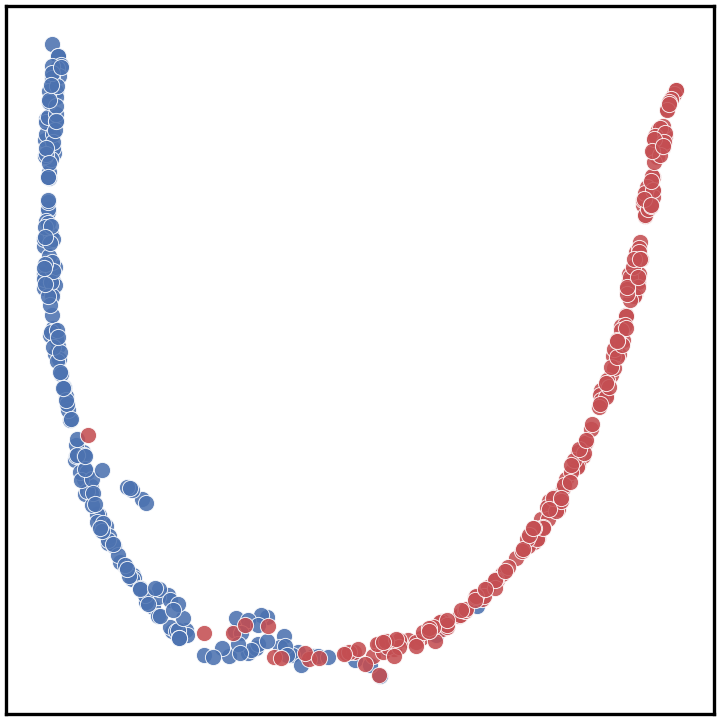}
    }
\subfigure[Step 11, err 6.7\%]{
    \includegraphics[width=0.2\linewidth]{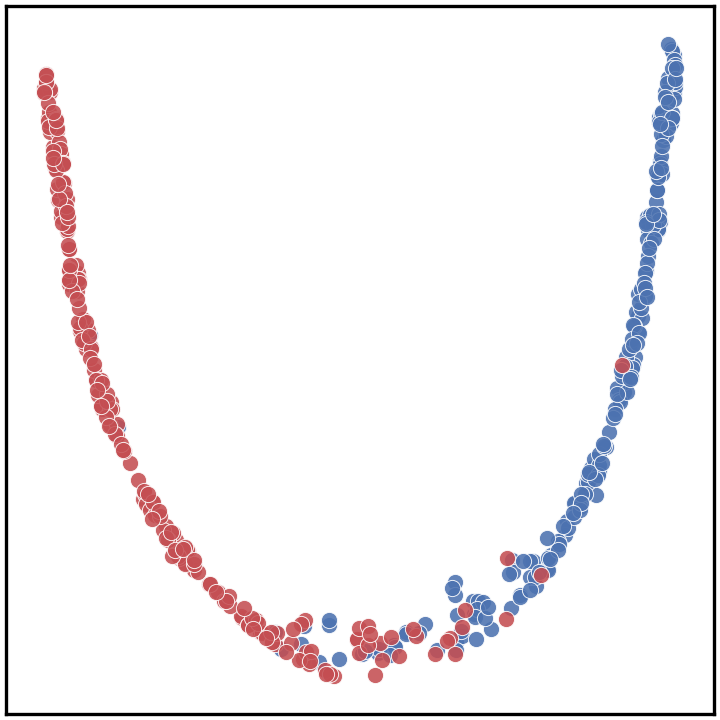}
    }
\subfigure[Step 12, err 5.2\%]{
    \includegraphics[width=0.2\linewidth]{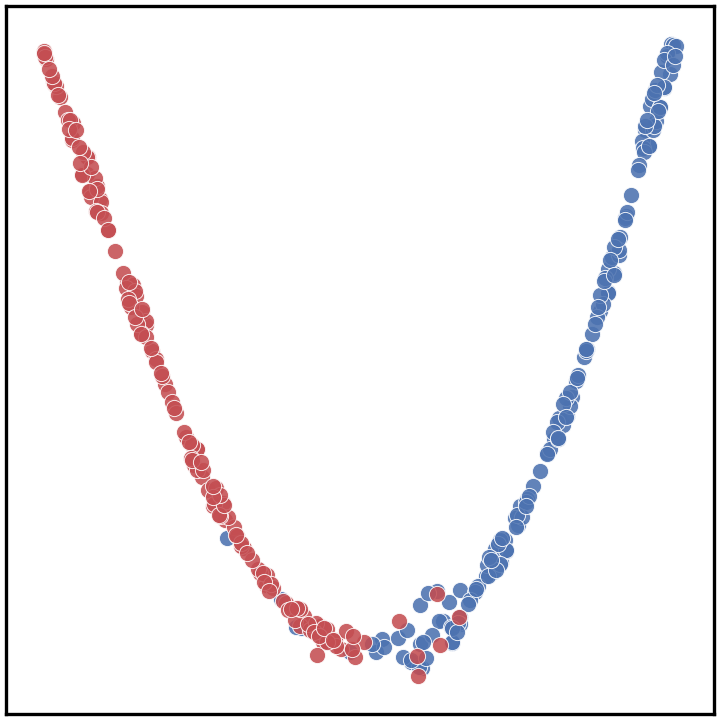}
    }
\caption{t-SNE visualization on all 12 future domains of
Yearbook (colored by class). Each subfigure is one future domain with its step index
and classification error.}
\label{fig:tsne_yearbook_all}
\end{figure*}

\subsection{Qualitative Analysis of Extrapolated Test-Domain Representations}
\label{app:tsne_analysis}

To inspect the geometric evolution of feature representations synthesized via continuous parameter extrapolation, we visualize the domain-adapted penultimate features across all unseen future target domains ($t_s > t_T$) using t-SNE. The resulting manifolds are colored by semantic class and annotated with per-domain error rates in Fig.~\ref{fig:tsne_mnist_all} (Rot-MNIST-C) and Fig.~\ref{fig:tsne_yearbook_all} (Yearbook). Two key geometric properties validate the core design of FreKoo++:

\paragraph{Sustained long-horizon class separability} 
Feature discriminability remains well-preserved across the entire extrapolation window. Across all $15$ future steps of Rot-MNIST-C and $12$ future steps of Yearbook, the feature representations consistently form tight, well-isolated clusters corresponding to the respective semantic categories. This confirms that the target model configurations $\hat{\theta}(t_s)$ generated via closed-form Koopman modal superposition synthesize valid, discriminative decision boundaries on the underlying model manifold, demonstrating that FreKoo++ captures true continuous distributional drift rather than overfitting to source-domain boundary states.

\paragraph{Horizon-invariant stability and error boundedness} 
The feature geometry exhibits stability over extended temporal horizons. The per-domain classification errors remain consistently low and tightly bounded ($1.4\%$--$4.8\%$ on Rot-MNIST-C, $2.6\%$--$6.7\%$ on Yearbook), with no evidence of representation collapse or dispersion at distant time steps. This empirical trend provides direct qualitative evidence for Theorem~\ref{thm:generalization_bound} and Corollary~\ref{cor:bounded_risk}: by analytically extrapolating non-growing dominant spectral dynamics while locking volatile transient components at $t_T$, FreKoo++ effectively eliminates the compounding numerical integration errors and boundary distortions that typically plague discrete or ODE-based autoregressive baselines. 

Together, these visualizations demonstrate that FreKoo++ preserves a temporally coherent yet class-discriminative feature topology across arbitrary prediction horizons, providing qualitative corroboration for the quantitative superiority.

\subsection{Hyperparameter Sensitivity Analysis}
\label{sec:sensitivity}

To evaluate the stability of FreKoo++ with respect to objective loss weights, we analyze performance sensitivity over a wide grid search range $\{0.01, 0.1, 1.0, 10, 100\}$ for $\alpha$ ($\mathcal{L}_{\mathrm{rec}}^{(C)}$), $\beta$ ($\mathcal{L}_{\mathrm{fit}}^{(C)}$), $\gamma$ ($\mathcal{R}_{\mathrm{spec}}$), and $\delta$ ($\mathcal{R}_{\mathrm{stab}}$) across six CTDG benchmark datasets. As illustrated in Fig.~\ref{fig:parameter_sensetivity}, our framework demonstrates remarkable robustness across several orders of magnitude.
Across most real-world datasets (e.g., Cyclone, House, Twitter, Yearbook, and 2-Moons-C), the error curves remain remarkably flat across the entire hyperparameter spectrum, indicating that FreKoo++ does not require delicate parameter tuning to achieve superior generalization. Optimal performance is generally attained when loss weights are selected within the moderate range $[0.1, 10]$. 
Specifically, setting $\beta$ ($\mathcal{L}_{\mathrm{fit}}^{(C)}$) or $\alpha$ ($\mathcal{L}_{\mathrm{rec}}^{(C)}$) excessively large ($100$) leads to an error increase on Rot-MNIST-C, as over-constraining parameter trajectory alignment forces the model to fit transient noise. Conversely, moderate values of stability ($\delta$) and spectral regularization ($\gamma$) effectively constrain positive modal growth rates and filter transient noise without sacrificing model capacity. Overall, these results confirm that FreKoo++ is numerically stable and consistently effective under standard loss weight configurations.

Fig.~\ref{fig:mode_sensitivity} illustrates the impact of modal dimension $K \in \{2, \dots, 64\}$ across all six CTDG datasets. When $K \le 4$, insufficient spectral capacity causes severe underfitting and high variance (e.g., on 2-Moons-C and Rot-MNIST-C). Expanding $K$ to the range of $16$--$32$ consistently yields the best generalization, with $K=32$ achieving peak AUC on Twitter and lowest MAE on House-C and 2-Moons-C. Beyond this point ($K=64$), performance on noisy real-world datasets (e.g., House-C and Twitter) slightly deteriorates due to the over-parameterization of redundant modes. Thus, $K=32$ serves as the global sweet spot balancing spectral expressiveness and noise suppression. For architectural regularity and implementation simplicity, we standardize $K = m = 32$ across all experiments.
\begin{figure*}[h]
\centering
\subfigure[$\alpha$]{
    \includegraphics[width=0.23\linewidth]{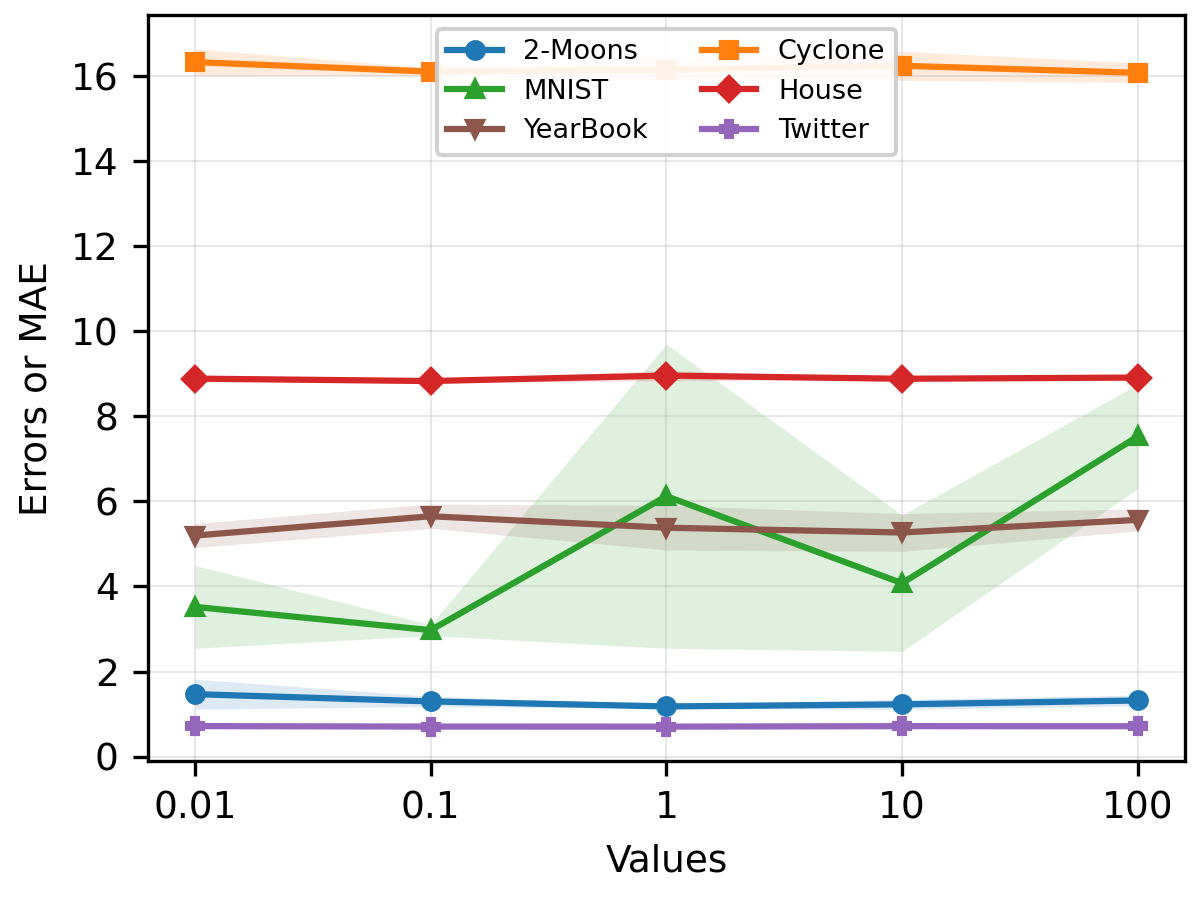}
    }
\subfigure[$\beta$]{
    \includegraphics[width=0.23\linewidth]{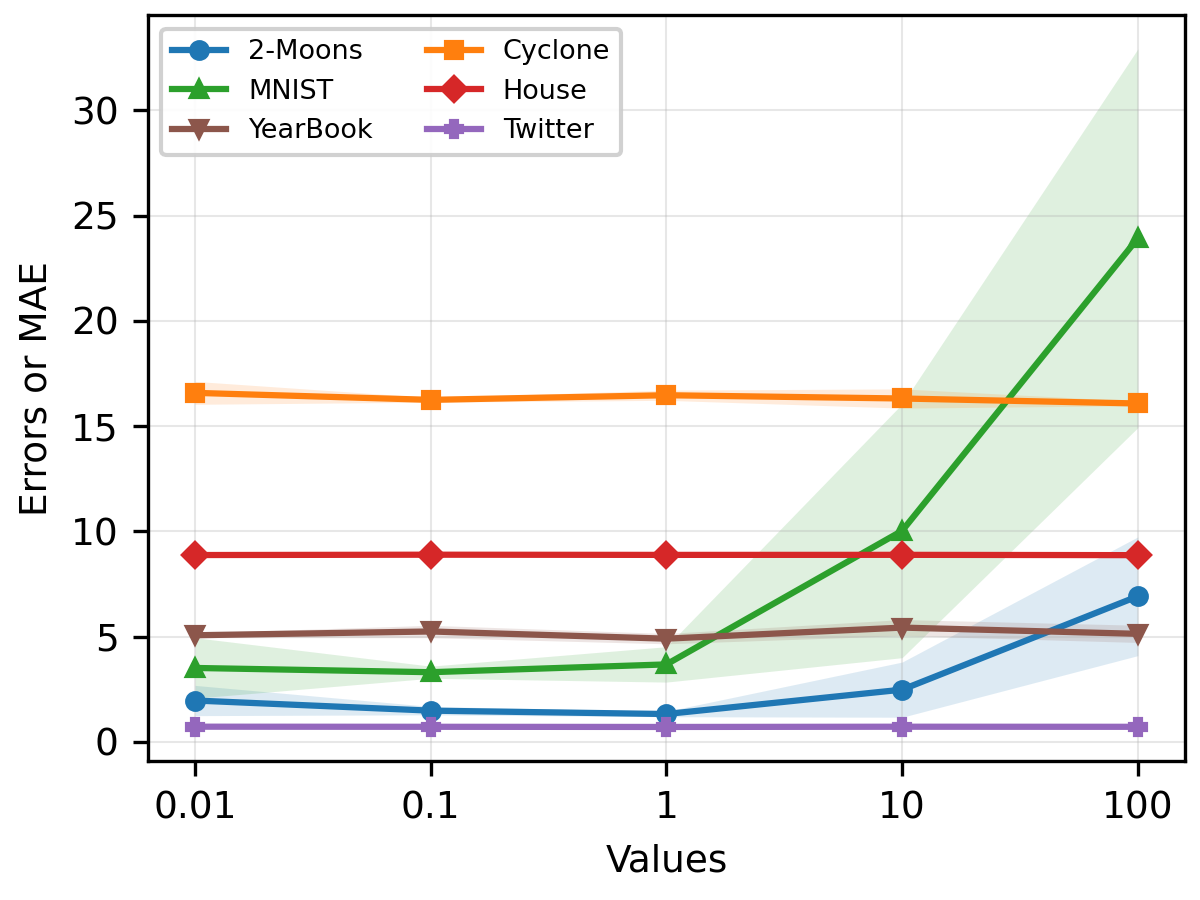}
    }
\subfigure[$\gamma$]{
       \includegraphics[width=0.23\linewidth]{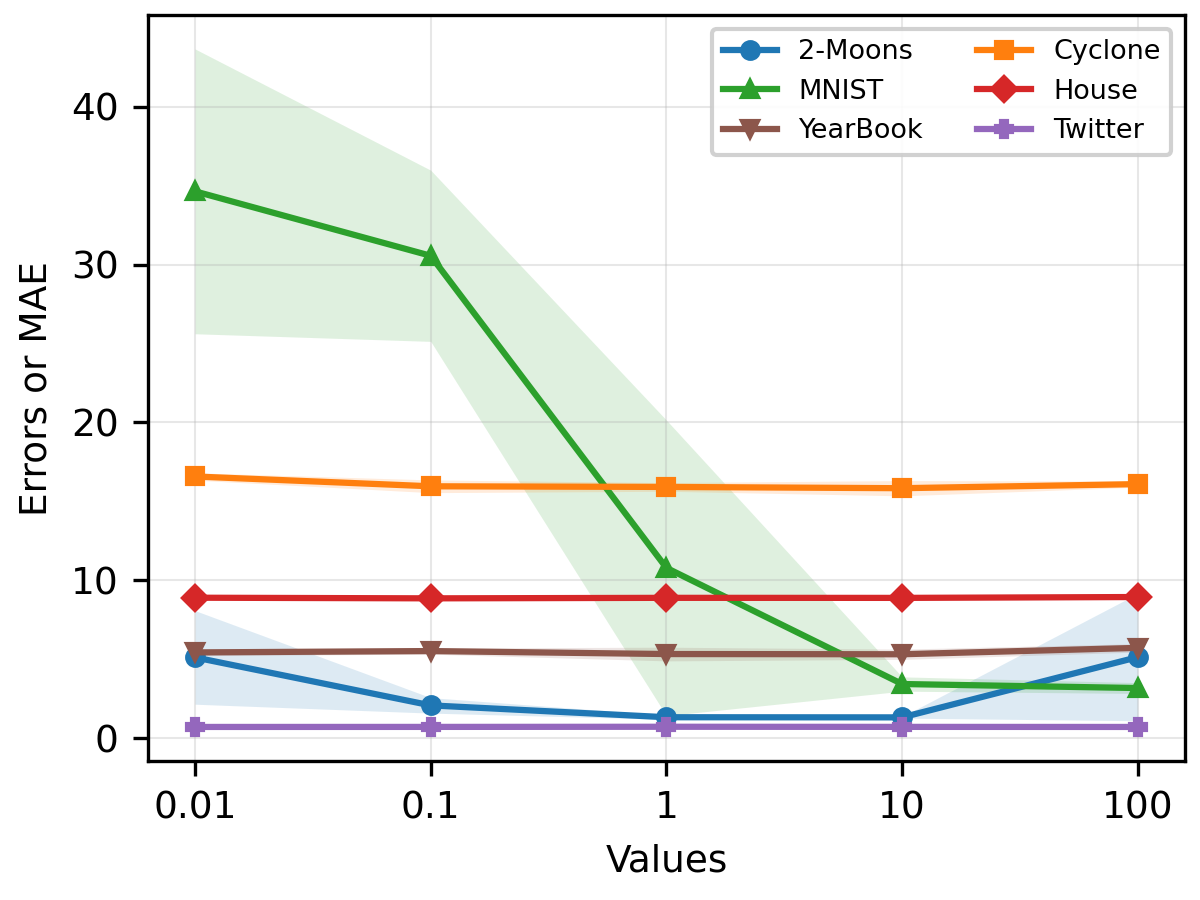} 
}
\subfigure[$\delta$]{
       \includegraphics[width=0.23\linewidth]{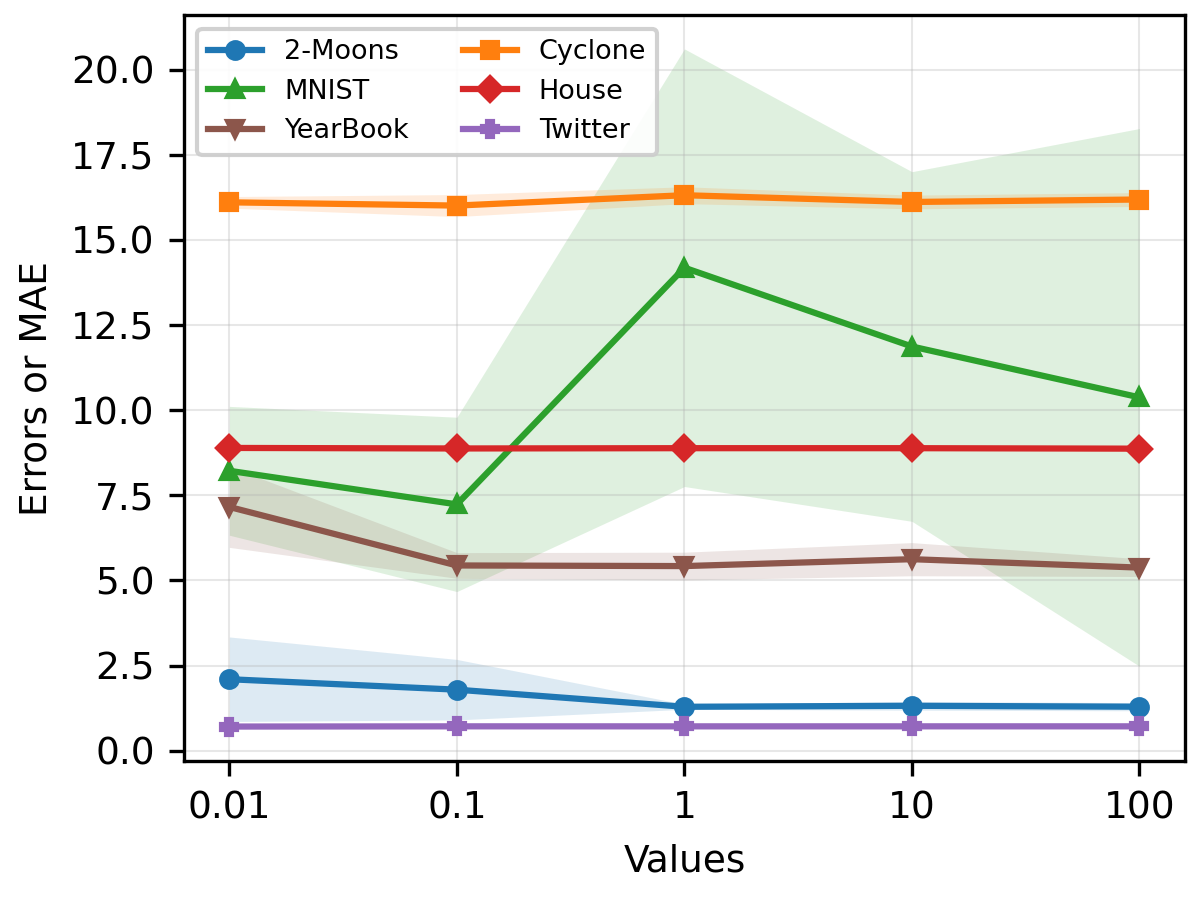} 
}
\caption{Hyperparameter sensitivity analysis of FreKoo++ with respect to objective loss weights $\alpha$ ($\mathcal{L}_{\mathrm{rec}}^{(C)}$), $\beta$ ($\mathcal{L}_{\mathrm{fit}}^{(C)}$), $\gamma$ ($\mathcal{R}_{\mathrm{spec}}$), and $\delta$ ($\mathcal{R}_{\mathrm{stab}}$) swept over $\{0.01, 0.1, 1.0, 10, 100\}$ across six benchmark datasets.}
\label{fig:parameter_sensetivity}
\end{figure*}

\begin{figure*}[h]
\centering
\subfigure[2-Moons-C]{
    \includegraphics[width=0.3\linewidth]{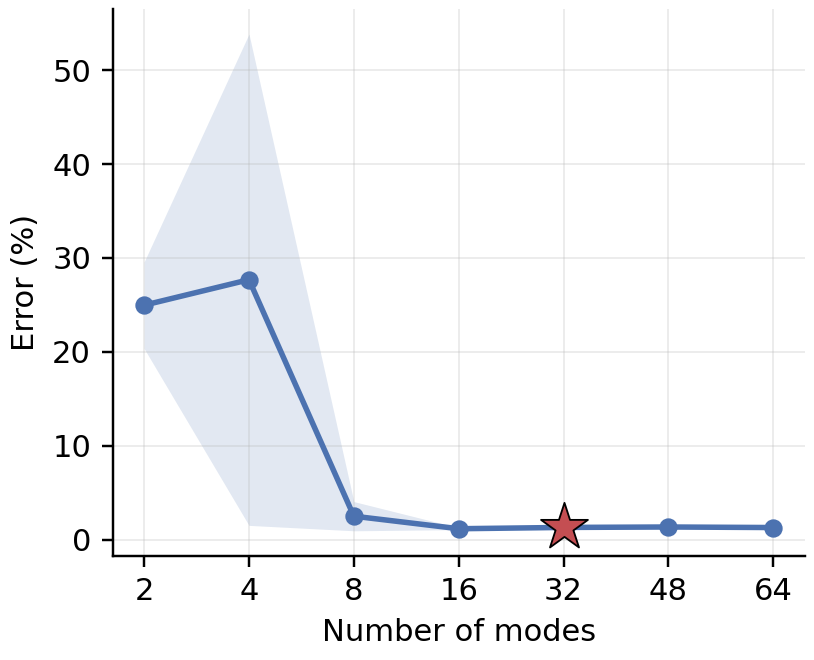}
    }
\subfigure[Rot-MNIST-C]{
    \includegraphics[width=0.3\linewidth]{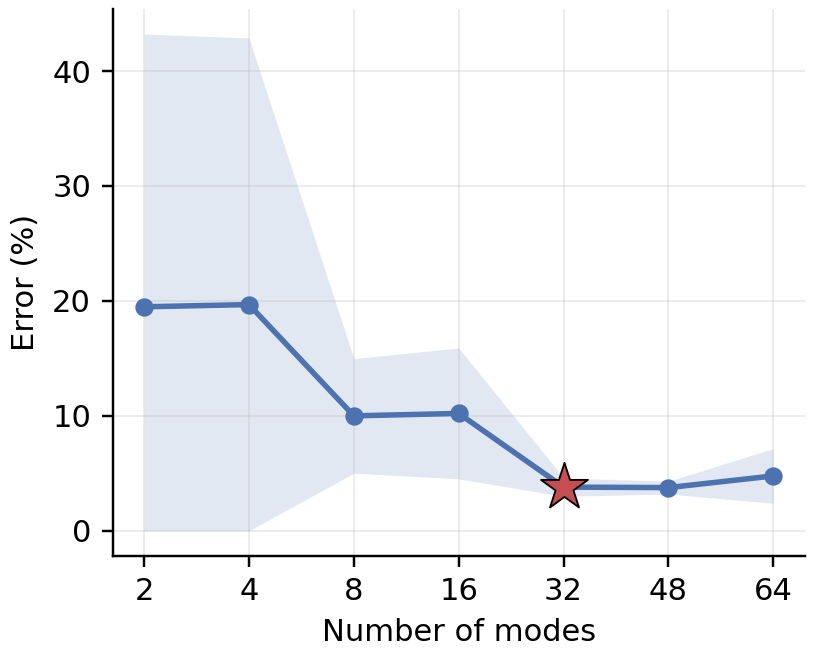}
    }
\subfigure[Twitter]{
       \includegraphics[width=0.3\linewidth]{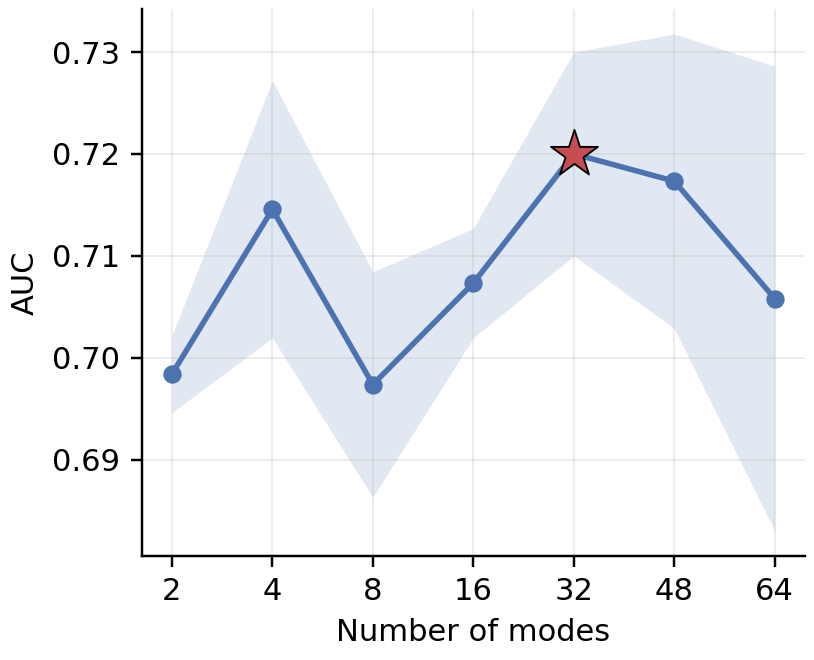} 
}
\subfigure[Yearbook]{
       \includegraphics[width=0.3\linewidth]{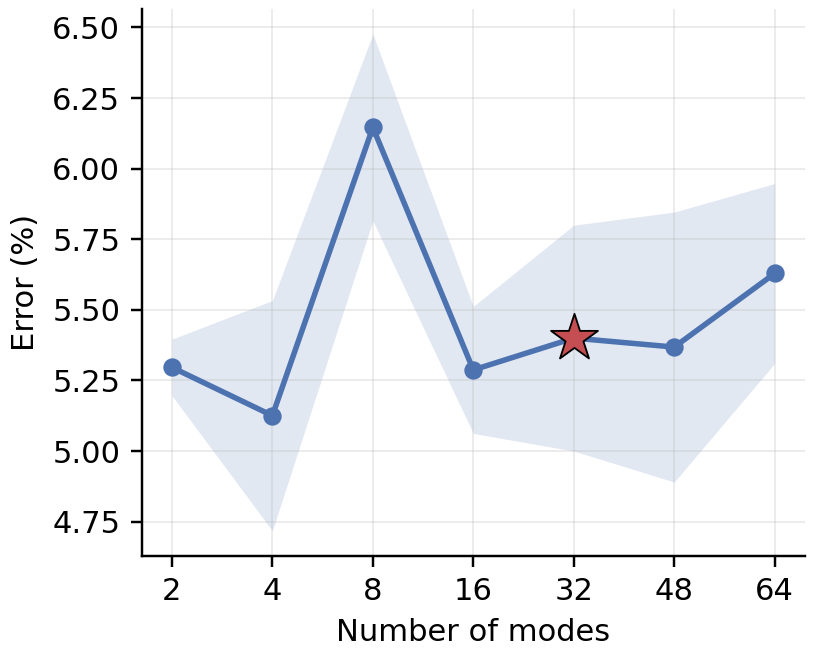} 
}
\subfigure[Cyclone]{
       \includegraphics[width=0.3\linewidth]{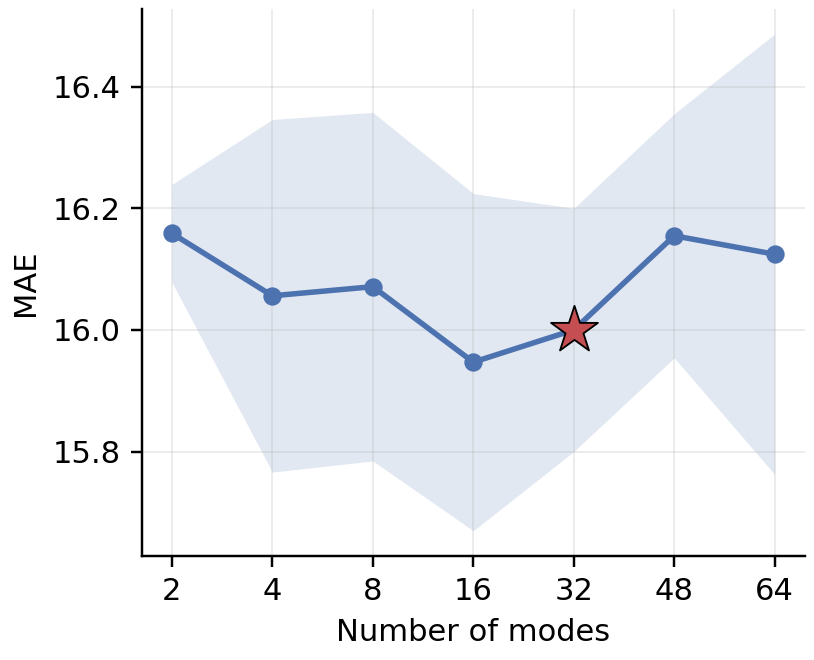} 
}
\subfigure[House-C]{
       \includegraphics[width=0.3\linewidth]{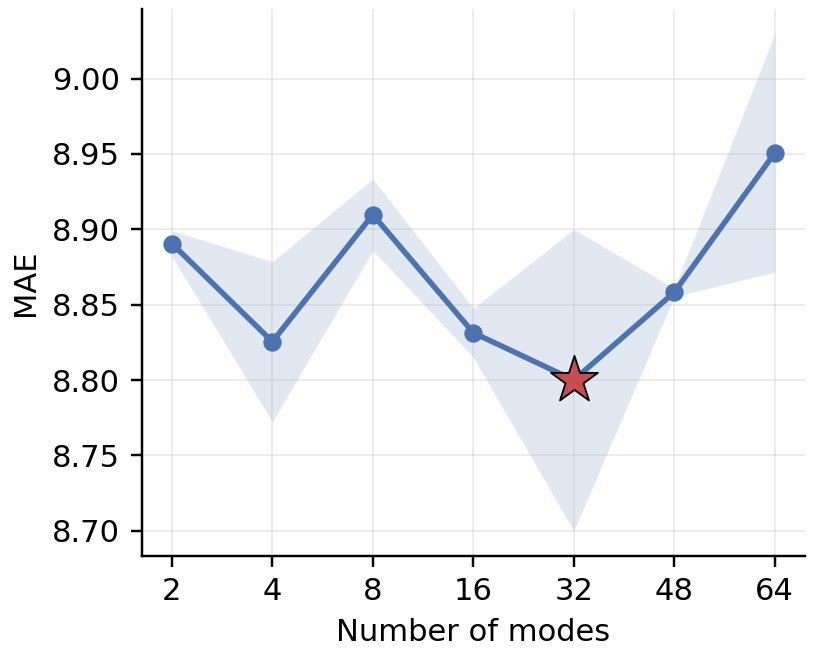} 
}
\caption{Sensitivity of the number of Koopman modes $K$. The red star marks the adopted $K=32$.}
\label{fig:mode_sensitivity}
\end{figure*}

\end{document}